\declaretheorem[name=Theorem, numberwithin=section]{theorem}
\declaretheorem[name=Lemma, numberlike=theorem]{lemma}
\declaretheorem[name=Proposition, numberlike=theorem]{proposition}
\theoremstyle{definition}
\newtheorem*{main_result}{Informal Theorem}
\declaretheorem[name=Definition, numberlike=theorem]{definition}
\definecolor{light-gray}{gray}{0.8}
\renewcommand{\ss}{\smallskip}
\title{Optimism Without Regularization: \\
  Constant Regret in Zero-Sum Games}
\author{%
    John Lazarsfeld \\ SUTD\thanks{Correspondence to: \color{magenta}\texttt{jlazarsfeld@gmail.com}} \\ 
    \And
    Georgios Piliouras \\
    SUTD \\
    \And
    Ryann Sim \\ 
    SUTD \\ 
    \And
    Stratis Skoulakis \\
    Aarhus University \\
} 
\begin{document}
\maketitle

\begin{abstract}

This paper studies the \textit{optimistic} variant
of Fictitious Play for learning in two-player zero-sum games.
While it is known that Optimistic FTRL
-- a \textit{regularized} algorithm 
with a bounded stepsize parameter -- 
obtains constant regret in this setting, we show for the 
first time that similar, optimal rates are also achievable
\textit{without} regularization: 
we prove for two-strategy games that Optimistic Fictitious Play
(using \textit{any} tiebreaking rule)
obtains only \textit{constant regret}, 
providing surprising new evidence on the ability of \textit{non}-no-regret
algorithms for fast learning in games.
Our proof technique leverages a geometric view
of Optimistic Fictitious Play in the dual space of payoff vectors,
where we show a certain energy function
of the iterates remains bounded over time. 
Additionally, we also prove a 
regret \textit{lower bound} of $\Omega(\sqrt{T})$ 
for \textit{Alternating} Fictitious Play. 
In the unregularized regime, this separates the ability
of optimism and alternation in achieving $o(\sqrt{T})$ regret.


\end{abstract}


\section{Introduction}
\label{sec:intro}

Despite the fact that regularization is
essential for no-regret online learning
in general adversarial settings, \textit{un}regularized
algorithms are still able to obtain \textit{sublinear regret}
in the case of two-player zero-sum games.
Fictitious Play (FP), dating back to~\cite{brown1951iterative},
is the canonical example of such an unregularized algorithm,
and it results from both players
independently running Follow-the-Leader (FTL).\footnote{
  FTL is a particular instance
  of Follow-the-Regularized-Leader (FTRL)
  with unbounded stepsize $\eta \to \infty$. 
}
In the worst case, FTL can have $\Omega(T)$ regret
due to its sensitivity to oscillations in adversarially-chosen
reward sequences~\citep{shalev2012online}.
However, in zero-sum game settings,
the classic result of 
\cite{robinson1951iterative}    
proved that, under Fictitious Play, the \textit{sum}
of the players' regrets
(henceforth referred to as \textit{regret}) is indeed sublinear
(thus implying time-average convergence to a Nash equilibrium),
albeit at the very slow $O(T^{1-1/n})$ rate
for $n\times n$ games (which was shown to be tight
by \citet{daskalakis2014counter} using an 
adversarial tiebreaking rule).

However, the recent works of~\citet{abernethy2021fast}
and~\citet{lazarsfeld2025fp} have established improved
$O(\sqrt{T})$ regret guarantees for Fictitious Play
on diagonal payoff matrices (using lexicographical tiebreaking)
and on generalized Rock-Paper-Scissors matrices
(using any tiebreaking rule), respectively. 
As a result, there is growing evidence on the robustness of
unregularized algorithms like FP
(\textit{not} a no-regret algorithm in general)
for obtaining fast, sublinear regret in zero-sum games.

On the other hand, the past decade has 
seen \textit{regularized} learning algorithms  
exhibit a remarkable success in providing even better
$o(\sqrt{T})$ regret guarantees for learning in games.
In zero-sum games,
\textit{optimistic} variants of FTRL
(including Optimistic Multiplicative Weights and
Optimistic Gradient Descent) 
obtain only \textit{constant} regret (with respect to
the time horizon $T$), implying
optimal $O(1/T)$ time-average convergence to Nash
\citep{rakhlin2013optimization, syrgkanis2015fast}.
While such guarantees can be obtained
using absolute constant stepsizes (with no dependence
on $T$), standard proof techniques (e.g., the
RVU bound approach of~\citet{syrgkanis2015fast})
still crucially require a \textit{finite upper bound
on stepsize}, corresponding to constant
magnitudes of regularization. 
This raises the following, fundamental question:
\textit{Is $O(1)$ regret attainable in zero-sum games
  without regularization
  (equivalently, with unbounded stepsizes)? 
  Can variants of Fictitious Play achieve $O(1)$ regret?}

Apart from its theoretical interest, this question admits 
crucial applications in the context of equilibrium computation
algorithms for combinatorial games~\citep{BeagleholeHKLL23},
as well as during training via self-play in certain multi-agent
reinforcement learning settings~\citep{vinyals2019grandmaster}.

\subsection{Our Contributions}
In this work, we establish an affirmative answer to the question above. 
Our main result establishes that, in the case of 2$\times$2 zero-sum games, 
\textit{Optimistic Fictitious Play} obtains constant regret: 
\begin{main_result}[See Theorem~\ref{thm:ofp-reg}]
\textit{Optimistic Fictitious Play, using any tiebreaking rule,
obtains $O(1)$ regret in all 2$\times$2 zero-sum games
with a unique, interior Nash equilibrium.}
\end{main_result}

Our result gives surprising
new evidence that, even without regularization, 
optimism can be used to obtain an accelerated regret bound,
matching the optimal rate obtained by regularized 
Optimistic FTRL algorithms \citep{syrgkanis2015fast}. 
While our theorem establishes constant regret 
only for the case of two-strategy zero-sum games,
our proof techniques offer indication that
similar, optimal regret bounds may further hold
in higher-dimensional settings.
Apart from our theoretical results, we also experimentally
evaluate Optimistic FP on higher-dimensional zero-sum games,
and these evaluations suggest that, even for much larger games,
Optimistic FP still obtains constant regret. 

Our proof technique is based on a novel geometric perspective
of Optimistic FP in the dual space of payoff vectors. 
Our main technical contribution is showing that an \textit{energy function} of
the dual iterates of the algorithm is upper bounded by a constant. 
This energy upper bound can then be easily used to establish constant regret of 
the primal iterates. The latter comes in contrast to the energy growth
of the iterates of standard FP, which strictly increases over
the time horizon~\citep{lazarsfeld2025fp}.

We also consider the \textit{alternating} variant
of Fictitious Play. Recent work has studied the use of alternation 
(independently of optimism) as a method for obtaining $o(\sqrt{T})$ 
regret guarantees in both the adversarial \citep{gidel2019negative,bailey2020finite,cevher2023alternation,
  hait2025alternating} and the zero-sum game setting
\citep{wibisono2022alternating,katona2024symplectic}. 
Contrary to \textit{optimism}, we show in the case
of \textit{alternation} that regularization is necessary 
to achieve $o(\sqrt{T})$ regret:
\begin{main_result}[See Theorem~\ref{thm:afp-regret-lb}]
\textit{
On the 2$\times$2 Matching Pennies game,
  Alternating Fictitious Play,
  using any tiebreaking rule and
  for nearly all initializations,
  has regret at least $\Omega(\sqrt{T})$.}
\end{main_result}

Together, our results separate the regret guarantees 
of using optimism and alternation in the regime of no regularization:
while optimism without regularization can obtain 
optimal $O(1)$ regret (Theorem~\ref{thm:ofp-reg}),
alternation alone is in general insufficient for improving
beyond $O(\sqrt{T})$ (Theorem~\ref{thm:afp-regret-lb}), the same rate
achievable by standard (non-alternating) FP
in the 2$\times$2 setting.
To this latter point, note that the lower bound 
of Theorem~\ref{thm:afp-regret-lb} comes in contrast 
to the improved $O(T^{1/5})$ rate obtainable by 
Alternating FTRL under a sufficiently small stepsize
\citep{katona2024symplectic}.

Table~\ref{table:results} summarizes our results and
the landscape of regret guarantees in the
2$\times$2 setting for FTRL and FP variants,
and Figure~\ref{fig:regret-comp-intro}
shows an example of the empirical regret guarantees of 
standard, Optimistic, and Alternating FP
variants in several games 
(additional results are presented in 
Sections~\ref{sec:conclusion} and~\ref{app:experiments}).

\begin{table}[bh!]
  \centering
  \renewcommand{\arraystretch}{1.6}
  \begin{tabular}{c | c | c | c }
    & \textbf{Standard}
    & \textbf{Optimistic}
    & \textbf{Alternating}  \\ \hline 
    $\eta$ bounded (FTRL)
    & $O(\sqrt{T})$ $\dagger$
    & $O(1)$ $^\wedge$
    & $O(T^{1/5})$ $^{\wedge\wedge}$ \\ \hline    
    $\eta \to \infty$ (FP)
    & $O(\sqrt{T})$ $^\ddagger$
    & \cellcolor{light-gray}$O(1)$ $^\star$
    & \cellcolor{light-gray}$\Omega(\sqrt{T})$  $^{\star\star}$
  \end{tabular}
  \vspace*{0.5em}
  \caption{%
    \footnotesize
    Regret guarantees for
    FTRL and Fictitious Play variants in
    2$\times$2 zero-sum games, with our 
    contributions shaded in gray.
    $\dagger$: Using the standard setting of 
    $\eta = 1/\sqrt{T}$~\citep{shalev2012online}.
    $^\wedge$: Via the RVU bounds of~\cite{syrgkanis2015fast}.
    $^{\wedge\wedge}$: \citep{katona2024symplectic},
    extending on the prior $O(T^{1/3})$ bound of
    \cite{wibisono2022alternating}. 
    $\ddagger$: Implicit in the proof 
    of~\cite{robinson1951iterative}.
    $^\star$: Theorem~\ref{thm:ofp-reg}. 
    $^{\star\star}$: Theorem~\ref{thm:afp-regret-lb}.}
  \vspace*{-1em}
  \label{table:results}
\end{table}

\begin{figure}[t!]
\centering
\includegraphics[width=0.85\textwidth]{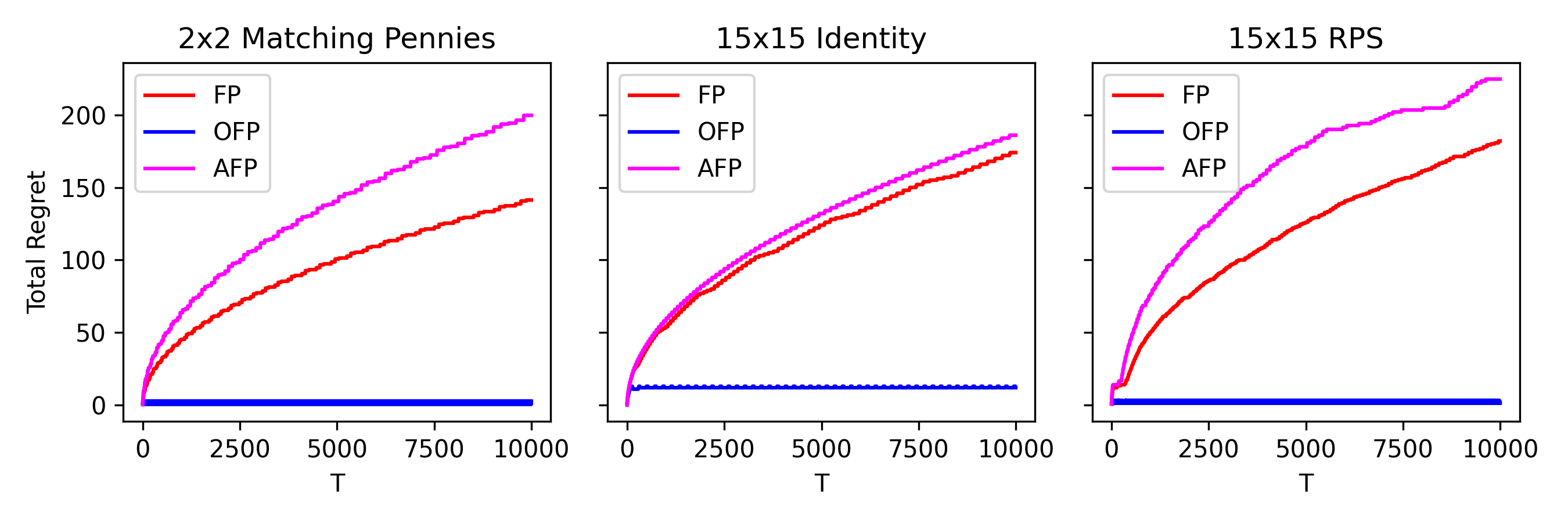}
\vspace*{-1em}
\caption{\small
    Empirical regret of standard (FP), Optimistic (OFP),
    and Alternating (AFP) Fictitious Play 
    in Matching Pennies 
    (from $x^0_1 = (1/3, 2/3)$, $x^0_2 = (2/3, 1/3)$),
    on the 15$\times$15 identity matrix 
    (from $x^0_1 = e_1$, $x^0_2 = e_n$),
    and on 15$\times$15 generalized Rock-Paper-Scissors
    (from $x^0_1 = e_1$, $x^0_2 = e_n$).
    Each algorithm was run for $T=10000$ iterations
    using a lexicographical tiebreaking rule. 
    Each subfigure demonstrates the 
    constant empirical regret of OFP compared
    to the roughly $\sqrt{T}$ regret growth 
    of standard FP and AFP. 
    More experimental details and results are given in 
    Section~\ref{sec:conclusion} and Section~\ref{app:experiments}.
    }
\vspace*{-1em}
\label{fig:regret-comp-intro}
\end{figure}

\subsection{Other Related Work}

\noindent\textbf{Optimistic learning in games.}
Our work relates to a line of research on 
the convergence properties of optimistic and 
extragradient-type algorithms in both normal-form 
and extensive-form games~\citep{DFG21,FMPV22,anagnostides2022near,
ZAFS24,FALLKS22,HACM22,AFKL22,
piliouras2022beyond,APFS22}. 
As previously noted, a key difference is that 
these approaches typically rely on constant or time-decreasing step sizes,
corresponding to some level of regularization.
Recent works have also investigated last-iterate
convergence properties of optimistic and extragradient 
methods~\citep{DaskalakisP19,CZ22,bernethyLW21,HIMM20}, 
as well as for variants of regret-matching, including under
alternation~\citep{FGKLLZ25}.
Other works have studied accelerated rates
using optimism \textit{without} regularization
in certain Frank-Wolfe-type, convex-concave saddle-point problems
\citep{WA07,ALLW18}.

\noindent\textbf{Learning in 2$\times$2 games.}
Our work adds to a growing recent literature studying 
online learning algorithms in 2$\times$2 games:
in two-strategy zero-sum games, \cite{bailey2019fast} 
proved that Online Gradient Descent obtains $O(\sqrt{T})$ 
regret even with large constant stepsizes. 
More recent works of \cite{cai2024fast} and~\cite{C25}
establish lower-bounds on the 
last-iterate and random-iterate convergence rates of
Optimistic MWU using a hard 2$\times$2 construction, as well as 
an $O(T^{1/6})$ upper bound on 
best-iterate convergence for 2$\times$2 zero-sum games. 
\cite{chen2020hedging} similarly used a 2$\times$2 construction
to establish a general $\Omega(\sqrt{T})$ lower bound on 
the regret of standard Multiplicative Weights.
In a family of two-strategy congestion games,
\cite{chotibut2021family} also showed that the iterates of
Multiplicative Weights can exhibit formally chaotic behavior. 

\noindent\textbf{Fictitious Play.}
We refer the reader to the recent results of 
\cite{daskalakis2014counter}, \cite{abernethy2021fast},
and \cite{lazarsfeld2025fp}
(and the references therein) for background on standard Fictitious Play.
\cite{abernethy2021fast} also briefly introduced the optimistic variant
of Fictitious Play, and they informally conjecture the algorithm
to have constant regret on diagonal payoff matrices. 
The convergence behavior of FP has also been studied
in potential games~\citep{monderer1996potential, PPSC23}, 
near-potential games~\citep{candogan}, 
Markov games~\citep{SayinPO22,BaudinL22}, and extensive-form 
games~\citep{heinrich2015fictitious}.

 
\section{Preliminaries}
\label{sec:prelims}

Let $[n] = \{1, \dots, n\}$,
let $\Delta_n$ be the probability
simplex in $\R^n$,
and let $\{e_i\}_n = \{e_i : i \in [n]\} \subset \Delta_n$
denote the set of standard basis vectors in $\R^n$,
which correspond to vertices of $\Delta_n$.
For $x \in \Delta_n$,
we say $x$ is \textit{interior} 
if $x_i > 0$ for all $i \in [n]$.

\subsection{Online Learning in Zero-Sum Games}
Let $A \in \R^{m \times n}$ be the payoff matrix
for a two-player zero-sum game,
and let $T$ be a fixed time horizon. 
At round $t$, Players 1 and 2
simultaneously choose mixed strategies
$x^t_1 \in\Delta_m$ and $x^t_2 \in \Delta_n$,
obtain payoffs $\langle x^t_1, A x^t_2 \rangle$
and $-\langle x^t_2, A^\top x^t_1 \rangle$,
and observe feedback
$Ax^t_2$ and $-A^\top x^t_1$, respectively.

\paragraph{Regret and Convergence to Nash.}
Each player seeks to maximize their cumulative payoff,
and their performance is measured by the 
individual regrets
$\reg_1(T)
= \max_{x \in \Delta_m}
\sum_{t=1}^T \langle x - x^t_1, Ax^t_2\rangle$
and
$\reg_2(T)
= \min_{x \in \Delta_n}
\sum_{t=1}^T \langle x^t_2 - x, A^\top x^t_1\rangle$.
From a global perspective, we study the
\textit{total regret} (henceforth \textit{regret})
$\reg(T) = \reg_1(T) + \reg_2(T)$ given by
\begin{equation}
\reg(T)
\;=\;
  \max_{x \in \Delta_m}\;
  \big\langle x, \sum\nolimits_{t=0}^T Ax^t_2 \big\rangle
  -
  \min_{x \in \Delta_n}\;
  \big\langle x, \sum\nolimits_{t=0}^T A^\top x^t_1\big\rangle \;.
  \label{eq:regret}
\end{equation}
It is well known that sublinear bounds
on $\reg(T)$ correspond to convergence (in duality gap) of
the players' time-average iterates to a Nash
equilibrium (NE) of $A$.
Recall that the duality gap of a joint strategy profile 
$(x_1, x_2)$ is given by
$\DG(x_1, x_2) = \max_{x'_1 \in \Delta_m}
\langle x'_1, Ax_2\rangle
- \min_{x'_2 \in \Delta_n}
\langle x'_2, A^\top x_1 \rangle$,
and that $(x^*_1, x^*_2)$ is an NE of $A$
if and only if $\DG(x^*_1, x^*_2) = 0$.
Then the following relationship holds
(see Section~\ref{app:prelims} for a proof):

\begin{restatable}{proposition}{propregretnash}
  \label{prop:regret-nash}
  Let $\widetilde x^T_1 = \frac{1}{T}(\sum_{t=0}^T x^t_1)$
  and $\widetilde x^T_2 = \frac{1}{T}(\sum_{t=0}^T x^t_2)$
  denote the time-average iterates of Players 1 and 2,
  respectively, and suppose $\reg(T) = o(T)$.
  Then $(\widetilde x^T_1, \widetilde x^T_2)$
  converges (in duality-gap) to an NE of $A$
  at a rate of $\reg(T)/T = o(1)$. 
\end{restatable}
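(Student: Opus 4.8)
The plan is to establish the \emph{exact} identity $\DG(\widetilde x^T_1, \widetilde x^T_2) = \reg(T)/T$, after which the convergence claim is immediate: the hypothesis $\reg(T) = o(T)$ forces the duality gap of the time-average profile to vanish as $o(1)$, and by the stated characterization a profile is an NE precisely when its duality gap is $0$. So the entire argument reduces to recognizing that the total regret in~\eqref{eq:regret} is nothing but $T$ times the duality gap of the averaged iterates.

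First I would unfold the definition of $\DG(\widetilde x^T_1, \widetilde x^T_2)$ and exploit the linearity of both the payoff map $x \mapsto Ax$ and the inner product. Writing $A\widetilde x^T_2 = \tfrac{1}{T}\sum_{t=0}^T Ax^t_2$ and $A^\top \widetilde x^T_1 = \tfrac{1}{T}\sum_{t=0}^T A^\top x^t_1$, the positive scalar $\tfrac{1}{T}$ pulls outside of each optimization, since scaling a linear functional by a positive constant merely rescales its optimal value over the simplex. This gives
\begin{equation*}
\DG(\widetilde x^T_1, \widetilde x^T_2)
= \frac{1}{T}\Big(\max_{x\in\Delta_m}\big\langle x, \sum\nolimits_{t=0}^T Ax^t_2\big\rangle
- \min_{x\in\Delta_n}\big\langle x, \sum\nolimits_{t=0}^T A^\top x^t_1\big\rangle\Big),
\end{equation*}
whose right-hand side is exactly $\reg(T)/T$ by~\eqref{eq:regret}.

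Next I would record the elementary fact that $\DG(\cdot,\cdot) \ge 0$ on $\Delta_m \times \Delta_n$: evaluating each optimization at the profile itself yields $\max_{x}\langle x, Ax_2\rangle \ge \langle x_1, Ax_2\rangle$ and $\min_{x}\langle x, A^\top x_1\rangle \le \langle x_2, A^\top x_1\rangle = \langle x_1, Ax_2\rangle$, and their difference is nonnegative. Combined with the identity above, $0 \le \DG(\widetilde x^T_1, \widetilde x^T_2) = \reg(T)/T = o(1)$, so the time-average iterates form a sequence of $o(1)$-approximate equilibria converging to an exact NE in duality gap, at the claimed rate $\reg(T)/T$.

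Since the argument rests only on linearity of $A$ and the definition of the duality gap, there is no substantive obstacle; the result is a direct computation. The one point I would handle with a little care is the normalization: because $\widetilde x^T_i = \tfrac{1}{T}\sum_{t=0}^T x^t_i$ sums $T+1$ iterates while dividing by $T$, its components sum to $\tfrac{T+1}{T}$ rather than $1$, so it is a $\tfrac{T+1}{T}$-scaling of a simplex point. This contributes only a $1 + O(1/T)$ factor, which does not affect the $o(1)$ rate, so I would either absorb it into the asymptotics or re-index the sums so the counts match.
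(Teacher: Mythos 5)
Your proposal is correct and follows essentially the same route as the paper: both arguments pull the $1/T$ factor out of the two optimizations by linearity to obtain the exact identity $\DG(\widetilde x^T_1, \widetilde x^T_2) = \reg(T)/T$ and then conclude directly from $\reg(T) = o(T)$. Your closing remark about the $\tfrac{T+1}{T}$ normalization (summing $T+1$ iterates but dividing by $T$) is a point the paper's own proof silently glosses over, and your handling of it is fine.
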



\subsection{Fictitious Play and Optimistic Fictitious Play}

We now introduce the Optimistic Fictitious Play
(and standard Fictitious Play) algorithms.
The primal update rules for both standard and
Optimistic Fictitious Play can
be described via the following
$\alpha$-Optimistic Fictitious Play ($\alpha$-OFP) expression:
\begin{equation*}
  \begin{aligned}  
    x^{t+1}_1
    &\;:=\;
      \argmax_{x \in \{e_i\}_m}
      \big\langle
      x, \sum\nolimits_{k=0}^{t}Ax^k_2 + \alpha Ax^t_2
      \big\rangle \\
    x^{t+1}_2
    &\;:=\;
      \argmax_{x \in \{e_i\}_n}
      \big\langle
      x, \sum\nolimits_{k=0}^{t}-A^\top x^k_1 - \alpha A^\top x^t_1
      \big\rangle \;.
  \end{aligned}
  \label{eq:alpha-ofp}
  \tag{$\alpha$-OFP}
\end{equation*}
When $\alpha=0$, then~\eqref{eq:alpha-ofp}
recovers standard Fictitious Play, where each
player's strategy at time $t+1$ is a best response
to the sum of its feedback vectors through round $t$.
Optimistic Fictitious Play (OFP) is
the setting of~\eqref{eq:alpha-ofp} with $\alpha =1$. 
Observe this recovers
the \textit{unregularized variant} of Optimistic FTRL
(equivalently, with $\eta \to \infty$)
in the zero-sum game setting
(c.f., \citep{rakhlin2013optimization,syrgkanis2015fast}),
which adds bias to the most recent feedback vector. 

\begin{restatable}[\bf Tiebreaking Rules]{remark}{remarktiebreak}
  \label{remark:tiebreak}
  Observe that the $\argmax$ sets in~\eqref{eq:alpha-ofp}
  may contain multiple vertices.
  For this, we assume that the $\argmax$ operator
  encodes a \textit{tiebreaking rule} that returns
  a distinct element. Throughout, we make \textit{no assumptions
  on the nature of the tiebreaking rule},
  and in general ties can be broken deterministically,
  randomly, or adaptively/adversarially. 
\end{restatable}

\paragraph{Dual payoff vectors and primal-dual update.}
Optimistic FP can be equivalently
written with respect to the \textit{cumulative}
payoff vectors $y^t_1 = \sum_{k=0}^{t-1} Ax^k_2 \in \R^m$
and $y^t_2 = \sum_{k=0}^{t-1} -A^\top x^k_1 \in \R^n$.
Specifically, the iterates of the algorithm can be expressed
in the following \textit{primal-dual} form:
\begin{restatable}{definition}{defofppd}
  \label{def:ofp-pd}
  Let $y^0_1 = 0 \in \R^m$ and $y^0_2 = 0 \in \R^n$,
  and fix any initial $x^0_1 \in \Delta_m$
  and $x^0_2 \in \Delta_n$. Then for $t \ge 1$,
  the dual (i.e., $(y^t_1, y^t_2)$)
  and primal (i.e., $(x^t_1, x^t_2)$) iterates
  of Optimistic FP are:
  \begin{equation}
    \begin{cases}
      \;y^t_1
      \;=\;
      y^{t-1}_1 + Ax^{t-1}_2 \\
      \;y^t_2
      \;=\;
      y^{t-1}_2 - A^\top x^{t-1}_1 
    \end{cases}
    \quad\text{and}\quad
    \begin{cases}
      x^t_1
      \;=\;
      \argmax\limits_{x \in \{e_i\}_m}\;
      \big\langle x, 
      y^t_1 + Ax^{t-1}_2
      \big\rangle \\
      x^{t}_2
      \;=\;
      \argmax\limits_{x \in \{e_i\}_n}\;
      \big\langle x, 
      y^t_2 - A^\top x^{t-1}_1
      \big\rangle 
    \end{cases}
    \;.
    \label{eq:ofp}
    \tag{OFP}
  \end{equation}
\end{restatable}



\section{Regret Bounds for Optimistic and Alternating Fictitious Play}
\label{sec:main-result}

The main result of this paper establishes
a \textit{constant} regret bound for Optimistic Fictitious Play
in two-strategy zero-sum games.
Formally we prove the following theorem:

\begin{restatable}{theorem}{thmofpreg}
  \label{thm:ofp-reg}
  Let $A$ be a 2x2 zero-sum game
  with a unique interior NE,
  and let $\{(x^t_1, x^t_2)\}$ be the
  iterates of~\eqref{eq:ofp} on $A$
  using any tiebreaking rule. 
  Then $\reg(T) \le O(1)$. 
\end{restatable}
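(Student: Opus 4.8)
The plan is to work entirely in the dual space and reduce the regret to a single scalar quantity per player. First I would use the zero-sum structure to cancel the realized-payoff terms: since $\langle x_1^t, Ax_2^t\rangle = \langle x_2^t, A^\top x_1^t\rangle$, the two realized sums cancel in $\reg(T)=\reg_1(T)+\reg_2(T)$, leaving
\begin{equation*}
  \reg(T) \;=\; \max_{i}\,(y_1^{T+1})_i \;+\; \max_{j}\,(y_2^{T+1})_j ,
\end{equation*}
using $\sum_{t=0}^T Ax_2^t = y_1^{T+1}$ and $\sum_{t=0}^T A^\top x_1^t = -y_2^{T+1}$ from Definition~\ref{def:ofp-pd}, together with the fact that a linear function on $\Delta_2$ is maximized at a vertex. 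Next, let $(x_1^*,x_2^*)$ be the unique interior NE with value $V$, so that $Ax_2^* = V\mathbf 1$ and $A^\top x_1^* = V\mathbf 1$. Writing $y_1^{T+1} = (T{+}1)V\mathbf 1 + A S_2$ and $y_2^{T+1} = -(T{+}1)V\mathbf 1 - A^\top S_1$ with $S_\ell = \sum_{t=0}^T (x_\ell^t - x_\ell^*)$, the value terms cancel in the sum and $\reg(T) = \max_i (A S_2)_i + \max_j(-A^\top S_1)_j$.

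Because each $S_\ell$ is a scalar multiple of $(1,-1)$, I can collapse the problem to two scalars. Define the dual gaps $u^t = (y_1^t)_1 - (y_1^t)_2$ and $v^t = (y_2^t)_1 - (y_2^t)_2$; these are independent of $V$. A short computation shows $\max_i(AS_2)_i \le C_1\,|u^{T+1}|$ and $\max_j(-A^\top S_1)_j \le C_2\,|v^{T+1}|$, where $C_1,C_2$ depend only on $A$ (the interior-NE assumption guarantees the determinant-like constant $a_{11}-a_{12}-a_{21}+a_{22}$ is nonzero). Hence it suffices to bound $|u^t|$ and $|v^t|$ uniformly in $t$. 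Each step updates these gaps by a bounded increment determined by the opponent's pure action, $u^{t+1}-u^t \in \{a_{11}-a_{21},\,a_{12}-a_{22}\}$ and $v^{t+1}-v^t \in \{a_{12}-a_{11},\,a_{22}-a_{21}\}$, while the optimistic rule in~\eqref{eq:ofp} makes each player's choice a threshold test on its gap plus one extra copy of the increment induced by the opponent's previous action. After relabeling strategies, the interior-NE assumption forces the two possible increments of each player to have opposite signs, producing the familiar matching/anti-matching coupling; this turns $\{(u^t,v^t)\}$ into a piecewise-linear dynamical system whose continuous part is driven by the finite ``sign state'' of the last actions.

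The heart of the proof is to show this system confines $(u^t,v^t)$ to a bounded region, which is precisely the energy-boundedness claim. My plan is to exhibit a bounded, forward-invariant region $R$ in the extended state $(u^t,v^t,x_1^{t-1},x_2^{t-1})$ together with an energy function $E$ such that (i) once the state enters $R$ it never leaves, and (ii) whenever the state lies outside $R$, $E$ strictly decreases, so the trajectory is absorbed into $R$ in finitely many steps; together these yield a uniform bound on $E$, hence on $|u^t|,|v^t|$. The geometric role of optimism is exactly what makes this possible: the extra copy of the most recent increment pre-empts each impending sign flip and cancels the systematic overshoot that causes the analogous energy for standard FP to drift upward like $\sqrt{t}$~\citep{lazarsfeld2025fp}. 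Since the state space (after rescaling the increments) reduces to finitely many sign patterns with a bounded continuous component, verifying invariance of $R$ reduces to a finite case analysis over the transition rules.

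I expect the main obstacle to be establishing energy-boundedness robustly across \emph{all} tiebreaking rules and across the asymmetry of a general $2\times 2$ game. Because ties may be broken adaptively or adversarially, I cannot rely on a single smooth Lyapunov function that decreases at every step; within $R$ the energy may increase transiently, and the argument must show such increases are bounded and cannot accumulate. Moreover, for a general game the two increments available to each player have unequal magnitudes ($|a_{11}-a_{21}| \ne |a_{12}-a_{22}|$ in general), so $R$ is an asymmetric polygon rather than the symmetric cycle one observes for Matching Pennies, and the boundary cases where a threshold test is exactly tied must be handled so that \emph{any} tiebreaking choice still maps $R$ into itself. Pinning down this invariant region explicitly, and proving it absorbs every trajectory regardless of initialization and tiebreaking, is the delicate technical core.
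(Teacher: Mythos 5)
Your plan is essentially the paper's proof: reduce $\reg(T)$ to the support function of the cumulative payoff vectors (Proposition~\ref{prop:energy-regret}), collapse the four dual coordinates to two scalars using the interior-NE structure (the paper does this by normalizing to $\det A=0$ so that $y^t_{12}=-\rho_1 y^t_{11}$, Proposition~\ref{prop:subspace}, rather than by subtracting the game value, but the two reductions are interchangeable), and then bound a piecewise-linear energy of the resulting planar dynamics by exhibiting a threshold above which the optimistic prediction and the realized dual vector select the same primal vertex, so that skew-symmetry kills the energy increment. Your observations that the energy may increase transiently below the threshold, that the region is an asymmetric polygon for general $2\times 2$ games, and that boundary/tiebreaking cases are the delicate part all correspond exactly to the paper's Lemma~\ref{lem:energy-upper-bound} (bounded one-step overshoot from the sublevel set $\psi\le B$) and Lemma~\ref{lem:invariantsenergy} (cycling invariants and $\Delta\psi\le 0$ above $B$).

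The gap is that the central claim is asserted rather than proven: you never construct the invariant region $R$ or carry out the ``finite case analysis over the transition rules,'' and that case analysis is essentially the entire technical content of the paper. Concretely, the paper must (a) calibrate the threshold $B$ so that $\|z^t\|_1$ is large enough that two steps of the dynamics cannot jump to the antipodal quadrant (Proposition~\ref{prop:B-2step}), (b) verify region-by-region, including the four boundary rays where ties occur, that $\Q(\widetilde z^{t+1})=\Q(z^{t+1})$ or that the mismatch contributes a term of the correct sign (Propositions~\ref{prop:invariant-a}--\ref{prop:invariant-b}), and (c) check via the explicit matrix $S^\top M$ that the surviving cross term is skew-symmetric and the region-change term $\langle z^t, M(\Q(z^{t+1})-\Q(z^t))\rangle$ is nonpositive. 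Your proposal correctly locates all of these difficulties but does not resolve any of them, so as written it is a faithful roadmap to the paper's argument rather than a proof. One smaller structural remark: you do not actually need the absorption half of your plan (strict decrease outside $R$), since $y^0=0$ places the trajectory inside any reasonable sublevel set from the start; forward-invariance of $\{\psi\le B'\}$, with $B'$ absorbing the one-step overshoot from $\{\psi\le B\}$, already suffices, and indeed the paper only establishes non-strict decrease above the threshold.
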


As mentioned, this result establishes the
first constant regret bounds for 
Fictitious Play variants in the two-player zero-sum game setting,
and the result holds regardless of the tiebreaking rule used. 
Moreover, this bound matches the optimal 
rate obtained by Optimistic FTRL variants 
for zero-sum games \citep{syrgkanis2015fast},
but notably, our proof technique departs significantly from 
the RVU bound approach used to obtain those results.

As a consequence of the techniques we develop
for proving Theorem~\ref{thm:ofp-reg}, we also
establish a \textit{lower bound} on the regret
of \textit{Alternating} Fictitious Play.
In particular, for the corresponding
\textit{alternating} regret $\regalt(T)$
(see Definition~\ref{def:alt-regret}), we prove
on the 2$\times$2 Matching Pennies game the
following:

\begin{restatable}{theorem}{thmafpreglb}
  \label{thm:afp-regret-lb}
  Suppose $x^1_1 = (p, 1-p) \in \Delta_2$
  for irrational $p \in (3/4, 1)$, and let $\{x^t\}$
  be the iterates of Alternating FP on~\eqref{eq:MP}
  using any tiebreaking rule.
  Then $\regalt(T) \ge \Omega(\sqrt{T})$.
\end{restatable}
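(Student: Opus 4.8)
The plan is to exploit the explicit, discrete structure of Alternating Fictitious Play on Matching Pennies, where the dual iterates $(y^t_1, y^t_2)$ evolve as a deterministic walk driven by the best-response dynamics. On Matching Pennies, the payoff matrix has a unique interior Nash at the center of the simplex, and the feedback vectors $Ax^t_2$ and $-A^\top x^t_1$ take only finitely many values (determined by which vertex each player currently plays). First I would write down the explicit recursion for the dual iterates under alternation: because updates are sequential rather than simultaneous, Player~2 best-responds to Player~1's \emph{already-updated} strategy within the same round, which shifts the cumulative payoff vectors by a controlled amount each step. I would track the scalar ``phase'' quantity $\theta^t$ measuring the position of $y^t_1$ (equivalently, the gap between the two coordinates of the cumulative payoff), and show it advances by a fixed additive increment tied to the irrational initialization $p$.

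The core of the argument is to establish that the phase variable performs an \emph{irrational rotation} (an equidistributed orbit) on a circle or interval, so that the best-response vertex switches infinitely often and the sign pattern of the feedback never stabilizes. Concretely, I would show that the trajectory of $(y^t_1, y^t_2)$ spirals outward: the alternating update introduces a net energy \emph{increase} per cycle, in contrast to the bounded-energy behavior of Optimistic FP established for Theorem~\ref{thm:ofp-reg}. Because $p$ is irrational, the phase is never eventually periodic and the iterates cannot collapse onto a fixed point or short cycle; combined with the per-cycle energy gain, the distance of the dual iterates from the origin grows like $\sqrt{T}$. Translating this back through the definition of the alternating regret $\regalt(T)$ (Definition~\ref{def:alt-regret}), the growth of the cumulative payoff discrepancy against the best fixed strategy yields the lower bound $\regalt(T) \ge \Omega(\sqrt{T})$.

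The main steps, in order, are: (i) derive the closed-form sequential recursion for the dual iterates under alternation on~\eqref{eq:MP}; (ii) reduce the dynamics to a one-dimensional phase/angle variable and prove it undergoes irrational rotation, using that $p \in (3/4,1)$ is irrational to guarantee no periodicity and a definite per-round advance; (iii) quantify the energy (squared-norm) growth of the dual iterates per rotation cycle, showing it accumulates at a linear rate in the number of cycles and hence the norm grows as $\Theta(\sqrt{T})$; and (iv) lower-bound $\regalt(T)$ by the component of this growth along the best fixed comparator, ensuring the ``nearly all initializations'' caveat is handled by excluding the measure-zero rational/degenerate phases.

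I expect the \textbf{main obstacle} to be step~(iii): carefully showing that alternation produces a \emph{strictly positive} average energy gain per cycle rather than a bounded oscillation. The subtlety is that within each round the two sequential best-response updates partially cancel, so the net drift is a second-order effect, and I must argue that the irrational phase prevents this drift from averaging to zero over long windows. This requires a delicate accounting of how the phase increment interacts with the switching times of the best-response vertices—precisely the place where the behavior diverges from Optimistic FP, whose symmetric correction term keeps the energy bounded. Controlling this net-drift term uniformly (and ruling out cancellation that could depress the growth below $\sqrt{T}$) is the crux of the lower bound.
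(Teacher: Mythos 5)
Your high-level strategy --- pass to the dual payoff vectors, show the trajectory spirals outward with a definite energy gain per cycle, and convert energy growth back into $\regalt(T)$ --- is the same skeleton as the paper's proof, which shows $\regalt(T)=\Psi(y^{T+1})$ exactly and then tracks the subspace energy $\psi(z)=\|z\|_1$ through a phase decomposition. However, there are two genuine gaps in your plan as written. First, your quantitative accounting in step (iii) does not close: you assert that the squared norm grows linearly in the number of cycles and that this yields $\|y^T\|=\Theta(\sqrt{T})$, but this only works if the number of cycles is $\Theta(T)$, which is false here --- the increments to the dual iterate have constant size, so traversing a quadrant takes $\Theta(\|z^t\|_1)$ steps, the cycle lengths grow, and the number of phases $K$ satisfies $T=\sum_k\Theta(k)=\Theta(K^2)$, i.e., $K=\Theta(\sqrt{T})$. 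The correct bookkeeping (as in the paper's Proposition~\ref{prop:afp-lb-core} and Lemma~\ref{lem:afp-lb}) is that the \emph{unsquared} energy gains an additive constant in at least $K/2$ of the $K$ phases, giving $\psi(z^{T+1})\ge K/2=\Omega(\sqrt{T})$. With your ``squared norm linear in cycles'' version you would only get $\|y^T\|=\Theta(K^{1/2})=\Theta(T^{1/4})$.

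Second, you have flagged the crux --- a strictly positive energy gain per cycle --- but you propose to obtain it via an equidistribution/irrational-rotation argument and describe the gain as a delicate second-order drift. That is not how the mechanism works and would be substantially harder to control. In the actual dynamics the gain is a discrete, first-order, unit-size jump: near a quadrant boundary the alternating update uses a stale coordinate in the predicted vector $\widetilde z^{t+1}$, so the prediction fails to anticipate the transition into the next region $P_{i+1}$, and $\Delta\psi(z^t)=1$ at that step (the paper's Cases 1--4). The irrationality of $p$ is not used for equidistribution at all; it serves only to guarantee that the second coordinate $z^t_2$ is never exactly zero, so an adversarial tiebreaking rule cannot suppress the jump at the $P_1\!\to\!P_2$ and $P_3\!\to\!P_4$ transitions --- this is what secures the gain in at least half the phases under \emph{any} tiebreaking rule, as the theorem requires. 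Without identifying this stale-prediction mechanism and the tie-avoidance role of irrationality, your step (iii) cannot be completed as proposed.
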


To streamline the presentation of the paper,
we defer the precise descriptions of
the alternating play model, alternating regret,
and the Alternating FP algorithm to
Section~\ref{app:afp-lb}, where we also
develop the proof of Theorem~\ref{thm:afp-regret-lb}.
Instead, in the remainder of the main text,
we focus on developing the
proof of Threorem~\ref{thm:ofp-reg}.
To this end, we proceed to give an overview of
our techniques.

\subsection{Intuition and Overview of Proof Techniques
for Theorem~\ref{thm:ofp-reg}}

To prove Theorem~\ref{thm:ofp-reg},
we leverage a geometric view of Optimistic Fictitious Play
in the dual space of payoff vectors.
We give a brief overview of this geometric
perspective here:

\paragraph{Energy and regret.}
First, we show that the regret of
the primal iterates $\{x^t\}$
is equivalent to the growth of
an \textit{energy function} of the dual iterates $\{y^t\}$.
Specifically, define the energy $\Psi$ as follows:
\begin{restatable}{definition}{defPsi}
  \label{def:Psi}
  Let $y^t := (y^t_1, y^t_2)$ be the concatenated
  primal and dual iterates of~\eqref{eq:ofp}
  at time $t \ge 1$.
  Then for $y = (y_1, y_2) \in \R^{m+n}$,
  the energy function
  $\Psi: \R^{m+n} \to \R$ is given by
  \begin{equation}
    \Psi(y)
    \;=\;
    \max_{x \in \Delta_m \times \Delta_n}\;
    \langle x, y \rangle  \;.
    \label{eq:energy-full}
  \end{equation}
\end{restatable}
In other words, $\Psi$ is the \textit{support function}
of $\Delta_m \times \Delta_n$.
Then by definition of $\reg(T)$ and the
payoff vectors $\{y^t\}$,
the following relationship holds
(see Section~\ref{app:prelims} for a proof):

\begin{restatable}{proposition}{propenergyregret}
  \label{prop:energy-regret}
  Let $\{x^t\}$ and $\{y^t\}$
  be the iterates of~\eqref{eq:ofp}.
  Then $\reg(T) = \Psi(y^{T+1})$. 
\end{restatable}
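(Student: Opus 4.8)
The plan is to prove the identity $\reg(T) = \Psi(y^{T+1})$ by directly unwinding both sides and showing they reduce to the same expression. The right-hand side is, by Definition~\ref{def:Psi}, the support function of $\Delta_m \times \Delta_n$ evaluated at $y^{T+1} = (y^{T+1}_1, y^{T+1}_2)$. First I would observe that the support function of a product set decomposes additively: since the maximization over $\Delta_m \times \Delta_n$ separates into independent maximizations over each factor,
\begin{equation*}
  \Psi(y^{T+1})
  \;=\;
  \max_{x \in \Delta_m} \langle x, y^{T+1}_1 \rangle
  \;+\;
  \max_{x \in \Delta_n} \langle x, y^{T+1}_2 \rangle \;.
\end{equation*}

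Next I would compute $y^{T+1}_1$ and $y^{T+1}_2$ explicitly from the dual recursion in~\eqref{eq:ofp}. Telescoping the update $y^t_1 = y^{t-1}_1 + Ax^{t-1}_2$ from the initialization $y^0_1 = 0$ gives $y^{T+1}_1 = \sum_{t=0}^{T} Ax^t_2$, and similarly $y^{T+1}_2 = \sum_{t=0}^{T} -A^\top x^t_1 = -\sum_{t=0}^{T} A^\top x^t_1$. Substituting these into the decomposed support function yields
\begin{equation*}
  \Psi(y^{T+1})
  \;=\;
  \max_{x \in \Delta_m} \Big\langle x, \sum\nolimits_{t=0}^{T} Ax^t_2 \Big\rangle
  \;-\;
  \min_{x \in \Delta_n} \Big\langle x, \sum\nolimits_{t=0}^{T} A^\top x^t_1 \Big\rangle \;,
\end{equation*}
where I have used that maximizing $\langle x, -v\rangle$ over a set equals the negative of minimizing $\langle x, v\rangle$. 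Comparing this directly against the definition of $\reg(T)$ in~\eqref{eq:regret} shows the two expressions coincide term by term.

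The only subtlety to flag is an indexing/off-by-one check: the regret in~\eqref{eq:regret} sums over $t = 0, \dots, T$, and I must confirm that $y^{T+1}$ (rather than $y^T$) is the payoff vector whose support value captures exactly this sum. This is precisely why the proposition compares $\reg(T)$ with $\Psi(y^{T+1})$ and not $\Psi(y^T)$: the dual vector $y^t$ accumulates feedback through round $t-1$, so advancing the index by one aligns the cumulative sum in $y^{T+1}$ with the range $0$ through $T$ appearing in the regret. There is no real analytic obstacle here — the argument is a short chain of identities — and the main thing to get right is bookkeeping: verifying the telescoping limits and the index shift so that the ranges of summation match exactly, and confirming that the $\max$/$\min$ asymmetry in~\eqref{eq:regret} matches the sign asymmetry introduced by the $-A^\top x^t_1$ feedback in Player 2's dual update.
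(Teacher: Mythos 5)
Your proposal is correct and follows essentially the same route as the paper's proof: both arguments amount to unwinding the definition of $\Psi$ as a support function over the product set $\Delta_m \times \Delta_n$, identifying $y^{T+1}_1 = \sum_{t=0}^{T} Ax^t_2$ and $y^{T+1}_2 = -\sum_{t=0}^{T} A^\top x^t_1$ from the dual recursion, and matching the resulting $\max$/$\min$ expression to~\eqref{eq:regret} via the sign flip $\max_x \langle x, -v\rangle = -\min_x \langle x, v\rangle$. The only difference is that you start from $\Psi(y^{T+1})$ and the paper starts from $\reg(T)$, which is immaterial; your indexing check is also consistent with Definition~\ref{def:ofp-pd} and~\eqref{eq:regret}.
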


Due to Proposition~\ref{prop:energy-regret},
it is immediate that a constant upper bound on
$\Psi(y^{T+1})$ implies a constant upper bound
on $\reg(T)$. To this end, our main technical
contribution is to prove the following
upper bound on the energy $\Psi$ under 
Optimistic Fictitious Play:

\begin{restatable}{theorem}{thm:ofp-energy-2x2}
  \label{thm:ofp-energy-2x2}
  Assume the setting of Theorem~\ref{thm:ofp-reg}.
  Let $\amax = \|A\|_\infty$ denote the largest
  entry of $A$, and let $\agap = \min_{(i, j), (k, \ell)}
  |A_{ij} - A_{k\ell}|$ denote the
  smallest absolute difference between two entries of $A$.
  Let $\{y^t\}$ denote the
  dual iterates of~\eqref{eq:ofp} on $A$.
  Then
  $\Psi(y^{T+1})
  \le
  8 \amax\big(1 + 2\big(\frac{\amax}{\agap}\big)\big)^2$.
\end{restatable}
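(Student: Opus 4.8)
The plan is to track the dual iterates $y^t = (y^t_1, y^t_2) \in \mathbb{R}^{2+2}$ geometrically and show that the energy $\Psi(y^t)$ cannot grow without bound. Since $\Psi$ is the support function of $\Delta_2 \times \Delta_2$, and since the simplex $\Delta_2$ is a segment, each player's dual vector $y^t_i$ is effectively one-dimensional up to the invariance $\Psi(y + c\mathbf{1}) = \Psi(y) + \text{const}$ coming from the fact that $\langle x, \mathbf{1}\rangle = 1$ on the simplex. Concretely, I would reduce to scalar \emph{gap} variables $u^t = (y^t_1)_1 - (y^t_1)_2$ and $v^t = (y^t_2)_1 - (y^t_2)_2$, which determine the best responses $x^t_1, x^t_2$ (via the sign of the optimistically-shifted gaps $u^t + (Ax^{t-1}_2)_1 - (Ax^{t-1}_2)_2$, and similarly for player 2). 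In these coordinates the OFP dynamics becomes a piecewise-constant planar map: the increment to $(u,v)$ at each step takes one of finitely many values determined by which vertices $(x^{t-1}_1, x^{t-1}_2)$ are currently played. The uniqueness of the interior Nash pins down the relative signs of the relevant $A$-entry differences, so that whenever player~$i$'s gap is large and positive, the induced best response drives the \emph{other} player's gap in the direction that subsequently pushes player~$i$'s gap back down. This rotational/cyclic structure is exactly what optimism is designed to exploit.

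\textbf{The main steps, in order, would be:} First, set up the scalar reduction above and write out explicitly the four increment vectors for $(u^t, v^t)$ corresponding to the four vertex-pairs $(e_i, e_j)$, expressing $\Psi(y^{t+1})$ in terms of $\max(u^{t+1}, 0) + \max(v^{t+1}, 0)$ plus a controlled additive term (the entries of $A$ contribute bounded offsets). Second, I would establish a \emph{boundedness invariant}: identify a bounded region $\mathcal{R}$ in the $(u,v)$-plane—an annulus or box of radius $O(a_{\max}/a_{\mathrm{gap}})$—such that once the trajectory enters $\mathcal{R}$ it never leaves, and that from any starting point it enters $\mathcal{R}$ after finitely many steps. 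The key quantitative input is that the \emph{optimistic} correction term (the $+\alpha A x^{t-1}$ bias with $\alpha = 1$) anticipates the next best-response flip one step early, so that the gaps $u^t, v^t$ change sign before they can accumulate magnitude—this is the mechanism that keeps $\Psi$ constant, in contrast to vanilla FP where the energy grows linearly. Third, given the invariant region, I would read off $\Psi(y^{T+1}) \le 2\max_{\mathcal{R}}(|u| + |v|) + 4a_{\max} \le 8a_{\max}(1 + 2(a_{\max}/a_{\mathrm{gap}}))^2$, matching the stated bound up to the explicit constant.

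\textbf{The hard part will be} establishing the invariant region and, in particular, handling \emph{all} tiebreaking rules uniformly. When a shifted gap equals zero exactly, the $\argmax$ is multivalued and an adversarial tiebreak could in principle be chosen to maximize energy growth; I must show that even in these boundary cases the trajectory cannot escape $\mathcal{R}$. My plan is to prove the invariant by a case analysis on which of the (finitely many) sign-regions of the $(u,v)$-plane the current iterate lies in, showing that in each region the piecewise-constant update either keeps the iterate inside $\mathcal{R}$ or moves it strictly toward $\mathcal{R}$, and that tie cases only occur on the region boundaries where both candidate increments point inward. The role of $a_{\mathrm{gap}} > 0$ (guaranteed by the unique interior Nash, which forces the off-diagonal structure of $A$ to be nondegenerate) is exactly to provide a uniform lower bound on the "step size" of the sign-flips and hence a uniform bound on how far any single step can overshoot, yielding the $a_{\max}/a_{\mathrm{gap}}$ dependence in the radius of $\mathcal{R}$.
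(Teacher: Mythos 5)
Your overall strategy---collapse the dynamics to a two-dimensional piecewise-constant planar map in scalar ``gap'' coordinates, then show by case analysis on sign regions that a bounded set of diameter $O(\amax^2/\agap)$ is forward-invariant, with ties confined to region boundaries---is essentially the route the paper takes (its coordinates $z^t=(y^t_{11},y^t_{21})$ are positive multiples of your $(u^t,v^t)$). But one step of your reduction has a genuine gap: the identity $\Psi(y^{T+1})=\max(u^{T+1},0)+\max(v^{T+1},0)+O(\amax)$ is false for a general payoff matrix. Writing $\max(y_{11},y_{12})=\tfrac12(y_{11}+y_{12})+\tfrac12|u|$, the ``mean'' terms $\tfrac12(y_{11}+y_{12})+\tfrac12(y_{21}+y_{22})$ are sums over $t$ of per-step quantities that do not cancel for generic zero-sum $A$ (they vanish for Matching Pennies, where $A\1=\1^\top A=0$, but for, say, $A=\mathrm{diag}(2,1)$ they sum to $\tfrac12\sum_t(x^t_{21}-x^t_{11})$, a priori $\Theta(T)$). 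So the additive offset is not controlled by the entries of $A$, and bounding $|u|+|v|$ alone does not bound $\Psi$. The fix is exactly the paper's normalization (Assumption~\ref{ass:2x2} via Proposition~\ref{prop:A-ass-wlog}): shift $A$ by a multiple of the all-ones matrix so that $\det A=0$ and $a,d>\max\{0,b,c\}$; this changes neither the iterates nor $\reg(T)$, and it forces $y^t_{12}=-\rho_1 y^t_{11}$ and $y^t_{22}=-\rho_2 y^t_{21}$ exactly (Proposition~\ref{prop:subspace}), so $\Psi$ becomes a genuine function of your two scalars---a sign-dependent weighted combination with weights $1$ and $\rho_i\le O(\amax/\agap)$, not $\max(\cdot,0)+\max(\cdot,0)$. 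Those weights are also where the $\amax/\agap$ factor in the final bound actually enters (the energy level sets are anisotropic quadrilaterals); it is not a lower bound on the ``step size of sign-flips,'' since each step moves $z$ by at most $2\amax$ regardless of $\agap$.

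A second caution concerns the invariant region itself. A set that is ``entered once and never left'' is the right picture only for a \emph{sublevel set of the weighted energy}, and the paper needs a two-regime argument to establish it: below a threshold $B=\Theta(\amax\cdot\amax/\agap)$ the energy \emph{can} strictly increase (your intuition that the gaps flip sign before accumulating magnitude fails there---this is precisely how Alternating FP accrues $\Omega(\sqrt{T})$ regret), but a single step can push it up only to $B'=8\amax(1+\rho_1+\rho_2)^2$; above $B$, one proves a cycling invariant guaranteeing that the predicted iterate $\widetilde z^{t+1}$ and the true iterate $z^{t+1}$ select the same best-response vertex, whence the cross term in $\Delta\psi$ vanishes by skew-symmetry of $S^\top M$ and the remaining term is non-positive. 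Your planned case analysis would have to reproduce this threshold structure; as stated, the claim that every step either stays in $\mathcal{R}$ or moves strictly toward it is false for iterates near the origin.
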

In other words, the energy of the dual iterates
under Optimistic FP are bounded by an absolute constant
that depends only on the entries of $A$.
In Section~\ref{sec:ofp-regret}, we give
a technical overview of the proof of
Theorem~\ref{thm:ofp-energy-2x2},
but we first present more introduction and intuition
on the geometric perspective of
Optimistic Fictitious Play that is used to
prove the result. 

\paragraph{Fictitious Play as Skew-Gradient Descent.}
As shown in~\citet{abernethy2021fast} and~\citet{lazarsfeld2025fp},
in the dual space of payoff vectors, 
standard Fictitious Play can be viewed as a certain
\textit{skew-gradient descent} with respect to the energy $\Psi$. 
In light of this,  we introduce a common geometric viewpoint 
that captures both standard and Optimistic FP and gives
insight into their differences in energy growth,
as implied by Theorem~\ref{thm:ofp-energy-2x2}.

For this, note that the
\textit{subgradient set} of $\Psi$ at $y \in \R^{m + n}$ is given by
$
\partial \Psi(y)
  =
  \argmax_{x \in \Delta_m \times \Delta_n}
\langle x, y\rangle 
$.
Then both standard FP and Optimistic FP can be expressed as a
\textit{skew-(sub)gradient-descent}
with respect to $\Psi$ evaluated at a
\textit{predicted} dual vector
$\widetilde y^{t+1}$
(see Section~\ref{app:prelims:energy:sgd} for
a full derivation):
\begin{restatable}{proposition}{propskewgradient}
  \label{prop:skew-gradient}
  Let $\{y^t\}$ denote the dual iterates of
  either standard Fictitious Play
  (e.g.,~\eqref{eq:alpha-ofp} with $\alpha=0$)
  or Optimistic Fictitious Play.
  Then for all $t \ge 1$, the iterates of
  each algorithm evolve as
  \begin{equation}
    \setlength\arraycolsep{1pt}
    \begin{cases}
      \;y^t
      \;=\;
      y^{t-1} + J x^{t-1} \\
      \;x^t
      \;\in\;
      \partial \Psi\Big(
      \widetilde y^{t+1}
      \Big)
    \end{cases}
    \text{where}\;
    J =
    \begin{pmatrix}
      0 & A \\
      -A^\top & 0 
    \end{pmatrix}
    \;\;\text{and}\;\;
    \widetilde y^{t+1}
    =
    \begin{cases}
      \;y^t &\text{for FP} \\
      \;2y^t - y^{t-1}
            &\text{for OFP} 
    \end{cases} \;,
    \label{eq:fp-pd-variants}
  \end{equation}
  and it follows inductively that
  $
  y^{t+1}
  =\;
  y^t + J \partial \Psi(\widetilde y^{t+1})
  $,
  where $\partial\Psi(\widetilde y^{t+1})$ denotes
  a fixed vector in the subgradient set
  of $\Psi$ at $\widetilde y^{t+1}$.
\end{restatable}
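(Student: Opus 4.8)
The plan is to verify the proposition directly from the primal--dual description in Definition~\ref{def:ofp-pd}, separating the claim into its two coordinates: the dual recursion, which is a purely algebraic rewriting, and the primal membership $x^t \in \partial\Psi(\widetilde y^{t+1})$, which requires identifying the vertex-$\argmax$ of the update rule with a subgradient of the support function $\Psi$ and then checking that the argument $\widetilde y^{t+1}$ is exactly the vector appearing inside that $\argmax$.

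For the dual recursion, I would stack the two coordinate updates $y^t_1 = y^{t-1}_1 + Ax^{t-1}_2$ and $y^t_2 = y^{t-1}_2 - A^\top x^{t-1}_1$ and observe that $(Ax^{t-1}_2, -A^\top x^{t-1}_1) = Jx^{t-1}$ by the block structure of $J$, giving $y^t = y^{t-1} + Jx^{t-1}$. The key computation is then to expand the optimistic prediction using this recursion: $2y^t - y^{t-1} = y^t + (y^t - y^{t-1}) = y^t + Jx^{t-1}$, whose two coordinates are $y^t_1 + Ax^{t-1}_2$ and $y^t_2 - A^\top x^{t-1}_1$. These are precisely the vectors appearing inside the two $\argmax$ operators of the OFP primal update; for standard FP ($\alpha=0$) the bias term vanishes and the corresponding argument is just $y^t = \widetilde y^{t+1}$. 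Thus in both cases $x^t_1 = \argmax_{x \in \{e_i\}_m}\langle x, \widetilde y^{t+1}_1\rangle$ and $x^t_2 = \argmax_{x \in \{e_i\}_n}\langle x, \widetilde y^{t+1}_2\rangle$.

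It remains to promote these vertex maximizers to a subgradient of $\Psi$. Because $\Psi$ is the support function of the product set $\Delta_m \times \Delta_n$, it splits as $\Psi(y_1, y_2) = \max_{x_1 \in \Delta_m}\langle x_1, y_1\rangle + \max_{x_2 \in \Delta_n}\langle x_2, y_2\rangle$, and its subdifferential factors as $\partial\Psi(y_1, y_2) = \big(\argmax_{x_1 \in \Delta_m}\langle x_1, y_1\rangle\big) \times \big(\argmax_{x_2 \in \Delta_n}\langle x_2, y_2\rangle\big)$. Since a linear functional over a simplex attains its maximum at a vertex, any vertex maximizer lies in the simplex-$\argmax$; applying this to each coordinate with argument $\widetilde y^{t+1}_i$ yields $x^t = (x^t_1, x^t_2) \in \partial\Psi(\widetilde y^{t+1})$. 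The inductive conclusion then follows by evaluating the dual recursion one step ahead, $y^{t+1} = y^t + Jx^t$, and substituting the membership $x^t \in \partial\Psi(\widetilde y^{t+1})$, choosing the selected vertex as the fixed subgradient.

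I anticipate no serious obstacle, as every step is a direct unfolding of the definitions; the only point demanding care is the identity $\widetilde y^{t+1} = y^t + Jx^{t-1}$, which is what makes the abstract optimistic prediction $2y^t - y^{t-1}$ coincide with the concrete ``bias the most recent feedback'' term in the primal update, together with the mild subtlety that the vertex-$\argmax$ and the simplex-$\argmax$ agree only up to passing to the convex hull --- harmless here since we only need membership rather than equality of the maximizing sets.
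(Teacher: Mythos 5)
Your proposal is correct and follows essentially the same route as the paper, which simply asserts that the proposition ``follows by (1) using the definition of standard ($\alpha=0$) and Optimistic FP ($\alpha=1$) from \eqref{eq:alpha-ofp}, and (2) by the definition of the dual payoff vectors''; your writeup is a careful unfolding of exactly those two definitions. In particular, your identification $\widetilde y^{t+1} = 2y^t - y^{t-1} = y^t + Jx^{t-1}$, whose coordinates match the arguments inside the two $\argmax$ operators of \eqref{eq:ofp}, together with the observation that a vertex maximizer over $\{e_i\}$ lies in the simplex $\argmax$ defining $\partial\Psi$, is precisely the intended argument.
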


\paragraph{One-step energy growth comparison of
Fictitious Play variants.}
For standard FP, due to its Hamiltonian structure,
the analogous skew-gradient flow in continuous-time is
known to exactly \textit{conserve} $\Psi$, leading to constant regret
(see, e.g.,~\cite{mertikopoulos2018cycles, abernethy2021fast,
  wibisono2022alternating}).
However, due to discretization, this energy conservation
does not hold in general under discrete-time Fictitious Play variants.
For example, under each step of standard Fictitious Play, $\Psi$
is always non-decreasing. To see this, let
$\Delta \Psi(y^t) = \Psi(y^{t+1}) - \Psi(y^t)$, 
and by slight abuse of notation, let
$\partial \Psi(y)$ denote a fixed vector in the subgradient 
set of $\Psi$ at $y \in \R^{m+n}$. 
Then by Jensen's inequality and skew-symmetry of $J = -J^\top$,
it holds for all $t\ge 1$ that
\begin{equation}
  \textbf{For FP:}\quad
  \Delta\Psi(y^t)
  \;\ge\;
    \big\langle
    \partial \Psi(y^t), J \partial \Psi(\widetilde y^{t+1})
    \big\rangle
    =
    \big\langle
    \partial \Psi(y^t), J \partial \Psi(y^t)
    \big\rangle
    = 0
    \;.
    \label{eq:fp-energy-step}
\end{equation}
In fact, the recent upper bounds of~\citet{abernethy2021fast} and
\cite{lazarsfeld2025fp} imply that under FP, 
$\Psi$ is \textit{strictly} increasing by a constant in
roughly $\sqrt{T}$ iterations.

On the other hand, for Optimistic FP 
using $\widetilde y^{t+1} = 2y^t - y^{t-1}$,
we instead have by Jensen's inequality:
\begin{equation}
  \textbf{For Optimistic FP:}\quad
  \Delta\Psi(y^t)
  \le
    \big\langle
    \partial \Psi(y^{t+1}), J \partial \Psi(\widetilde y^{t+1})
    \big\rangle
    =
    \big\langle
    \partial \Psi(y^{t+1}), J \partial \Psi(\widetilde y^{t+1})
    \big\rangle.
    \label{eq:ofp-energy-step}
\end{equation}
Thus by skew-symmetry of $J$,
expression~\eqref{eq:ofp-energy-step} reveals
that for any $t$ where
$\partial \Psi(y^{t+1}) = \partial \Psi(\widetilde y^{t+1})$
(e.g., true dual vector $y^{t+1}$
and predicted dual vector $\widetilde y^{t+1}$
both ``map'' to the same primal vertex), then 
the one-step energy growth $\Delta\Psi(y^t) \le 0$
is \textit{non-increasing} under Optimistic FP. 

\paragraph{Challenges in establishing
  non-positive energy growth for OFP.}
Naively, one might in general hope
the invariant $\partial \Psi(y^{t+1}) = \partial \Psi(\widetilde y^{t+1})$
holds at \textit{every} timestep under Optimistic FP.
However, simple experiments reveal that this is not true:
in general $\Psi$ \textit{can increase} during one step of the algorithm. 
Thus, understanding when and why such an invariant
\textit{does} hold is still a challenging task that
may require leveraging structural properties of the payoff matrix.
In the proof of Theorem~\ref{thm:ofp-energy-2x2},
we leverage such properties of 2$\times$2 games 
to establish sufficient conditions for when the above invariant
holds, and this subsequently leads to a constant upper bound on energy.


\section{Bounded Energy Under Optimistic Fictitious Play}
\label{sec:ofp-regret}

In this section, we now give a technical overview
of the proof of Theorem~\ref{thm:ofp-energy-2x2}.
Throughout the proof, we make
the following assumptions on the payoff matrix $A$:
\begin{restatable}{assumption}{assA}
  \label{ass:2x2}
  Let $A \in \R^{2\times 2}$. Assume that
  \begin{equation*}
    A \;=\;
    \begin{pmatrix}
      a & b \\
      c & d
    \end{pmatrix}
    \quad\text{where}\quad
    \begin{cases}
      \;\;\text{(i)}&\det A = ad - bc = 0 \\
      \;\text{(ii)}&a, d \;>\; \max\{0, b, c\}
    \end{cases} \;\;.
  \end{equation*}
\end{restatable}
As proven by \citet{bailey2019fast},
who studied online gradient descent in 2$\times$2 games,
for any Fictitious Play or FTRL variant,
Assumption~\ref{ass:2x2} holds without loss of generality:
\begin{restatable}[\cite{bailey2019fast}]{proposition}{propAwlog}
  \label{prop:A-ass-wlog}
  Let $A \in \R^{2\times2}$ have a unique, interior NE,
  and let $\{x^t\}$ be the
  iterates of~\eqref{eq:ofp} on $A$.
  Then there exists $\widetilde A \in \R^{2\times2}$
  satisfying Assumption~\ref{ass:2x2} such that
  (1)  $\widetilde A$ and $A$ have the same NE and
  (2) the iterates
  $\{\widetilde x^t\}$
  of running~\eqref{eq:ofp} on $\widetilde A$
  are identical to $\{x^t\}$.
\end{restatable}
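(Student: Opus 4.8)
The plan is to exploit the fact that, in a two-strategy game, each player's $\argmax$ update in~\eqref{eq:ofp} compares exactly two coordinates, so it depends on the payoff matrix only through the \emph{differences} of its feedback coordinates. Writing $u = e_1 - e_2$, Player~1's choice $x^t_1$ is determined entirely by the sign of $u^\top(y^t_1 + Ax^{t-1}_2)$, and since $u^\top A x = (A^\top u)^\top x$, this sign depends on $A$ only through the column-difference vector $v := A^\top u = (A_{11}-A_{21},\, A_{12}-A_{22})^\top$. Symmetrically, Player~2's choice depends on $A$ only through the row-difference vector $w := A u = (A_{11}-A_{12},\, A_{21}-A_{22})^\top$. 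First I would make this precise and note that, by a straightforward induction on $t$, running~\eqref{eq:ofp} on any $\widetilde A$ with $\widetilde A^\top u = v$ and $\widetilde A u = w$ produces iterates \emph{identical} to those on $A$: under the inductive hypothesis the compared scalars agree exactly (not merely in sign), so even ties are resolved identically. The same two identities also preserve the interior-NE indifference conditions $v^\top x^*_2 = 0$ and $w^\top x^*_1 = 0$, so the Nash equilibrium is unchanged.

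It therefore suffices to construct a matrix $\widetilde A$ satisfying Assumption~\ref{ass:2x2} with $\widetilde A^\top u = v$ and $\widetilde A u = w$. Here I would use that a unique interior NE forces $v_1 v_2 < 0$ and $w_1 w_2 < 0$, and that the quantity $\delta := v_1 - v_2 = w_1 - w_2 = A_{11}-A_{12}-A_{21}+A_{22}$ is common to both vectors and nonzero (since $v_1 \neq v_2$). The natural candidate is the rank-one matrix
\begin{equation*}
  \widetilde A \;=\; \tfrac{1}{\delta}\, w\, v^\top,
\end{equation*}
which immediately satisfies $\det \widetilde A = 0$ (condition (i)). Using $v^\top u = v_1 - v_2 = \delta$ and $w^\top u = \delta$, a one-line computation gives $\widetilde A u = \tfrac{1}{\delta} w (v^\top u) = w$ and $\widetilde A^\top u = \tfrac{1}{\delta} v (w^\top u) = v$, so $\widetilde A$ reproduces the dynamics and the NE exactly. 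What remains is only the diagonal-dominance condition (ii).

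I expect this orientation condition to be the sole delicate point, since the rank-one form makes (i), the dynamics-matching, and the NE-matching essentially immediate. Because $\widetilde A_{ij} = w_i v_j/\delta$, the sign of $\delta$ governs the layout: when $\delta > 0$ one has $v = (+,-)$ and $w = (+,-)$, so the diagonal entries $\widetilde A_{11}, \widetilde A_{22}$ are positive and the off-diagonal entries are negative, and condition (ii) holds directly. When $\delta < 0$ one instead has $v = (-,+)$, $w = (-,+)$, and the dominant entries of $\tfrac{1}{\delta} w v^\top$ land on the anti-diagonal; here I would reduce to the previous case by relabeling the two strategies of a single player (swapping the two rows of $A$, which negates $\delta$). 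This relabeling leaves the regret and the NE invariant up to the corresponding permutation of coordinates, so it is without loss of generality for the purposes of Theorem~\ref{thm:ofp-reg}. The hard part is thus not the algebra but correctly tracking this sign bookkeeping so that the dominant entries of the constructed rank-one matrix are placed on the diagonal as required by Assumption~\ref{ass:2x2}.
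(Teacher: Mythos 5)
Your proposal is correct, and at its core it is the same argument as the paper's: the paper proves that the iterates are invariant under the shift $A \mapsto A + c\mathbf{1}$ (with $\mathbf{1}$ the all-ones matrix), chooses $c$ to kill the determinant, and then fixes the sign pattern by negating the matrix and relabeling strategies. Your starting observation --- that the dynamics depend on $A$ only through $v = A^\top u$ and $w = Au$ --- is equivalent to that shift-invariance, since the set of $2\times 2$ matrices sharing a given pair $(v,w)$ is exactly $\{A + c\mathbf{1} : c \in \R\}$; in particular your rank-one matrix $\tfrac{1}{\delta} w v^\top$ is the \emph{same} matrix as the paper's shifted $\widetilde A$, just written in closed form. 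What your packaging buys is a shorter verification of condition (ii): the sign pattern of $w v^\top/\delta$ immediately places positive entries on the diagonal and negative entries off it when $\delta>0$, and you need only a single row relabeling when $\delta<0$, whereas the paper additionally invokes a negation of $A$ (swapping the players' max/min roles) and a longer case analysis on the Nash formula. Two minor points: the parenthetical ``the compared scalars agree exactly'' should read that their \emph{difference} $u^\top(y^t_1 + Ax^{t-1}_2)$ agrees exactly (the individual inner products do not), which is still enough to reproduce the argmax sets and hence the ties; and, exactly as in the paper's own proof, the relabeling step means the iterates coincide only up to a fixed permutation of one player's strategies, which is harmless for the downstream use in Theorem~\ref{thm:ofp-reg} but is worth flagging against the literal wording ``identical.''
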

For completeness, we include a full proof of this result
in Proposition~\ref{prop:A-ass-wlog} of Section~\ref{app:wlog}.
The key consequence of the assumption is that,
under Optimistic FP, the dual payoff vectors
$y^t = (y^t_1, y^t_2) \in \R^4$ all
lie in the \textit{same two-dimensional subspace}.
Formally, we have:

\begin{restatable}{proposition}{propsubspace}
  \label{prop:subspace}
  Let $A$ satisfy Assumption~\ref{ass:2x2},
  and let $\{y^t_1\}$ and $\{y^t_2\}$ be the
  dual payoff vectors of~\eqref{eq:ofp}.
  Then for every $t \ge 1$,
  it holds that
  $y^t_{12} = -\rho_1 \cdot y^t_{11}$
  and $y^t_{22} = - \rho_2 \cdot y^t_{21}$,
  where $\rho_1 := (d-c)/(a-b) > 0$ and $\rho_2 = (d-b)/(a-c) > 0$.
\end{restatable}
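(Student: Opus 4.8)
The plan is to exploit the telescoping structure of the dual update together with the degeneracy $\det A = 0$. Since $y^0_1 = y^0_2 = 0$, unrolling the recursions in Definition~\ref{def:ofp-pd} gives the closed forms $y^t_1 = A\big(\sum_{k=0}^{t-1} x^k_2\big)$ and $y^t_2 = -A^\top\big(\sum_{k=0}^{t-1} x^k_1\big)$. The crucial structural fact is that each summand $x^k_2$ lies in $\Delta_2$, so the two coordinates of the cumulative vector $S := \sum_{k=0}^{t-1} x^k_2 \in \R^2$ sum to exactly $t$; I would therefore write $S = (s, t-s)$ for some scalar $s$, and likewise $\sum_{k=0}^{t-1} x^k_1 = (r, t-r)$.

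With this parametrization, computing $y^t_1 = A(s, t-s)^\top$ directly yields $y^t_{11} = bt + (a-b)s$ and $y^t_{12} = dt + (c-d)s$. The claim $y^t_{12} = -\rho_1 y^t_{11}$ with $\rho_1 = (d-c)/(a-b)$ then reduces to an algebraic identity after substitution: the terms proportional to $s$ cancel on both sides automatically, and the surviving terms proportional to $t$ agree precisely when $ad = bc$, i.e.\ exactly under Assumption~\ref{ass:2x2}(i). The analogous computation for $y^t_2 = -A^\top(r, t-r)^\top$ gives $y^t_{21} = -(ct + (a-c)r)$ and $y^t_{22} = -(dt + (b-d)r)$, and the relation $y^t_{22} = -\rho_2 y^t_{21}$ again collapses to the single condition $ad = bc$. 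Thus both linear relations are equivalent to the vanishing of the determinant, and I would present them in parallel since the two calculations are mirror images under $A \leftrightarrow A^\top$.

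Finally, I would verify the stated signs: Assumption~\ref{ass:2x2}(ii), namely $a, d > \max\{0, b, c\}$, guarantees $a-b > 0$ and $d-c > 0$ so that $\rho_1 > 0$, and likewise $a-c > 0$ and $d-b > 0$ so that $\rho_2 > 0$ (these inequalities also ensure the denominators are nonzero, so the relations are well-posed). There is no substantial obstacle here: the proof is a short unrolling followed by a two-line cancellation, and the only point requiring care is bookkeeping the off-diagonal entries of $A$ versus $A^\top$ correctly so that the determinant condition is precisely what makes the $t$-dependent (i.e.\ non-subspace) component vanish. The value of the statement is conceptual rather than computational: it confirms that the entire dual trajectory collapses onto a fixed two-dimensional subspace of $\R^4$, which is the geometric reduction underpinning the energy analysis in Theorem~\ref{thm:ofp-energy-2x2}.
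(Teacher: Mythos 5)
Your proof is correct, and the computations check out: the $s$- and $r$-terms cancel identically, and the $t$-terms match exactly when $ad=bc$, with Assumption~\ref{ass:2x2}(ii) giving the positivity of $\rho_1,\rho_2$. The underlying fact you are using is the same as the paper's --- $\det A = 0$ forces the image of $A$ (resp.\ $-A^\top$) to be a fixed line through the origin --- but the execution differs: the paper identifies the left-null vectors $v_1 = (d-c,\,a-b)$ and $v_2 = (d-b,\,a-c)$ satisfying $A^\top v_1 = 0$ and $Av_2 = 0$, so that $\langle v_1, Ax^k_2\rangle = \langle A^\top v_1, x^k_2 \rangle = 0$ holds \emph{term by term} and the relation $\langle v_1, y^t_1\rangle = 0$ follows by summation, whereas you unroll the recursion and verify the identity in coordinates after parametrizing the cumulative strategy sums. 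One small observation: your parametrization $(s, t-s)$ invokes the simplex normalization of the iterates, but this is not actually needed --- the relation $\langle v_1, Au\rangle = 0$ holds for \emph{every} $u \in \R^2$, which is why the $s$-terms cancel automatically in your calculation. The null-vector formulation makes this transparent and scales more gracefully to higher-dimensional payoff matrices, where explicit coordinate bookkeeping would be unwieldy; your version has the minor merit of exhibiting concretely that $ad = bc$ is precisely the condition that kills the non-subspace component.
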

The proof of Proposition~\ref{prop:subspace} is given in
Section~\ref{app:ofp-regret}.
Importantly, as $\rho_1, \rho_2 > 0$,
observe that 
$y^t_{11} > 0 \iff y^t_{11} > y^t_{12}$, and
$y^t_{21} > 0 \iff y^t_{21} > y^t_{22}$
for all times $t \ge 1$. 
Thus, in the 2$\times$2 setting, the coordinates $y^t_{11}$
and $y^t_{21}$ encode all information needed to analyze
the iterates of Optimistic FP in~\eqref{eq:fp-pd-variants}.

With this in mind, the strategy for proving
the upper bound on the energy $\Psi(y^{T+1})$ is
as follows: first, leveraging the
observations above,
and as in \cite{bailey2019fast},
we restrict our study of the dual iterates
$(y^t_1, y^t_2) \in \R^4$ to the pair of 
scalar iterates $(y^t_{11}, y^t_{21}) \in \R^2$.
For this, we introduce in
Section~\ref{sec:ofp-regret:subspace-dynamics}
a new set of notation to capture this
lower-dimensional \textit{subspace dynamics},
which also naturally leads to the definition of an
\textit{equivalent energy function}. 
For the new, equivalent energy function,
we then prove in
Section~\ref{sec:ofp-regret:bounded-energy}
a set of invariants that allow for establishing
a uniform, constant upper bound on the
energy of the dual iterates over time. 

\subsection{Subspace Dynamics of Optimistic Fictitious Play}
\label{sec:ofp-regret:subspace-dynamics}

We now introduce an equivalent set
of primal and dual iterates
$\{w^t\}$ and $\{z^t\}$ for~\eqref{eq:ofp},
as well as a new, equivalent energy function $\psi$. 
We will establish in 
Proposition~\ref{prop:energy-equiv}
that $\Psi(y^{T+1}) = \psi(z^{t+1})$. 

\paragraph{Primal variables.}
First, by definition of~\eqref{eq:ofp},
both $x^t_1$ and $x^t_2$ are vertices of $\Delta_2$.
Letting
$\calX =
\{(1, 0), (0, 1)\} \times \{(1, 0), (0, 1)\} \subset \R^4$
denote the vertices of the
joint simplex $\Delta_2 \times \Delta_2$,
it follows for each $t \ge 1$ that $x^t \in \calX$. 
We define new primal iterates $w^t \in \R^4$,
where each $w^t$ is a standard basis vector of $\R^4$.
Let $\calW = \{e_1, e_2, e_3, e_4\} \subset \R^4$ denote this set.
Then for $t \ge 1$, let:
\begin{equation}
  w^t
  \;=\;
  \begin{cases}
    \;e_2 \iff x^t=(0, 1, 1, 0)
    \qquad
    e_3 \iff x^t=(1, 0, 1, 0) \\
    \;e_1 \iff x^t=(0, 1, 0, 1)
    \qquad
    e_4 \iff x^t=(1, 0, 0, 1) 
  \end{cases}\;.
  \label{eq:wt}
\end{equation}

\paragraph{Dual variables.}
For each $t \ge 0$, let $z^t_1 = y^t_{11} \in \R$
and $z^t_2 = y^t_{21} \in \R$.
Let $z^t = (z^t_1, z^t_2) \in \R^2$. 

\paragraph{Primal-dual map.}
To aid in the description and analysis of the subspace dynamics,
we describe the following partition of the dual space $\R^2$.
We then describe a corresponding choice map $\Q: \R^2 \to \calX$
that relates the primal and dual variables $\{w^t\}$ and $\{z^t\}$.

\medskip

\begin{definition}
  \label{def:P-sets}
  First, let $\calP = \{P_1, P_2, P_3, P_4\} \subset \R^2$,
  where each $P_i$ is the set
  \begin{equation}
    \begin{aligned}
      P_2 &\;=\;
            \{z \in \R^2: z_1 < 0\;\text{and}\; z_2 > 0 \}\qquad
            P_3 \;=\;
            \{z \in \R^2: z_1 > 0\;\text{and}\; z_2 > 0 \} \\
      P_1 &\;=\;
            \{z \in \R^2: z_1 < 0\;\text{and}\; z_2 < 0 \}\qquad
            P_4 \;=\;
            \{z \in \R^2: z_1 > 0\;\text{and}\; z_2 < 0 \} \;.
    \end{aligned}
    \label{eq:P-sets}
  \end{equation}
  Next, let
  $\widetilde \calP =
  \{P_{1\sim2}, P_{2\sim 3}, P_{3 \sim 4}, P_{4\sim 1}\}
  \subset \R^2$,
  where we define
  \begin{equation}
    \small
    \begin{aligned}
      P_{2\sim 3}
      &\;=\; \{z \in \R^2 : z_1 = 0 \;\text{and}\; z_2 > 0 \}
        \qquad
        P_{3\sim 4}
        \;=\; \{z \in \R^2 : z_1 >  0 \;\text{and}\; z_2 = 0 \} \\
      P_{1\sim 2}
      &\;=\; \{z \in \R^2 : z_1 < 0 \;\text{and}\; z_2 = 0 \}
        \qquad
        P_{4\sim 1}
        \;=\; \{z \in \R^2 : z_1 =  0 \;\text{and}\; z_2 < 0 \} \;.
    \end{aligned}
    \label{eq:P-boundary-sets}
  \end{equation}
  Finally let $\widehat \calP = \cup_{i\in [4]} \widehat P_i$,
  where $\widehat P_i = P_i \cup P_{i \sim (i+1)}$. 
  Observe by definition that $\widehat \calP \cup \{(0, 0)\} = \R^2$. 
\end{definition}

\begin{wrapfigure}{r}{0.5\textwidth}
  \centering
  \vspace*{-1em}                 
  \includegraphics[width=0.35\textwidth]{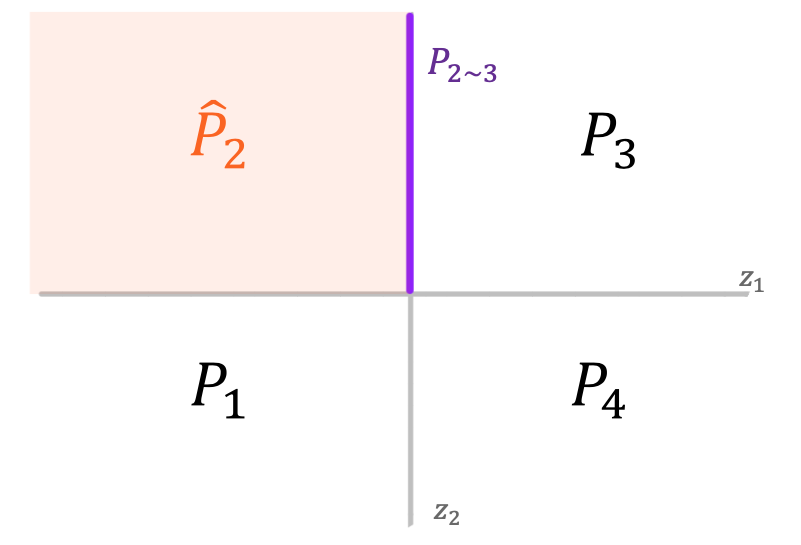}
  \captionof{figure}{\small
  Examples of the sets in
    $\calP$, $\widetilde \calP$, and $\widehat \calP$.}
  \vspace*{-1em}
  \label{fig:P-sets}
\end{wrapfigure}

Note that for notational convenience, when using an index $i \in [4]$
in the context of the sets of Definition~\ref{def:P-sets},
we assume addition and subtraction to $i$ are performed
$(\mod 4)$ in the natural way that maps to the set $\{1, 2, 3, 4\}$.
For example, $P_{i \sim (i+1)}$ is the set $P_{4 \sim 1}$
when $i = 4$, and $P_{i-2} = P_{i+2}$ is the set $P_3$ when $i = 1$, etc.
Figure~\ref{fig:P-sets} depicts the sets
from Definition~\ref{def:P-sets}.

Moreover, Definition~\ref{def:P-sets}
allows for defining a choice map $\Q: \R^2 \to \calW$
that encodes the primal update rule of~\eqref{eq:ofp}.
In particular, $\Q$ maps dual variables $z$
to primal vertices $w$ depending on the
membership of $z$ in $\calP$ or $\widetilde \calP$.
Formally:

\begin{definition}
  \label{def:Q-map}
  Let $\Q: \R^2 \to \calW$ be the map
  defined as follows: first, let $\Q((0, 0)) = e_1$. Then
  \begin{itemize}[
    leftmargin=2em,
    topsep=0em,
    itemsep=0em
    ]
  \item
    For $z \in \calP$: if $z \in P_i$ for $i \in [4]$,
    then $\Q(z) = e_i$. 
  \item
    For $z \in \widetilde \calP$:
    if $z \in P_{i\sim(i+1)}$ for $i \in [4]$,
    then $\Q(z) \in \{e_i, e_{i+1}\}$.
  \end{itemize}
\end{definition}

As in the full update rule of~\eqref{eq:ofp},
we assume  $\Q$ encodes a tiebreaking
rule to ensure $\Q(z)$ returns a single element
from $\calW$. As in Remark~\ref{remark:tiebreak},
we make no assumptions on how such ties are broken.

\paragraph{Primal-dual dynamics.}
Using the definitions of the primal and dual variables
$\{w^t\}$ and $\{z^t\}$ and the choice map $\Q$,
the primal-dual dynamics of~\eqref{eq:ofp} 
can be rewritten as follows:
for all $t \ge 2$, let
\begin{equation}
  \begin{cases}
    \;z^t
    \;=\; z^{t-1} + Sw^{t-1} \\
    \;w^t
    \;=\; \Q\big(z^t + S w^{t-1}\big)
  \end{cases}
  \quad\text{where}\quad
  \setlength\arraycolsep{3pt}
  S \;=\;
  \begin{pmatrix}
    b & a & a & b \\
    -c & -c & -a & -a
  \end{pmatrix} \;.
  \label{eq:subspace-ofp-1}
\end{equation}
Observe that the $i$'th column of $S \in \R^{2 \times 4}$
are the entries $(\Delta y^t_{11}, \Delta y^t_{12})$
when $w^t = e_i$ (cf., the definition of
$w^t$ from expression~\eqref{eq:wt}).
Moreover, expression~\eqref{eq:subspace-ofp-1}
implies $Sw^{t-1} = z^t - z^{t-1}$ for all
$t \ge 2$. Thus, we can further describe the
primal-dual dynamics of~\eqref{eq:subspace-ofp-1}
solely in terms of the sequence of dual variables $\{z^t\}$.
In particular, for all $t \ge 2$, we have
\begin{equation}
  \begin{cases}
    \;\widetilde z^{t+1}
    \;=\;
    z^t + (z^t - z^{t-1}) \\
    \;z^{t+1}
    \;=\;
    z^t + S\Q(\widetilde z^{t+1}) 
  \end{cases}
  \;.
  \tag{OFP Dual}
  \label{eq:ss-dual}
\end{equation}

\noindent\textbf{Energy function.}
Using the new notation and dual variables $\{z^t\}$, 
we define an energy function $\psi: \R^2 \to \R$
over the new dual space $\R^2$:
let $\psi((0,0)) = 0$, and for all other $z \in \R^2$, let
\begin{equation}
  \psi(z)
  \;=\;
  \begin{cases}
    - \rho_1 \cdot z_1 + z_2
    &\text{if $z \in \widehat P_2$,}
      \qquad\quad
      z_1 + z_2\quad\quad\;\;
    \text{if $z \in \widehat P_3$,} \\ 
    - \rho_1 \cdot z_1 - \rho_2 \cdot z_2\quad
    &\text{if $z \in \widehat P_1$,}
      \qquad\quad
    z_1 - \rho_2 \cdot z_2\quad
    \text{if $z \in \widehat P_4$}
  \end{cases} \;.
  \label{eq:energy-ss}
\end{equation}
It is also more expressive to write
$\psi$ using the choice map $\Q$ from
Definition~\ref{def:Q-map}.
Recall for any $z \in \R^2$ that
$\Q(z) \in \calW = \{e_1, e_2, e_3, e_4\}$.
Then we have the following equivalent definition of $\psi$:
\begin{definition}
  \label{def:psi}
  For $z \in \R^2$, the function $\psi: \R^2 \to \R$
  is given by
  \begin{equation}
    \psi(z)
    \;=\;
    \big\langle
    z, M \Q(z)
    \big\rangle
    \qquad\text{where}\quad
    M
    \;=\;
    \begin{pmatrix}
      -\rho_1 & -\rho_1 & 1 & 1 \\
      -\rho_2 & 1 & 1 & -\rho_2 
    \end{pmatrix} \;.
    \label{eq:energy-ss-compact}
  \end{equation}
\end{definition}


Due to the relationship between
the coordinates of $y^t_1$ and $y^t_2$ from
Proposition~\ref{prop:subspace},
the following relationship between $\Psi$ and $\psi$
(and by Proposition~\ref{prop:energy-regret}, between
$\Psi$ and $\reg(T)$) is immediate:
\begin{proposition}
  \label{prop:energy-equiv}
  For all $t \ge 1$, $\Psi(y^t) = \psi(z^t)$.
  Moreover, $\reg(T) = \Psi(y^{T+1})= \psi(z^{T+1})$. 
\end{proposition}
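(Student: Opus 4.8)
The plan is to establish the pointwise identity $\Psi(y^t) = \psi(z^t)$ by a direct computation, after which the ``Moreover'' claim follows by chaining with Proposition~\ref{prop:energy-regret}. The key structural fact I would exploit is that $\Psi$ is the support function of the \emph{product} simplex $\Delta_2 \times \Delta_2$, so the maximization in~\eqref{eq:energy-full} separates across the two players. Since the maximum of a linear functional over $\Delta_2$ is attained at a vertex, this gives
\begin{equation*}
  \Psi(y^t) = \max_{x_1 \in \Delta_2}\langle x_1, y^t_1\rangle + \max_{x_2 \in \Delta_2}\langle x_2, y^t_2\rangle = \max\{y^t_{11}, y^t_{12}\} + \max\{y^t_{21}, y^t_{22}\}.
\end{equation*}

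First I would substitute the subspace relations of Proposition~\ref{prop:subspace}, namely $y^t_{12} = -\rho_1 y^t_{11} = -\rho_1 z^t_1$ and $y^t_{22} = -\rho_2 y^t_{21} = -\rho_2 z^t_2$, to reduce the right-hand side to the purely two-dimensional expression $\max\{z^t_1, -\rho_1 z^t_1\} + \max\{z^t_2, -\rho_2 z^t_2\}$. Because $\rho_1, \rho_2 > 0$, each coordinatewise maximum is piecewise linear with a kink at the origin: $\max\{z_1, -\rho_1 z_1\}$ equals $z_1$ when $z_1 \ge 0$ and $-\rho_1 z_1$ when $z_1 \le 0$, and analogously for the second coordinate. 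I would then check the resulting sum against the four-region definition of $\psi$ in~\eqref{eq:energy-ss} by a case analysis over the sign patterns of $(z_1, z_2)$: on $\widehat P_3$ (both positive) the sum is $z_1 + z_2$; on $\widehat P_2$ it is $-\rho_1 z_1 + z_2$; on $\widehat P_1$ it is $-\rho_1 z_1 - \rho_2 z_2$; and on $\widehat P_4$ it is $z_1 - \rho_2 z_2$. These match the four branches of $\psi$ exactly, and the compact form $\psi(z) = \langle z, M\Q(z)\rangle$ of Definition~\ref{def:psi} is recovered by observing that the $i$-th column of $M$ is precisely the coefficient vector of the branch active on $P_i$, so that $\langle z, Me_i\rangle$ reproduces the corresponding affine piece whenever $\Q(z) = e_i$.

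The one point requiring care — and the main (if modest) obstacle — is the behavior on the boundary sets $\widetilde\calP$, where $\Q$ is genuinely multivalued and the active branch of $\psi$ is selected by the tiebreaking rule. I would argue that $\psi$ is nonetheless well-defined there, independently of tiebreaking: on each boundary ray exactly one coordinate vanishes, so the two admissible columns of $M$ yield the same inner product with $z$. For instance on $P_{2\sim 3}$ (where $z_1 = 0$ and $z_2 > 0$) both $\langle z, Me_2\rangle$ and $\langle z, Me_3\rangle$ equal $z_2$; the same cancellation occurs on the remaining three boundary rays, and the origin is covered by the convention $\psi((0,0)) = 0 = \Psi(0)$. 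This confirms $\Psi(y^t) = \psi(z^t)$ for every $t \ge 1$ regardless of how ties are resolved.

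Finally, the ``Moreover'' statement is immediate: Proposition~\ref{prop:energy-regret} gives $\reg(T) = \Psi(y^{T+1})$, and applying the identity just established at time $t = T+1$ gives $\Psi(y^{T+1}) = \psi(z^{T+1})$, so that $\reg(T) = \Psi(y^{T+1}) = \psi(z^{T+1})$, as claimed.
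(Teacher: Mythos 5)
Your proof is correct and follows exactly the route the paper intends: the paper states this proposition without proof, calling it ``immediate'' from the subspace relations of Proposition~\ref{prop:subspace}, and your argument --- separating the support function over the product simplex, substituting $y^t_{12} = -\rho_1 z^t_1$ and $y^t_{22} = -\rho_2 z^t_2$, and matching the four sign-pattern branches of $\psi$ (with the tiebreak-independence check on the boundary rays) --- is precisely the omitted verification. No gaps.
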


\subsection{Bounded Energy of Subspace Dynamics}
\label{sec:ofp-regret:bounded-energy}

Leveraging Proposition~\ref{prop:energy-equiv},
it suffices to derive an upper bound
on $\psi(z^{T+1})$ in order to prove
Theorem~\ref{thm:ofp-energy-2x2}.
For this, the key step is to prove a set of
invariants on the dual iterates which establish that,
if the magnitude of $\psi$ ever
exceeds some constant threshold,
the subsequent one-step change in $\psi$
is non-increasing
(thus crystalizing the intuition
from expression~\eqref{eq:ofp-energy-step}). 

For this, we first define an absolute constant $B$ as follows:
\begin{restatable}{definition}{defB}
  \label{def:B}
  Fix $A$ satisfying Assumption~\ref{ass:2x2},
  and recall $\amax = \|A\|_\infty$.
  Then define $B > 0$ by
  \begin{equation*}
    B = \min\big\{
    b \in \R_+ :\;
    \text{for all $z \in \R^2$, }
      \|z\|_1 \le 6 \amax \implies \psi(z) \le b
    \big\}
    \;.
  \end{equation*}
\end{restatable}
In words, $B$ is the smallest constant
whose sublevel set $\psi(z) \le B$ contains
an $\ell_1$ ball of radius $6\amax$.
Moreover, the magnitude of $B$ can be bounded
from above as follows
(see Section~\ref{app:ofp-regret} for the proof):

\begin{restatable}{proposition}{propBupper}
  \label{prop:B-upperbound}
  Let $B$ be the constant from Definition~\ref{def:B}.
  Then $B \le 6\amax(1 + \rho_1 + \rho_2)$. 
\end{restatable}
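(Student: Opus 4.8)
The plan is to exhibit $6\amax(1+\rho_1+\rho_2)$ as one admissible value of $b$ in Definition~\ref{def:B}; since $B$ is defined as the minimum over all admissible $b$, this immediately yields the claimed upper bound. Concretely, it suffices to show that $\psi(z) \le 6\amax(1+\rho_1+\rho_2)$ holds for every $z \in \R^2$ with $\|z\|_1 \le 6\amax$, and that the constant $6\amax(1+\rho_1+\rho_2)$ is therefore a valid choice of $b$. Note this framing avoids any appeal to compactness or attainment of a maximum: I only need a pointwise upper bound on $\psi$ over the $\ell_1$-ball.

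The key observation is that $\psi$, as given in~\eqref{eq:energy-ss}, is piecewise linear, and on each region $\widehat P_i$ it takes the form $c_1|z_1| + c_2|z_2|$ for coefficients $c_1, c_2$ drawn from $\{1, \rho_1, \rho_2\}$. I would verify this by a short case analysis over the four regions, using that each $\widehat P_i$ fixes the signs of $z_1$ and $z_2$: on $\widehat P_1$ we have $z_1 \le 0$ and $z_2 \le 0$, so $\psi(z) = -\rho_1 z_1 - \rho_2 z_2 = \rho_1|z_1| + \rho_2|z_2|$; on $\widehat P_2$ we have $z_1 \le 0,\, z_2 \ge 0$, giving $\rho_1|z_1| + |z_2|$; on $\widehat P_3$ both $z_1, z_2 \ge 0$, giving $|z_1| + |z_2|$; and on $\widehat P_4$, $z_1 \ge 0,\, z_2 \le 0$, giving $|z_1| + \rho_2|z_2|$. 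In every case the two coefficients lie in $\{1, \rho_1, \rho_2\}$, so $\psi(z) \le \max\{1, \rho_1, \rho_2\}\,(|z_1| + |z_2|) = \max\{1, \rho_1, \rho_2\}\,\|z\|_1$.

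Finally, bounding $\max\{1, \rho_1, \rho_2\} \le 1 + \rho_1 + \rho_2$ (valid since $\rho_1, \rho_2 > 0$ by Proposition~\ref{prop:subspace}) and using $\|z\|_1 \le 6\amax$ gives $\psi(z) \le 6\amax(1+\rho_1+\rho_2)$, which completes the argument; the origin is handled trivially since $\psi((0,0)) = 0$. I do not expect a serious obstacle here. The only point requiring care is the bookkeeping of signs across the four regions $\widehat P_i$: one must confirm that the definition~\eqref{eq:energy-ss} applies the \emph{same} linear form on each closed region $\widehat P_i = P_i \cup P_{i\sim(i+1)}$, so that the rewriting of each linear piece as $c_1|z_1| + c_2|z_2|$ with nonnegative coefficients is valid including on the boundary segments $P_{i\sim(i+1)}$ where one coordinate vanishes.
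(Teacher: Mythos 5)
Your proposal is correct, and it reaches the bound by a more direct route than the paper. The paper's proof reasons about \emph{where} the level set $\{z : \psi(z) = B\}$ must touch the boundary of the $\ell_1$-ball of radius $6\amax$ --- namely on one of the axis segments $P_{i\sim(i+1)}$ --- and evaluates $\psi$ at such a point to get $B \le 6\amax\max\{1,\rho_1,\rho_2\}$. That argument implicitly uses that $\psi$ restricted to each face of the ball boundary is linear, so its maximum is attained at a corner on an axis; this is true but is the one step that really needs justification. You sidestep it entirely by proving the pointwise inequality $\psi(z) \le \max\{1,\rho_1,\rho_2\}\,\|z\|_1$ on all of $\R^2$ (via the same four-region sign bookkeeping the paper uses for its case analysis), and then observing that $6\amax(1+\rho_1+\rho_2)$ is an admissible value of $b$ in Definition~\ref{def:B}, so the minimum $B$ is at most that. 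Both arguments rest on the same structural fact --- on each $\widehat P_i$ the energy is $c_1|z_1|+c_2|z_2|$ with $c_1,c_2 \in \{1,\rho_1,\rho_2\}$ --- and both lose the same slack in the final step $\max\{1,\rho_1,\rho_2\} \le 1+\rho_1+\rho_2$. What your version buys is that it never needs the maximum of $\psi$ over the ball to be attained or located; what the paper's version buys is a slightly sharper implicit characterization of $B$ itself (as $6\amax\max\{1,\rho_1,\rho_2\}$), though that sharper form is not used downstream. Your sign checks are right on every region (minor nit: on $P_2$ one has $z_2>0$ strictly rather than $z_2\ge 0$, but this is immaterial to the inequality), and your handling of the origin and of the boundary segments is careful and correct.
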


\paragraph{Worst-case upper bound on energy.}
Observe that if $\psi(z^t) \le B$ for all times $t$,
then the statement of Theorem~\ref{thm:ofp-energy-2x2}
trivially holds.
On the other hand, if the energy $\psi$ crosses
this threshold under one step of the dynamics,
then we have the following constant upper bound
on $\psi(z^{t+1})$:

\begin{restatable}{lemma}{lemenergyupperbound}
  \label{lem:energy-upper-bound}
  Suppose $\psi(z^t) \le B$,
  and let $B' = 8\amax(1+\rho_1 + \rho_2)^2$.
  Then $\psi(z^{t+1}) \le B'$
\end{restatable}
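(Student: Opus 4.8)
The plan is to bound $\psi(z^{t+1})$ in a single step by splitting the energy into a ``current'' part controlled by the hypothesis $\psi(z^t) \le B$ and an ``increment'' part controlled by the bounded step size of~\eqref{eq:ss-dual}. Write $w := \Q(\widetilde z^{t+1})$ and $w' := \Q(z^{t+1})$, so that by~\eqref{eq:ss-dual} we have $z^{t+1} = z^t + Sw$. Using the compact form of the energy from Definition~\ref{def:psi}, this yields the exact decomposition $\psi(z^{t+1}) = \langle z^{t+1}, Mw'\rangle = \langle z^t, Mw'\rangle + \langle Sw, Mw'\rangle$, and I would bound the two terms separately.

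First I would establish the support-function representation $\psi(z) = \max_{v \in \calW} \langle z, Mv\rangle$, equivalently that the geometric choice map of Definition~\ref{def:Q-map} satisfies $\Q(z) \in \argmax_{v \in \calW} \langle z, Mv\rangle$ for every $z \in \R^2$. This is the one step that needs care: on each region $\widehat P_i$ one checks that the column $Me_i$ maximizes $\langle z, Mv\rangle$ over the four columns of $M$ from~\eqref{eq:energy-ss-compact}, with the required inequalities reducing to the signs of $z_1, z_2$ together with $\rho_1, \rho_2 > 0$; on the boundary sets $\widetilde\calP$ the two admissible tiebreaking outputs give equal inner products, so the argmax property survives regardless of how ties are broken. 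Granting this, the first term obeys $\langle z^t, Mw'\rangle \le \max_{v \in \calW}\langle z^t, Mv\rangle = \psi(z^t) \le B$, since $w' \in \calW$.

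Next I would bound the increment term crudely by H\"older's inequality, $\langle Sw, Mw'\rangle \le \|Sw\|_1 \, \|Mw'\|_\infty$. Each column of $S$ has two entries that are, in absolute value, among $\{a, b, c\}$ and hence at most $\amax$, so $\|Sw\|_1 \le 2\amax$; each column of $M$ has entries in $\{1, -\rho_1, -\rho_2\}$, so $\|Mw'\|_\infty \le \max\{1, \rho_1, \rho_2\} \le 1 + \rho_1 + \rho_2$. Thus $\langle Sw, Mw'\rangle \le 2\amax(1 + \rho_1 + \rho_2)$. Combining with the first term and invoking $B \le 6\amax(1 + \rho_1 + \rho_2)$ from Proposition~\ref{prop:B-upperbound} gives $\psi(z^{t+1}) \le 8\amax(1 + \rho_1 + \rho_2) \le 8\amax(1 + \rho_1 + \rho_2)^2 = B'$, where the final inequality uses $1 + \rho_1 + \rho_2 \ge 1$.

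The main obstacle is the argmax/support-function identity $\Q(z) \in \argmax_v \langle z, Mv\rangle$; once it is in place the remainder is a one-line estimate, and the constant is even slightly better than the stated $B'$. I would additionally record the sharper structure underlying~\eqref{eq:ofp-energy-step}, namely the Hamiltonian identity $\langle Se_i, Me_i\rangle = 0$ for each $i$ (which follows from $\det A = ad - bc = 0$ under Assumption~\ref{ass:2x2}): when $w = w'$ the increment term vanishes and $\psi$ is in fact non-increasing. This refinement is not needed to obtain the loose constant $B'$, but it is what makes the threshold argument close in the subsequent uniform energy bound.
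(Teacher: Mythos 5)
Your proof is correct, but it takes a genuinely different route from the paper's. The paper argues entirely through the $\ell_1$ norm of the dual iterate: it bounds $\|z^{t+1}\|_1 \le \|z^t\|_1 + 2\amax$, asserts from the definition of $B$ that $\psi(z^t)\le B$ forces $\|z^t\|_1\le B$, and then converts back to energy via the crude estimate $\psi(z)\le (1+\rho_1+\rho_2)\|z\|_1$, which is where the squared factor $(1+\rho_1+\rho_2)^2$ enters. You instead exploit the support-function identity $\psi(z)=\max_{v\in\calW}\langle z, Mv\rangle$ to decompose $\psi(z^{t+1})$ exactly into a term dominated by $\psi(z^t)\le B$ and a bounded increment $\langle S\Q(\widetilde z^{t+1}), M\Q(z^{t+1})\rangle \le 2\amax(1+\rho_1+\rho_2)$. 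Your verification of the argmax property of $\Q$ (sign checks on each $P_i$, equality of the two admissible columns on each boundary set in $\widetilde\calP$) is the right amount of care, and it is a structural fact worth recording since it is the two-dimensional analogue of the convexity of $\Psi$ that the paper uses freely in Section 3 but never states for $\psi$. Two payoffs of your route: you obtain the sharper bound $8\amax(1+\rho_1+\rho_2)$ rather than $8\amax(1+\rho_1+\rho_2)^2$ (both of course imply the stated $B'$ since $1+\rho_1+\rho_2\ge 1$), and you avoid the paper's intermediate claim that the sublevel set $\{\psi\le B\}$ is contained in the $\ell_1$ ball of radius $B$ — a containment that is delicate when $\min\{\rho_1,\rho_2\}<1$, since $\psi$ can undershoot $\|\cdot\|_1$ on some quadrants. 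Your closing remark that $\langle Se_i, Me_i\rangle = 0$ (skew-symmetry of $S^\top M$) drives the non-increase when the predicted and realized vertices agree is accurate and is exactly the mechanism the paper isolates in Lemma~\ref{lem:energy-cases}, though as you note it is not needed for this lemma.
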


\paragraph{Cycling invariants and controlled energy growth.}
The remaining step is to then control the
energy growth whenever $\psi(z^t) > B$.
For this, we prove the following key lemma:

\begin{restatable}{lemma}{leminvariantsenergy}
  \label{lem:invariantsenergy}
  Suppose $\psi(z^t) > B$ and $z^t \in \widehat P_i$
  for $i \in [4]$. Then the following hold:
  \begin{enumerate}[
    label={(\arabic*)},
    leftmargin=3em,
    topsep=0em,
    ]
  \item
    Either
    (i)  $\widetilde z^{t+1}, z^{t+1} \in \widehat P_i$ 
    or (ii) $\widetilde z^{t+1}, z^{t+1} \in P_{i+1}$
  \item
    $\Delta \psi(z^{t}) = \psi(z^{t+1}) - \psi(z^t) \;\le\; 0$.
  \end{enumerate}
\end{restatable}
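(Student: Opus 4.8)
The plan is to prove the geometric containment claim (1) first and then obtain the energy inequality (2) as an algebraic consequence. Two quantitative facts are used throughout. First, by Definition~\ref{def:B} the hypothesis $\psi(z^t) > B$ places $z^t$ outside the $\ell_1$-ball of radius $6\amax$, so $\|z^t\|_1 > 6\amax$. Second, every displacement is small: each column of $S$ has $\ell_1$-norm at most $2\amax$ (its entries lie among $\{a,b,-c,-a\}$, each of absolute value at most $\amax$), so $\|z^{t+1}-z^t\|_1 = \|S\Q(\widetilde z^{t+1})\|_1 \le 2\amax$ and, since $\widetilde z^{t+1} = 2z^t - z^{t-1}$, also $\|\widetilde z^{t+1}-z^t\|_1 = \|z^t - z^{t-1}\|_1 \le 2\amax$. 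I also record that Assumption~\ref{ass:2x2} together with $\det A = 0$ forces $b,c < 0 < a,d$: since $ad = bc > 0$ the entries $b,c$ share a sign, and if both were positive then $b<a$ and $c<d$ would give $bc<ad$, a contradiction.

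For part (1), the engine is that the update columns rotate the iterate consistently forward through the quadrants: a sign check using $b,c<0<a,d$ shows $Se_j$ carries a point of $P_j$ toward $P_{j+1}$ for each $j$ (e.g.\ $Se_1 = (b,-c)$ has negative first and positive second coordinate, pushing $P_1$ into $P_2$). Combined with $\|z^t\|_1 > 6\amax$ against the $2\amax$ step bound, no single step can reach a non-adjacent quadrant, and the forward orientation excludes backward motion into $\widehat P_{i-1}$. Concretely, I would carry an inductive cycling invariant asserting that successive selected vertices advance by at most one index $\pmod 4$, so that $z^t \in \widehat P_i$ with large energy forces $w^{t-1} = \Q(\widetilde z^t) \in \{e_{i-1}, e_i\}$; a short case analysis of $\widetilde z^{t+1} = z^t + Sw^{t-1}$ and of $z^{t+1} = z^t + S\Q(\widetilde z^{t+1})$ then shows both points either remain in $\widehat P_i$ (case (i)) or both enter the next open quadrant $P_{i+1}$ (case (ii)).

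Part (2) rests on two identities. By direct computation each diagonal term $\langle Me_j, Se_j\rangle$ simplifies to a multiple of $\det A$, hence vanishes; and each forward cross term simplifies, using $\det A = 0$, to $\langle Me_1,Se_2\rangle = c-d$, $\langle Me_2,Se_3\rangle = -a(1+\rho_1)$, $\langle Me_3,Se_4\rangle = b-a$, and $\langle Me_4,Se_1\rangle = b-d$, all strictly negative under Assumption~\ref{ass:2x2}. In case (i) all three of $z^t,\widetilde z^{t+1},z^{t+1}$ lie in $\widehat P_i$, where $\psi(\cdot) = \langle \cdot, Me_i\rangle$ is linear; hence $\Delta\psi(z^t) = \langle Me_i, S\Q(\widetilde z^{t+1})\rangle$, and since $\widetilde z^{t+1}\in\widehat P_i$ gives $\Q(\widetilde z^{t+1})\in\{e_i,e_{i+1}\}$, this equals either $0$ or the negative cross term. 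In case (ii), $\widetilde z^{t+1},z^{t+1}\in P_{i+1}$ force $\Q(\widetilde z^{t+1}) = \Q(z^{t+1}) = e_{i+1}$, and convexity of $\psi$ (the subgradient inequality at $z^{t+1}$ with subgradient $Me_{i+1}$) gives $\Delta\psi(z^t) \le \langle Me_{i+1}, z^{t+1}-z^t\rangle = \langle Me_{i+1}, Se_{i+1}\rangle = 0$. In all cases $\Delta\psi(z^t)\le 0$.

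The hard part will be part (1), and within it the exclusion of backward motion. The far-from-origin/small-step comparison cleanly forbids jumps across non-adjacent quadrants, but establishing that the iterate only ever advances requires maintaining the cycling invariant on the chosen vertices, which is where the delicate inductive bookkeeping lives. The boundary cases, where $\widetilde z^{t+1}$ or $z^{t+1}$ sits exactly on a half-axis $P_{i\sim(i+1)}$ and the tiebreaking rule may select either adjacent vertex, must also be tracked carefully; the sign computations above confirm that even these choices leave $\psi$ non-increasing.
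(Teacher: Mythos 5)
Your part (2) is essentially correct and, granting part (1), is a slightly cleaner route than the paper's. The paper decomposes $\Delta\psi(z^t)$ into the two terms of~\eqref{eq:v0-delta-psi} and argues via skew-symmetry of $S^\top M$ plus a case analysis on the columns of $M$; you instead observe that $\psi$ restricted to $\widehat P_i$ is the linear function $\langle\cdot, Me_i\rangle$ (which collapses the paper's boundary sub-cases in case (i)) and that $\psi = \max_j \langle \cdot, Me_j\rangle$ is convex (which handles case (ii) by one subgradient inequality). Your sign computations check out: the diagonal terms $\langle Me_j, Se_j\rangle$ vanish using $\det A = 0$, and the forward cross terms equal $c-d$, $c-a$, $b-a$, $b-d$, all negative under Assumption~\ref{ass:2x2}.

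Part (1), however, has a genuine gap, and it is precisely where the paper's technical work lives. The distance argument ($\|z^t\|_1 > 6\amax$ versus a step of $\ell_1$-length at most $2\amax$) only excludes the \emph{opposite} quadrant $\widehat P_{i+2}$ (this is the paper's Proposition~\ref{prop:B-2step}); it says nothing about the backward-adjacent region $\widehat P_{i-1}$, which shares a boundary with $\widehat P_i$ and can be reached by an arbitrarily short step. You propose to exclude backward motion via an ``inductive cycling invariant'' asserting $w^{t-1} = \Q(\widetilde z^t) \in \{e_{i-1}, e_i\}$, but you neither prove it nor supply a base case, and as stated it cannot be obtained by induction on the lemma's hypotheses: the hypothesis $\psi(z^t) > B$ concerns time $t$ only, and the trajectory may have just crossed the threshold from the low-energy regime where no such invariant holds (cf.\ the right panel of Figure~\ref{fig:ofp-subspace}, and note that $\psi(z^{t-1}) \le B$ does not force $\|z^{t-1}\|_1$ to be small, since the sublevel set of $\psi$ is not an $\ell_1$ ball). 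The paper instead derives the constraint on the preceding displacement from the time-$t$ hypothesis itself, by observing that a backward move would require $z^t - z^{t-1}$ to be a specific column $S_{i+1}$ or $S_{i+2}$, which in turn forces $\widetilde z^t$ into a region excluded by the \emph{two-step} version of Proposition~\ref{prop:B-2step} (this is why that proposition allows displacements $S_j + S_k$). Moreover, part (1) requires not just region exclusion but \emph{co-location}: that $\widetilde z^{t+1}$ and $z^{t+1}$ land in the same region, including on the boundary sets $P_{i\sim(i+1)}$ under arbitrary tiebreaking; this is the content of the paper's Propositions~\ref{prop:invariant-a} and~\ref{prop:invariant-b} and is absent from your sketch. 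Until the cycling invariant and the co-location step are actually carried out, the argument for part (1) — and hence for the lemma — is incomplete.
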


Part (1) of the lemma establishes invariants
relating the true payoff vectors and predicted payoff vectors
whenever energy is above the threshold $B$.
Roughly speaking, when $\psi$ is larger than $B$,
the dual vectors cycle consecutively
through the regions $\widehat P_1, \dots, \widehat P_4$
(similarly to the iterates of standard Fictitious Play),
and this roughly implies that
$\Q(\widetilde z^{t+1}) = \Q(z^{t+1})$.

Importantly, this alignment between $z^{t+1}$
and $\widetilde z^{t+1}$ is the key
step needed to establish a non-increasing change in energy,
as stated in part (2).
To see why this is true, observe that
using the definition of $\psi$ from~\eqref{eq:energy-ss-compact},
we can compute the one-step change
$\Delta \psi(z^t) = \psi(z^{t+1}) - \psi(z^t)$
under~\eqref{eq:ss-dual} as
\begin{align}
  \Delta \psi(z^t)
  &\;=\;
    \big\langle
    z^{t+1}, M\Q(z^{t+1})
    \big\rangle
    -
    \big\langle
    z^t, M\Q(z^t) \big\rangle
    \label{eq:v0-delta-psi-0}\\
  &\;=\;
    \big\langle
    z^t + S\Q(\widetilde z^{t+1}),
    M\Q(z^{t+1})
    \big\rangle
    -
    \big\langle
    z^t, M\Q(z^t) \big\rangle \\
  &\;=\;
    \underbrace{
    \big\langle
    z^t, M\big(\Q(z^{t+1}) - \Q(z^t)\big)
    \big\rangle}_\text{(a)}
    +
    \underbrace{
    \big\langle
    \Q(\widetilde z^{t+1}), S^\top M \Q(z^{t+1})
    \big\rangle}_\text{(b)}
    \;.
    \label{eq:v0-delta-psi}
\end{align}
Here, \eqref{eq:v0-delta-psi}
essentially encodes the expression for $\Delta \Psi$
from~\eqref{eq:ofp-energy-step}. 
In particular, straightforward calculations show that
the matrix  $S^\top M$ is skew-symmetric,
and as Part (1) of the lemma roughly implies
$\Q(z^{t+1}) = \Q(\widetilde z^{t+1})$,
term (b) of~\eqref{eq:v0-delta-psi} vanishes.
Together with the column structure of $M$, the invariants
of part (1) imply that 
part (a) of~\eqref{eq:v0-delta-psi}
is non-positive, and thus overall $\Delta \psi(z^t) \le 0$.

The full proofs of the preceding lemmas
are developed in Section~\ref{app:ofp-regret}
and account more carefully for boundary conditions
and tiebreaking.
Figure~\ref{fig:ofp-subspace} of 
Section~\ref{app:ofp-regret:main-lemma}
also gives more visual intuition for the 
invariants and energy-growth behavior of Lemma~\ref{lem:invariantsenergy}.
Granting these lemmas as true for now,
we then give the proof of Theorem~\ref{thm:ofp-energy-2x2}:

\begin{proof}
  [\textnormal{\bf Proof of Theorem~\ref{thm:ofp-energy-2x2}}]
  Suppose for $t > 0$ that 
  $\psi(z^{t-1}) \le B$ and $\psi(z^{t}) > B$.
  By Lemma~\ref{lem:energy-upper-bound},
  we must have $\psi(z^t) \le 8\amax(1+\rho_1+\rho_2)^2$.
  Moreover, Lemmas~\ref{lem:invariants}
  and~\ref{lem:energy-cases} together imply that
  $\Delta \psi(z^t) \le 0$, and thus
  also $\psi(z^{t+1}) \le 8\amax(1+\rho_1+\rho_2)^2$.
  It follows inductively that
  $\psi(z^{T+1}) \le 8\amax(1+\rho_1+\rho_2)^2$.
  By definition, $\rho_1, \rho_2 \le (\amax/\agap)$,
  and thus we conclude
  \begin{equation*}
    \psi(z^{T+1})
    = \Psi(y^{T+1}) 
    \le 8\amax(1+2(\amax/\agap))^2 \;. \qedhere
\end{equation*} 
\end{proof}


\section{Discussion and Conclusion}
\label{sec:conclusion}

In this work, we established for the first time
that the \textit{unregularized}
Optimistic Fictitious Play algorithm
can obtain \textit{constant} $O(1)$
regret in two-player zero-sum games.
Our proof technique leverages
a geometric viewpoint of Fictitious Play
algorithms, and we believe the techniques established
for the 2$\times$2 regime can be extended
to higher dimensions.

\paragraph{Additional
experimental results.}
To this end, in Table~\ref{table:comparison-01-lex}
we present additional experimental evidence
indicating that constant 
regret bounds for Optimistic FP
(similar to Theorem~\ref{thm:ofp-reg})
hold more generally in higher-dimensional settings. 
The table shows the empirical regret of
Optimistic FP and standard FP
(using lexicographical tiebreaking)
on three classes of zero-sum games,
in three higher dimensional settings.
For each setting, the algorithms were run
from 100 random initializations,
each for $T=10000$ iterations, and we report the
average regret over all initializations.
\begin{table}[h!]
  \centering
  \small
  \renewcommand{\arraystretch}{1.2}
  \setlength{\tabcolsep}{0.5em}
\begin{tabular}{c|| c c | c c | c c }
\textit{Dimension}:       & \multicolumn{2}{c | }{\textbf{15$\times$15}} & \multicolumn{2}{c| }{\textbf{25$\times$25}}   & \multicolumn{2}{c}{\textbf{50$\times$50}}   \\ 
\hline
\textit{Payoff Matrix} $\downarrow$     & \textbf{FP}           & \textbf{OFP}        & \textbf{FP}           & \textbf{OFP}         & \textbf{FP}          & \textbf{OFP}         \\
\hline 
\textbf{Identity}          & 155.1 $\pm$ 3.9 & 8.1 $\pm$ 1.6 & 161.3 $\pm$ 3.1 & 12.5 $\pm$ 1.7 & 167.2 $\pm$ 2.5 & 25.2 $\pm$ 2.1 \\
\textbf{RPS}               & 235.6 $\pm$ 7.6 & 2.9 $\pm$ 0.5 & 242.2 $\pm$ 6.3 & 2.9 $\pm$ 0.9  & 242.7 $\pm$ 5.9 & 2.5 $\pm$ 0.8  \\
\textbf{Random {[}0, 1{]}} & 116.2 $\pm$ 5.8 & 4.3 $\pm$ 0.8 & 118.6 $\pm$ 5.7 & 5.7 $\pm$ 0.9  & 177.0 $\pm$ 6.5 & 13.0 $\pm$ 1.5
\end{tabular}
\vspace*{0.5em}
\caption{\small
Empirical regret of FP and OFP using lexicographical tiebreaking.
Each entry reports an average and standard deviation
(over 100 random initializations) of total regret after $T=10000$ steps.}
\vspace*{-1em}
\label{table:comparison-01-lex}
\end{table}

The results indicate that, in each class of payoff matrix
and in each dimension, Opimistic FP has 
only constant regret compared to the regret of roughly
$\sqrt{T} \approx 100$ obtained by standard FP.
In Table~\ref{table:comparison-02-rand} of
Section~\ref{app:experiments}, 
we also report results using \textit{randomized tiebreaking}
for both algorithms and find similar conclusions,
thus highlighting the robustness of the constant regret
of OFP to tiebreaking rules. 
In Section~\ref{app:experiments}, we give more
details on the experimental setup
and additional plots similar to
Figure~\ref{fig:regret-comp-intro}.

\noindent\textbf{Limitations.}
Formally proving whether Optimistic Fictitious Play
obtains constant regret in all zero-sum games remains 
an important open question.

\noindent\textbf{Broader impact.}
We acknowledge that there are many potential societal 
consequences of our theoretical results,
however none of which we feel must be specifically highlighted.

\paragraph{Acknowledgements.}
JL and RS were supported by the 
MOE Tier 2 Grant (MOE-T2EP20223-0018), 
the National Research Foundation, Singapore, 
under its QEP2.0 programme (NRF2021-QEP2-02-P05),
the National Research Foundation Singapore and
DSO National Laboratories under the 
AI Singapore Programme (Award Number: AISG2-RP-2020-016),
and SS was supported by the Villum Young Investigator Award
(Grant no. 72091).
The authors thank Anas Barakat and
Andre Wibisono for helpful discussions.


\bibliographystyle{apalike}
\bibliography{references}

\newpage
\appendix
\setcounter{tocdepth}{2}
\renewcommand*\contentsname{Table of Contents}
\renewcommand{\baselinestretch}{0.5}\normalsize
\tableofcontents
\renewcommand{\baselinestretch}{1.0}\normalsize
\clearpage


\section{Details on Regret, Energy, and Fictitious Play}
\label{app:prelims}

\subsection{Zero-Sum Games and Convergence to Nash Equilibrium}

\propregretnash*

\begin{proof}
By definition of $\reg(T)$ from~\eqref{eq:regret}, 
we have that
\begin{align*}
    \frac{\reg(T)}{T}
    \;=\;
    \max_{x \in \Delta_m}\;
    \big\langle x, A \Big(\frac{1}{T}\sum\nolimits_{t=0}^T x^t_2\Big) \big\rangle
    -
    \min_{x \in \Delta_n}\;
    \big\langle x, A^\top \Big(\frac{1}{T}\sum\nolimits_{t=0}^T x^t_1\Big)\big\rangle 
    \;=\;
    \DG(\widetilde x^T_1, \widetilde x^T_2) \;,
\end{align*}
where we use the definitions of $\widetilde x^T_1$ and $\widetilde x^t_2$, 
and of the duality gap $\DG(\cdot, \cdot)$.
Thus if $\reg(T) = o(\sqrt{T})$, then
$\DG(\widetilde x^T_1, \widetilde x^T_2) = \frac{\reg(T)}{T} = o(1)$. 
\end{proof}

\subsection{Proof of Proposition~\ref{prop:energy-regret}}

In this section, we prove Proposition~\ref{prop:energy-regret},
which shows the equivalence between energy and regret.
Restated here:

\propenergyregret*

\begin{proof}
  For convenience, we let $d = m+n$, and we write
  $\calX = \Delta_m \times \Delta_n$.
  Then recall from Definition~\ref{def:Psi} that 
  for all $y \in \R^d$, the energy function $\Psi : \R^d \to \R$ 
  is given by
  \begin{equation}
    \Psi(y)
    \;=\;
    \max_{x = (x_1, x_2) \in \calX}\;
    \langle x, y \rangle
    \label{eq:energy}
  \end{equation}
  for $y = (y_1, y_2) \in \R^d$.
  Using the definitions of regret from
  from~\eqref{eq:regret} and of the dual variables
  from Definition~\ref{def:ofp-pd}, we have
  \begin{align*}
    \reg(T)
    &\;=\;
      \max_{x_1 \in \Delta_m}\;
      \Big\langle x_1, \sum_{t=1}^T Ax^t_2 \Big\rangle
      -
      \min_{x_1 \in \Delta_n}\;
      \Big\langle x_2, \sum_{t=1}^T A^\top x^t_1\Big\rangle \\
    &\;=\;
      \max_{x_1 \in \Delta_m}\;
      \big\langle x_1, y^{T+1}_1\big\rangle +
      \max_{x_2 \in \Delta_n}\;
      \big\langle x_2, y^{T+1}_2\big\rangle \\
    &\;=\;
      \max_{x = (x_1,x_2) \in \calX} \; \langle x, y^{T+1} \rangle \\
    &\;=\; \Psi\big(y^{T+1}\big) \;. \qedhere
  \end{align*}
\end{proof}

\subsection{Details on Fictitious Play Variants as Skew-Gradient Descent}
\label{app:prelims:energy:sgd}

In this section, we give more details on the 
geometric viewpoint of Optimistic FP and standard FP 
introduced in Section~\ref{sec:main-result}. 

\paragraph{Dual dynamics of fictitious play variants.}
First, we recall that for a convex function $H: \R^d \to \R$
that its subgradient set at $y \in \R^d$ is defined as
\begin{equation}
    \partial H(y) 
    \;=\;
    \big\{ g \in \R^d : \forall z \in \R^d, 
    H(z) \ge H(y) + \langle g, z -y \rangle \} \;.
    \label{eq:subgradient}
\end{equation}
Let $d = m+ n$. Then for the energy function $\Psi$ 
from Definition~\ref{def:Psi}, it follows that,
for any $y \in \R^d$, the subgradient set 
$\partial\Psi(y)$ is the set of maximizers
$\partial \Psi(y) =
\argmax_{x \in \Delta_m \times \Delta_n} \langle x, y \rangle$.
The next proposition (originally stated in Section~\ref{sec:main-result})
then follows by (1) using the definition of 
standard ($\alpha=0$) and Optimistic FP ($\alpha=1$) 
from~\eqref{eq:alpha-ofp}, 
and (2) by the definition of the dual payoff vectors. 

\propskewgradient*

\paragraph{One-step energy growth of FP variants.}
Using Proposition~\ref{prop:skew-gradient},
we can then derive the bounds on the one-step energy 
growth under FP and Optimistic FP,
as stated in expressions~\eqref{eq:fp-energy-step} and 
\eqref{eq:ofp-energy-step}.

For standard FP, using the convexity of $\Psi$ 
and Jensen's inequality (equivalently, the subgradient 
definition from~\eqref{eq:subgradient}), 
and letting $\partial \Psi(y)$ denote a fixed vector 
in the subgradient set of $\Psi$ at $y$, we have
for all $t \ge 1$
\begin{align*}
    \Delta \Psi(y^t) 
    \;=\;
    \Psi(y^{t+1}) - \Psi(y^t) 
    &\;\ge\;
    \big\langle
    \partial \Psi(y^t), 
    y^{t+1} - y^t
    \big\rangle \\
    &\;=\;
    \big\langle
    \partial \Psi(y^t), 
    J \partial \Psi(\widetilde y^{t+1}) 
    \big\rangle
    \;=\;
    \big\langle
    \partial \Psi(y^t), 
    J \partial \Psi(y^t) 
    \big\rangle
    \;=\;
    0 \;.
\end{align*}
Here, the first two equalities follow by the inductive update rule
for FP from Proposition~\ref{prop:skew-gradient}, 
and the final equality follows by skew-symmetry of $J=-J^\top$
(since $\langle y, Jy\rangle = 0$ for all $y \in \R^d$). 

For Optimistic FP, again using the subgradient definition 
of expression~\eqref{eq:subgradient}, we have for $t \ge 1$:
\begin{align*}
    \Delta \Psi(y^t) 
    \;=\;
    \Psi(y^{t+1}) - \Psi(y^t) 
    &\;\le\;
    \big\langle
    \partial \Psi(y^{t+1}), 
    y^{t+1} - y^t
    \big\rangle \\
    &\;=\;
    \big\langle
    \partial \Psi(y^{t+1}), 
    J \partial \Psi(\widetilde y^{t+1}) 
    \big\rangle \;,
\end{align*}
where the equality uses the update rule from 
Proposition~\ref{prop:skew-gradient} for Optimistic FP. 
Thus, by the skew-symmetry of $J$, 
and as explained in Section~\ref{sec:main-result}, 
the energy growth $\Delta \Psi(y^t)$ for Optimistic FP
is non-positive whenever 
\begin{equation*}
\partial \Psi(y^{t+1}) 
 \;=\; \partial \Psi(\widetilde y^{t+1})
  \;=\; \partial \Psi(2y^{t} - y^{t-1}) \;.
\end{equation*}



\section{Assumptions on Payoff Matrix}
\label{app:wlog}

Recall from Section~\ref{sec:ofp-regret} that
in the proof of Theorem~\ref{thm:ofp-energy-2x2} for the 
2$\times$2 setting, we make the following assumption on the entries of the
payoff matrix:

\assA*

In this section, we give the
proofs of Proposition~\ref{prop:A-ass-wlog}
and Proposition~\ref{prop:subspace}.
Proposition~\ref{prop:A-ass-wlog} establishes
that the conditions of Assumption~\ref{ass:2x2} 
hold without loss of generality,
and Proposition~\ref{prop:subspace}
derives the resulting subspace invariance property
of the payoff vectors under $A$. 

\subsection{Proof of Proposition~\ref{prop:A-ass-wlog}}

We restate the proposition here: 

\propAwlog*

The proof of Proposition~\ref{prop:A-ass-wlog}
follows from the arguments in 
\citet[Appendix D]{bailey2019fast}.
For completeness, we re-derive the full proof here.

\begin{proof}
  Fix $A$, and let $(x^*_1, x^*_2) \in \Delta_2 \times \Delta_2$
  denote its unique, interior equilibrium.
  Then the coordinates of $x^*_1$ and $x^*_2$ are given by
  \begin{equation}
      x^*_1 =
      \bigg(
      \frac{d-c}{a+d-(b+c)},
      \frac{a-b}{a+d-(b+c)}
      \bigg)
      \quad
      x^*_2 =
      \bigg(
      \frac{d-b}{a+d-(b+c)},
      \frac{a-c}{a+d-(b+c)}
      \bigg)\,.
    \label{eq:A-nash}
  \end{equation}
  
  Suppose that $A$ does not satisfy the conditions
  of Assumption~\ref{ass:2x2}.
  We will then construct $\widetilde A \in \R^{2\times 2}$ 
  that both satisfies the assumption, and such that the 
  two claims of the proposition statement hold.

  For this, suppose that the entries of $A$
  are shifted by the same additive constant $c$,
  and define the best responses $v$ and $v'$
  \begin{align}
    v &:= \argmax_{ e_i \in \{e_1, e_2\}}
        \big \langle
        e_i, Ax
        \big\rangle \\
    v' &:= \argmax_{ e_i \in \{e_1, e_2\}}
        \big \langle
        e_i, (A + c\1)x 
        \big\rangle
        =
        \argmax_{ e_i \in \{e_1, e_2\}}
        \big \langle
        e_i, Ax 
        \big\rangle
        + c \;,
  \end{align}
  where $x \in\Delta_2$ and $\1 \in \R^{2\times 2}$
  is the matrix of all ones.
  Thus for a fixed sequence of tiebreaking rules
  (e.g., the same adversarially-chosen tiebreak direction
  that is applied to determine
  $v$ is also applied to determine $v'$),
  it follows that the primal iterates of running
  \eqref{eq:ofp} on $A$ will be identical to
  those of running~\eqref{eq:ofp} on $(A+c\1)$
  (and note the same argument holds for
  any FTRL algorithm or variant, including
  standard Fictitious Play and Alternating Fictitious Play). 

  Now suppose $\det A \neq 0$. Let $\widetilde A$
  be the matrix
  \begin{equation*}
    \widetilde A
    = A - \Big(\frac{\det A}{a+d-(b+c)}\Big)\cdot \1\;.
  \end{equation*}
  By straightforward calculations, it follows
  that $\det \widetilde A = 0$.
  Moreover, by~\eqref{eq:A-nash}, $\widetilde A$
  has the same unique interior Nash equilibrium as $A$,
  and by the arguments above,
  the iterates of running~\eqref{eq:ofp} on $\widetilde A$
  are equivalent to those on $A$.
  Thus without loss of generality, we assume $\det A = 0$. 

  We now establish that we can
  assume $a > \max \{0, b, c\}$  without loss of generality.
  First, we show $a \neq 0$ holds:
  by the assumption that $\det A = ad - bc = 0$,
  if $a =0$, then $bc =0$.
  However, by~\eqref{eq:A-nash} and the
  assumption that $(x^*_1, x^*_2)$ is interior,
  we must have $a-c \neq 0 \implies c \neq 0$,
  which implies $b = 0$. 
  This violates the constraint from~\eqref{eq:A-nash}
  that $a-b = b \neq 0$, and thus without loss of
  generality, $a \neq 0$.
  To show without loss of generality that also $a > 0$,
  observe that the bilinear objective of the zero-sum game 
  is given by
  \begin{equation*}
    \max_{x_1 \in \Delta_2}
    \min_{x_2 \in \Delta_2}
    \langle x_1, Ax_2 \rangle
    =
    \max_{x_1 \in \Delta_2}
    \min_{x_2 \in \Delta_2}
    - \langle x_1, - Ax_2 \rangle
    =
    - \max_{x_2 \in \Delta_2}
    \min_{x_1 \in \Delta_2}
    \langle x_2, -A^\top x_1 \rangle
    \;.
  \end{equation*}
  Thus, by switching the maximization
  or minimization role between the players
  (via scaling the matrix by -1),
  we may assume that $a > 0$.
  Finally, to show $a > \max\{b, c\}$ holds
  without loss of generality, observe
  from~\eqref{eq:A-nash} that
  if $a+d-(b+c) > 0$, then
  the interior Nash condition in~\eqref{eq:A-nash} implies
  $a > c$ and $a > b$.
  If instead $a+d - (b+c) < 0$, then
  $0 < a < \min\{b, c\}$, 
  and we can then rewrite the bilinear objective of
  the zero-sum game using a new payoff matrix with
  relabeled strategies (i.e., permuting the columns of $A$), as
  \begin{equation}
    \max_{x_1 \in \Delta_2}
    \min_{x_2 \in \Delta_2}
    \langle x_1, Ax_2 \rangle
    =
    \max_{x_1 \in \Delta_2}
    \min_{x_2 \in \Delta_2}
    \langle x_1, A' x_2 \rangle
    \quad
    \text{where}\;
    A' =
    \begin{pmatrix}
      b & a \\
      d & c 
    \end{pmatrix}\;.
  \end{equation}
  Under $A'$, we have $b+c- (a+d) > 0$,
  which from~\eqref{eq:A-nash} and the reasoning
  above implies $b > \max\{a, d\} > 0$.
  As a consequence, by possibly permuting
  the columns of $A$ and relabeling
  the strategies of Player 1, we can assume in either
  case that $a > \max\{b, c\}$.
  Together, we conclude that the assumption
  $a > \max\{0, b, c\}$ holds without loss of generality.

  Similarly, it follows that we may also assume
  $d > \max\{0, b, c\}$ without loss of generality.
  Specifically, using the relabeling argument above,
  we may assume $a + d - (b+c) > 0$. 
  Then under the unique interior Nash 
  and $\det A = 0$ assumptions, it follows
  from~\eqref{eq:A-nash} (using similar arguments
  as for $a \neq 0$) that $d \neq 0$ and $d > \max \{b, c\}$.
  Moreover, as $a > 0$, if also $d < 0$, then
  this implies $c, b < 0$, meaning
  $\det A = ab - cd < 0$, contradicting the 
  assumption that $\det A = 0$.
  Thus also $d > 0$, and we conclude
  that the assumption $d > \max \{0, b, c\}$
  holds without loss of generality. 
\end{proof}

\subsection{Proof of Proposition~\ref{prop:subspace}}

We restate the proposition here for convenience:

\propsubspace*

\begin{proof}
  For player 1, let $v_1 = (d-c, a-b)$. Then observe that 
  \begin{equation*}
    A^\top v_1 
    \;=\;
    \begin{pmatrix}
        a & c \\
        b & d 
    \end{pmatrix}
    \begin{pmatrix}
        d-c \\
        a-b
    \end{pmatrix}
    \;=\;
    \begin{pmatrix}
        ad - ac + ac - bc \\
        bd - bc + ad - bd 
    \end{pmatrix}
    \;=\;
    \begin{pmatrix}
        0 \\
        0 
    \end{pmatrix} \;,
  \end{equation*}
  where the final equality follows from the 
  assumption that $\det A = ab - cd = 0$. 
  Then for any $x \in \Delta_2$, we have
  \begin{equation*}
    0 \;=\; \langle x, A^\top v_1 \rangle
    \;=\;
    \langle v_1, A x \rangle. 
  \end{equation*}
  As $y^t_1 = \sum_{k=1}^{t-1} A x^k_2$,
  this implies
  \begin{equation*}
    \langle v_1, y^t_1 \rangle
    \;=\;
    \sum_{k=1}^{t-1}
    \langle v_1, A x^k_2 \rangle
    \;=\;
    0 \;.
  \end{equation*}
  Thus for all $t$, we have
  $
  \langle v_1, y^t_1 \rangle
   =
   (d-c)\cdot y^t_{11} + (a-b)\cdot y^t_{12} = 0$ .
  Rearranging, and recalling that $\rho_1 := (d-c)/(a-b) > 0$
  (where the inequality follows by Assumption~\ref{ass:2x2}),
  we find $y^t_{12} = - \rho_1 \cdot y^t_{11}$.
  
  For the second player, let $v_2 = (d-b, a-c)$.
  Using a similar argument and calculation, we have 
  $A v_2 = 0 \in \R^2$ and thus 
  \begin{equation*}
    \langle v_2, y^t_2 \rangle
    \;=\;
    \sum_{k=1}^{t-1}
    \langle v_2, -A^\top x^k_1 \rangle
    \;=\;
    \sum_{k=1}^{t-1}
    \langle x^k_1, -A v_2 \rangle
    \;=\;
    0 \;.
  \end{equation*}
  For all $t$, it then follows that
  $\langle v_2, y^t_2 \rangle
  =
  (d -b)\cdot y^t_{21} + (a-c)\cdot y^t_{22} = 0$,
  meaning $y^t_{22} = -\rho_2  \cdot y^t_{21}$. 
\end{proof}



\section{Proofs for Optimistic Fictitious Play Regret Upper Bound}
\label{app:ofp-regret}

In this section, we develop the omitted proofs from 
Section~\ref{sec:ofp-regret} that are needed to establish
the main technical result of Theorem~\ref{thm:ofp-energy-2x2}
(showing Optimistic FP has bounded energy in 2$\times$2 games). 

\subsection{Properties of the Energy Threshold $B$}
\label{app:ofp-regret:B}

In this section, we prove several properties
related to the threshold $B$ that is used 
in the proof of Theorem~\ref{thm:ofp-energy-2x2}:
Recall that $B$ is defined as follows:

\defB*

First, we prove the following upper bound on
the magnitude of $B$ with respect to the
constants $\amax, \rho_1$, and $\rho_2$:

\propBupper*

\begin{proof}
  By definition of $B$, the level set
  $\calL = \{z \in R^2 : \psi(z) = B\}$ must intersect
  the boundary of the ball $\calB = \{z \in \R^2 : \|z\|_1 \le B\}$
  on at least one of the boundaries $P_{i\sim(i+1)}$
  in the set $\widetilde \calP$ from Definition~\ref{def:P-sets}.
  Let $\calI = \widetilde \calP \cap \calB \cap \calL$ be
  the intersection of these three sets.
  Using the definition of $\psi$ from \eqref{eq:energy-ss}
  it follows that for $z \in \calI$
  \begin{equation*}
    B \;=\; \psi(z) \;=\;
    \begin{cases}
      \; \rho_1 \cdot |z_1|
      = \rho_1 \cdot 6 \amax
      &\text{if $z \in P_{1\sim 2}$} \\
      \; |z_2| = 6 \amax 
      &\text{if $z \in P_{2\sim 3}$} \\
      \; |z_1| = 6 \amax
      &\text{if $z \in P_{3\sim 4}$} \\
      \; \rho_2 \cdot |z_2|
      = \rho_2 \cdot 6\amax 
      &\text{if $z \in P_{4\sim 1}$} 
    \end{cases} \;,
  \end{equation*}
  where in each case the equality comes from the
  fact that if $z \in \calI$ then $\|z\|_1 = 6\amax$.
  It follows that
  \begin{equation*}
    B
    \;\le\;
    6\amax \cdot 
    \max \big\{
    1, \rho_1, \rho_2
    \big\}
    \;\le\;
    6\amax (1 + \rho_1 + \rho_2) \;,
  \end{equation*}
  where the final inequality comes from the fact
  that $\rho_1, \rho_2 > 0$. 
\end{proof}

Next, we establish the following invariant:

\begin{proposition}
  \label{prop:B-2step}
  Let $B$ be the constant from Definition~\ref{def:P-sets}.
  Suppose $\psi(z) > B$ and suppose $z \in \widehat P_i \cup P_{(i-1)~i}$
  for some $i \in [4]$.
  Assume either $\widetilde z = z + S_j$
  or $\widetilde z = z + S_j + S_k$
  for $j, k \in [4]$ and $S$ as in~\eqref{eq:ss-dual}.
  Then
  \begin{equation}
    \widetilde z \notin
    \widehat P_{i+2}
    \cup
    P_{(i+1)\sim(i+2)}
    \;.
  \end{equation}
\end{proposition}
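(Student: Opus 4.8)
The plan is to reduce the claim to a simple $\ell_1$-geometric fact: once $\psi(z)$ exceeds $B$, the point $z$ is so far from the origin that adding one or two columns of $S$ cannot carry it across into the diametrically opposite (closed) quadrant.

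First I would record the two quantitative ingredients. From Definition~\ref{def:B}, the sublevel set $\{\psi \le B\}$ contains the $\ell_1$-ball of radius $6\amax$; taking the contrapositive of its defining property yields the key magnitude bound
\[
  \psi(z) > B \;\implies\; \|z\|_1 > 6\amax .
\]
Second, I would bound the displacement: inspecting $S$ from~\eqref{eq:ss-dual}, every column $S_j$ has entries drawn from $\{\pm a, \pm b, \pm c\}$, each of absolute value at most $\amax = \|A\|_\infty$, so $\|S_j\|_1 \le 2\amax$. Hence $\widetilde z - z$ satisfies $\|\widetilde z - z\|_1 \le 2\amax$ in the one-column case and, by the triangle inequality, $\|\widetilde z - z\|_1 \le 4\amax$ in the two-column case. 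In either case the displacement has $\ell_1$-norm strictly less than $6\amax$.

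Next I would unwind the region notation. The hypothesis $z \in \widehat P_i \cup P_{(i-1)\sim i}$ is exactly the closed quadrant $i$, i.e. there is a fixed sign pattern $(\sigma_1,\sigma_2) \in \{\pm1\}^2$ with $\sigma_1 z_1 \ge 0$ and $\sigma_2 z_2 \ge 0$, whence $\|z\|_1 = \sigma_1 z_1 + \sigma_2 z_2$. Likewise the forbidden target $\widehat P_{i+2} \cup P_{(i+1)\sim(i+2)}$ is precisely the closed opposite quadrant, carrying the reversed sign pattern $-(\sigma_1,\sigma_2)$. Then I would argue by contradiction: if $\widetilde z$ lay in this opposite quadrant we would have $\sigma_1 \widetilde z_1 \le 0$ and $\sigma_2 \widetilde z_2 \le 0$, so $|z_j - \widetilde z_j| = \sigma_j(z_j - \widetilde z_j) \ge \sigma_j z_j$ for each coordinate $j$; summing gives $\|\widetilde z - z\|_1 \ge \sigma_1 z_1 + \sigma_2 z_2 = \|z\|_1 > 6\amax$, contradicting the displacement bound above.

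The core argument is short, so the main obstacle is purely bookkeeping. I would need to verify carefully that the two unions in the statement really do collapse to the two opposite closed quadrants under the mod-$4$ index convention of Definition~\ref{def:P-sets}, and to confirm the degenerate boundary cases where a coordinate of $z$ or $\widetilde z$ vanishes (these are handled automatically, since the inequalities $\sigma_j z_j \ge 0$ and $\sigma_j \widetilde z_j \le 0$ remain valid with equality on the axes, and the chain $|z_j - \widetilde z_j| \ge \sigma_j z_j = |z_j|$ is unaffected). Once the sign patterns are pinned down, the contradiction holds uniformly for all four choices of $i$.
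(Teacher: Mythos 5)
Your proposal is correct and follows essentially the same route as the paper's proof: both derive $\|z\|_1 > 6\amax$ from the definition of $B$, bound the displacement $\widetilde z - z$ by at most two columns of $S$, and conclude that this displacement is too small to reach the opposite closed quadrant. The only difference is that you measure everything in $\ell_1$ (giving the clean margin $4\amax < 6\amax$), whereas the paper passes to $\ell_2$ via $\|z\|_2 \ge \|z\|_1/\sqrt{2}$; your version is marginally tidier but not a different argument.
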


\begin{proof}
  We prove the claim for the case that $\widetilde z = z + S_j + S_k$,
  which by the same argument implies the result
  when $\widetilde z = z + S_j$. 
  Without loss of generality, assume $i =1$.
  Under the assumptions of the proposition,
  we will show that if $z \in \widehat P_1 \cup P_{4 \sim 1}$,
  then $\widetilde z \notin \widehat P_3 \cup P_{2\sim 3}$.
  For this, observe first by definition of $B$ that
  if $\psi(z) > B$ then $\|z\|_1 > 6\amax$.
  By definition of the sets $\widehat P_1$
  and $\widehat P_3 \cup P_{2 \sim 3}$, this implies that
  \begin{equation}
    \min_{z' \in \widehat P_3 \cup P_{2 \sim 3}}
    \|z - z'\|_2
    \;\ge\;
    \|z\|_2
    \;\ge\;
    \frac{1}{\sqrt{2}} \|z\|_1
    \;>\;
    \frac{6\amax}{\sqrt{2}}
    \ge 4 \amax \;.
    \label{eq:min-z-dist}
  \end{equation}
  On the other hand, by construction of $\widetilde z$,
  and using the fact that $\|S\|_2 \le 2\amax$, we have
  \begin{equation}
    \|z - \widetilde z\|_2
    \;\le\;
    \|S_j\|_2 + \|S_k\|_2
    \;\le\;
    2 (2\amax) = 4\amax \;.
    \label{eq:z-ztilde-dist}
  \end{equation}
  Then combining expressions~\eqref{eq:min-z-dist}
  and~\eqref{eq:z-ztilde-dist}, we find
  \begin{equation*}
    \|z - \widetilde z\|_2
    \;<\;
    \min_{z' \in \widehat P_3 \cup P_{2 \sim 3}}
    \|z - z'\|_2 \;,
  \end{equation*}
  and thus $\widetilde z \notin \widehat P_3 \cup P_{2 \sim 3}$.
\end{proof}

\subsection{Energy Upper Bound:
  Proof of Lemma~\ref{lem:energy-upper-bound}}
\label{app:ofp-regret:energy} 

This section gives the proof of
Lemma~\ref{lem:energy-upper-bound}, which derives
an upper bound on the energy $\psi(z^{t+1})$
when $\psi(z^t) \le B$. Restated here:

\lemenergyupperbound*

\begin{proof}
  Using the definition of $\psi$ from~\eqref{eq:energy-ss},
  observe that
  \begin{equation}
    \psi(z^{t+1})
    \;\le\;
    \max \big\{
    \max(\rho_1, \rho_2)\cdot \|z^{t+1}\|_1,
    \|z^{t+1}\|_1
    \big\}
    \;\le\;
    (1 + \rho_1 + \rho_2) \cdot \|z^{t+1}\|_1 \;.
    \label{eq:psi-ub-01}
  \end{equation}
  Now recall by definition of the constant $B$
  that $\psi(z^t) \le B \implies \|z^t\|_1 \le B$. 
  Then as $\|z^{t+1}\|_1 = \|z^t + S_j\|_1$
  for some $j \in [4]$, we have that
  \begin{align}
    \|z^{t+1}\|_1
    \;\le\;
    \|z^t\|_1 + 2\amax
    &\;\le\;
      B + 2\amax \nonumber \\
    &\;\le\;
    6\amax(1+\rho_1 + \rho_2)
    + 2\amax \nonumber \\
    &\;\le\;
    8\amax(1+\rho_1 + \rho_2) \;.
    \label{eq:psi-ub-02}
  \end{align}
  Here, the penultimate inequality follows
  from the upper bound on $B$ from Proposition~\ref{prop:B-upperbound},
  and the final inequality follows
  from the positivity of $\rho_1, \rho_2$.
  
  Combining expressions~\eqref{eq:psi-ub-01}
  and~\eqref{eq:psi-ub-02}, we conclude that
  \begin{equation*}
    \psi(z^{t+1})
    \;\le\;
    8\amax (1+\rho_1 + \rho_2)^2 \;. \qedhere
  \end{equation*}
\end{proof}

\subsection{Cycling Invariants and Non-Increasing
  Energy Growth: Proof of Lemma~\ref{lem:invariantsenergy}}
\label{app:ofp-regret:main-lemma}

In this section, we develop the proof of
Lemma~\ref{lem:invariantsenergy}, restated here:

\leminvariantsenergy*

We give the proof of Lemma~\ref{lem:invariantsenergy}
in two parts:
first in Lemma~\ref{lem:invariants}
(Section~\ref{app:ofp-regret:invariants}), 
we prove the invariants from Part (1).
Then, in Lemma~\ref{lem:energy-cases}
(Section~\ref{app:ofp-regret:energy-decrease}),
we prove the non-positive energy growth bounds
from Part (2).

\begin{figure}[h!]
\centering
\includegraphics[width=0.95\textwidth]{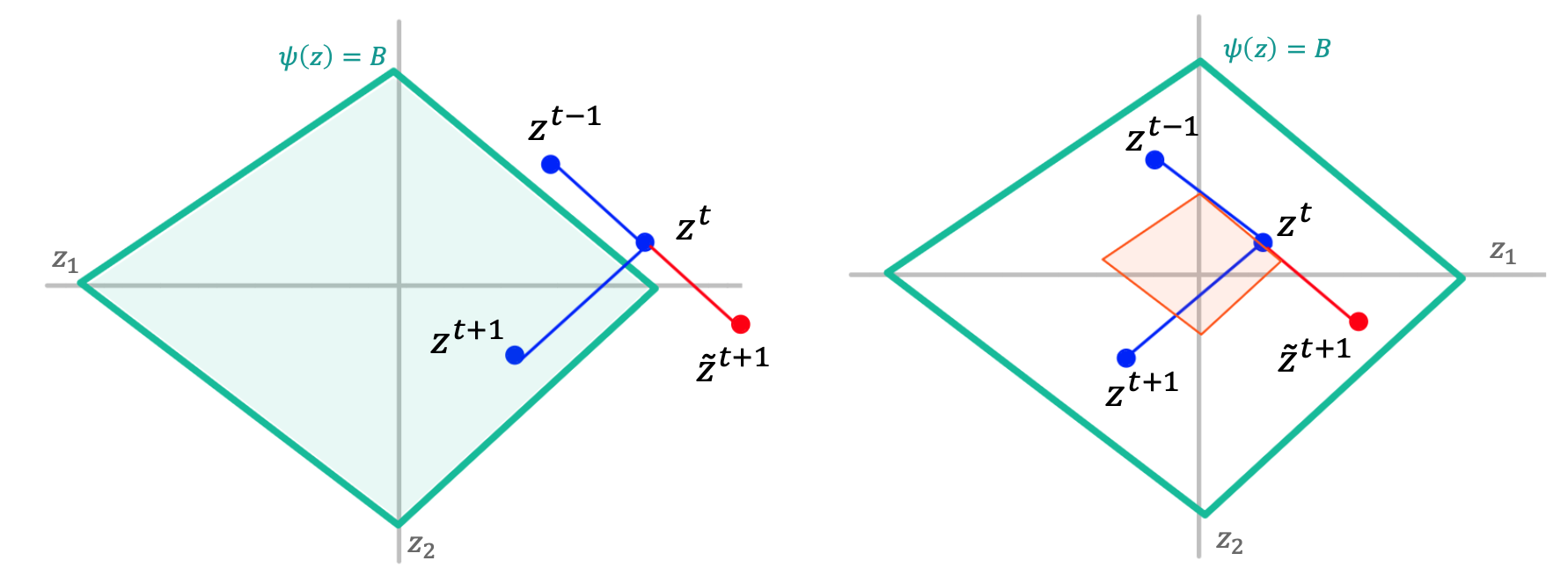}
\caption{Visual intuition for the claims and proof of 
Lemma~\ref{lem:invariantsenergy}.}
\label{fig:ofp-subspace}
\end{figure}

In Figure~\ref{fig:ofp-subspace}, we also give visual intuition
for the two claims of Lemma~\ref{lem:invariantsenergy}.
In the figure, each subfigure shows the dual space $\R^2$, 
and the green region denotes the sublevel set $\psi(z) \le B$.
The left subfigure illustrates that for $\psi(z^t) > B$, 
the vectors  $z^{t+1}$ and $\widetilde z^{t+1}$ will lie in the same 
region of $\calP$, and thus $\Delta \psi(z^t) \le 0$
(the latter point is captured by the fact that
$z^{t+1}$ lies within the sublevel set $\psi(z) \le B$). 
In contrast, as illustrated in the right subfigure, 
when $\psi(z^t) \le B$, then in general the invariants
of Part (1) of the lemma may not hold, and $\Delta \psi(z^t)$
can be strictly positive. 
  
\subsubsection{Cycling Invariants:
  Part 1 of Lemma~\ref{lem:invariantsenergy}}
\label{app:ofp-regret:invariants}

In this section, we state and prove 
the following lemma, which establishes
certain ``cycling invariants'' on the relationship
between the predicted cost vector $\widetilde z^{t+1}$
and true cost vector $z^{t+1}$ that hold
when the energy $\psi(z^t)$ is sufficiently large.

\begin{restatable}{lemma}{leminvariants}
  \label{lem:invariants}
  Suppose $\psi(z^t) > B$ and $z^t \in \widehat P_i$
  for $i \in [4]$. Then either
  \begin{enumerate}[
    label={(\roman*)},
    topsep=0em,
    leftmargin=3em,
    ]
  \item
    $\widetilde z^{t+1} \in \widehat P_i$ and
    $z^{t+1} \in \widehat P_i$, or 
  \item
    $\widetilde z^{t+1} \in P_{i+1}$ and
    $z^{t+1} \in P_{i+1}$.
  \end{enumerate}
\end{restatable}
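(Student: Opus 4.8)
The plan is to prove the dichotomy by precisely locating the predicted vector $\widetilde z^{t+1}$ and the true next vector $z^{t+1}$ relative to the region $\widehat P_i$. Two structural facts drive everything. First, under Assumption~\ref{ass:2x2} the off-diagonal entries satisfy $b, c < 0$: since $ad = bc$ with $a, d > \max\{0,b,c\}$, they cannot be nonnegative (else $ad > \max\{b,c\}^2 \ge bc$). Consequently the columns $S_1 = (b,-c)$, $S_2 = (a,-c)$, $S_3 = (a,-a)$, $S_4 = (b,-a)$ of $S$ in \eqref{eq:ss-dual} point into the four diagonal directions $(-,+),(+,+),(+,-),(-,-)$, so that in each region $\widehat P_i$ the active increment $S_i$ advances the iterate counterclockwise toward $\widehat P_{i+1}$. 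Second, by definition of $B$ the hypothesis $\psi(z^t) > B$ forces $\|z^t\|_1 > 6\amax$, while every increment is a single column of $S$ with entries of absolute value at most $\amax$. Thus the iterates are far from the origin but move by coordinate-bounded steps, which is exactly what prevents both backward motion and skipping a region. By the cyclic symmetry of this structure (for each $i$ the columns $S_i$ and $S_{i+1}$ share the coordinate transverse to the boundary $P_{i\sim(i+1)}$ being crossed), it suffices to treat $i=1$; the other cases follow by shifting indices mod $4$.

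First I would locate $\widetilde z^{t+1}$. Writing $\widetilde z^{t+1} = z^t + (z^t - z^{t-1})$ with $z^t - z^{t-1} = Sw^{t-1}$ a single column, I claim $\widetilde z^{t+1}_1 < 0$, so $\widetilde z^{t+1} \in \widehat P_1 \cup P_2 = \{z_1 < 0\}$ and hence $\Q(\widetilde z^{t+1}) \in \{e_1, e_2\}$. If $w^{t-1} \in \{e_1, e_4\}$ the increment's first coordinate is $b < 0$ and the claim is immediate from $z^t_1 < 0$. If instead $w^{t-1} \in \{e_2, e_3\}$ (first coordinate $a$), I use a propagation argument in the spirit of Proposition~\ref{prop:B-2step}: such a $w^{t-1}$ requires $\widetilde z^t_2 \ge 0$, but $z^{t-1} = z^t - S_{w^{t-1}}$ together with the bounded change $|\widetilde z^t_2 - z^{t-1}_2| \le \amax$ forces $\widetilde z^t_2 < 0$ whenever $z^t_2$ is sufficiently negative. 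This negative regime is precisely the only one in danger, since $z^t_1 + a \ge 0$ requires $z^t_1 \in (-a,0)$, and then $\|z^t\|_1 > 6\amax$ gives $|z^t_2| > 5\amax$. The contradiction pins $\widetilde z^{t+1}_1 < 0$.

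The alignment of $z^{t+1}$ with $\widetilde z^{t+1}$ then follows from the identity $z^{t+1} - \widetilde z^{t+1} = S(w^t - w^{t-1})$, where $w^t = \Q(\widetilde z^{t+1}) \in \{e_1, e_2\}$. The key observation is that $S_1$ and $S_2$ share the second coordinate $-c$, so $z^{t+1}_2 = z^t_2 - c$ irrespective of the tiebreak. I would then show the predicted vector crosses $z_2 = 0$ in lockstep, i.e.\ $\widetilde z^{t+1}_2 = z^t_2 - c$ as well: this holds once $w^{t-1} \in \{e_1, e_2\}$ (again vertical increment $-c$), and the alternative $w^{t-1} \in \{e_3, e_4\}$ (vertical increment $-a$) is ruled out by the same propagation argument whenever $z^t_2 \in (c, 0]$---exactly the window where the sign of $z^{t+1}_2$ is undetermined---because there $|z^t_2| < \amax$ forces $z^t_1 < -5\amax$, contradicting the requirement $\widetilde z^t_1 \ge 0$. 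With $\widetilde z^{t+1}_2 = z^{t+1}_2 = z^t_2 - c$ and both first coordinates negative (using $z^t_1 < -5\amax < -a$ in the delicate window), the two vectors land on the same side of $z_2 = 0$: both in $\widehat P_1$ when $z^t_2 \le c$ (case (i)), both in $P_2$ when $z^t_2 > c$ (case (ii)).

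I expect the main obstacle to be the bookkeeping of these propagation arguments: each ruling-out of an inconsistent previous increment $w^{t-1}$ must combine the far-from-origin bound $\|z^t\|_1 > 6\amax$ with the coordinate-bounded step, and the threshold $6\amax$ (against per-step motion $\le \amax$) must be chosen so every such case closes. Equally delicate is the treatment of the boundary sets $P_{i\sim(i+1)}$ under the arbitrary tiebreaking of $\Q$: membership of $\widetilde z^{t+1}$ or $z^{t+1}$ on a boundary must remain consistent with the claimed region. These are exactly the cases where $z^{t+1}_1$ or $z^{t+1}_2$ equals $0$, and they are handled by the same inequalities read with non-strict signs.
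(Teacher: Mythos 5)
Your plan is correct and rests on essentially the same ingredients as the paper's proof of Lemma~\ref{lem:invariants}: the far-from-origin bound $\|z^t\|_1 > 6\amax$ implied by $\psi(z^t) > B$, the bounded per-step motion applied one extra step backward to rule out inconsistent choices of $w^{t-1}$ (exactly the mechanism of Proposition~\ref{prop:B-2step}), and the fact that adjacent columns $S_i, S_{i+1}$ share the coordinate transverse to the boundary $P_{i\sim(i+1)}$, which forces $z^{t+1}$ and $\widetilde z^{t+1}$ to cross that boundary in lockstep. The paper packages this as region exclusion (Proposition~\ref{prop:invariant-regions}) followed by two alignment claims (Propositions~\ref{prop:invariant-a} and~\ref{prop:invariant-b}), whereas you track coordinate signs directly, but the argument is the same in substance and your remaining bookkeeping closes with the stated threshold.
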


\paragraph{Proof of Lemma~\ref{lem:invariants}.}

The proof of Lemma~\ref{lem:invariants}
proceeds using three separate propositions,
which we state and prove as follows:

First, we establish regions in which
$\widetilde z^{t+1}$ and $z^{t+1}$ cannot lie
when the energy $\psi$ is sufficiently large. 

\begin{proposition}
  \label{prop:invariant-regions}
  Suppose $\psi(z^t) > B$ and $z^t \in \widehat P_i$
  for some $i \in [4]$.
  Then
  \begin{equation*}
    \widetilde z^{t+1}, z^{t+1}
    \;\notin\; \widehat P_{i+2} \cup P_{(i+1)\sim (i+2)}
    \cup \widehat P_{i-1}\;.
\end{equation*}
\end{proposition}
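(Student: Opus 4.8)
The plan is to reduce to the representative case $i=1$ (the other three cases follow by the same reasoning, with the roles of the coordinate axes and the signs of the columns of $S$ cyclically relabeled), and then to split the excluded region into two pieces that are ruled out by different arguments: the \emph{far} piece $\widehat P_{i+2} \cup P_{(i+1)\sim(i+2)}$ and the \emph{backward} piece $\widehat P_{i-1}$. The crucial preliminary observation is that both target vectors are single-column displacements of $z^t$: by~\eqref{eq:ss-dual} we have $\widetilde z^{t+1} = z^t + (z^t - z^{t-1}) = z^t + S_k$ where $e_k = \Q(\widetilde z^t)$, and $z^{t+1} = z^t + S_j$ where $e_j = \Q(\widetilde z^{t+1})$; each $S_\ell$ satisfies $\|S_\ell\|_2 \le 2\amax$ and $|S_{r\ell}| \le \amax$ entrywise.

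For the far piece, I would simply invoke Proposition~\ref{prop:B-2step}. Since $z^t \in \widehat P_i \subseteq \widehat P_i \cup P_{(i-1)\sim i}$ and $\psi(z^t) > B$, and since both $\widetilde z^{t+1}$ and $z^{t+1}$ have the form $z^t + S_\ell$, the proposition immediately yields $\widetilde z^{t+1}, z^{t+1} \notin \widehat P_{i+2} \cup P_{(i+1)\sim(i+2)}$. This is the clean part, as it reuses the through-the-origin distance estimate: $\psi(z^t) > B$ forces $\|z^t\|_1 > 6\amax$, so $z^t$ lies at $\ell_2$-distance at least $6\amax/\sqrt2 > 4\amax$ from the opposite quadrant, which a displacement of size at most $2\amax$ cannot reach.

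The backward exclusion $\widehat P_{i-1}$ is the main obstacle and cannot follow from a distance bound alone, since $\widehat P_1$ and $\widehat P_{4}$ are adjacent across the line $z_1 = 0$. Here I would exploit the sign structure of $S$. Assumption~\ref{ass:2x2} (with $\det A = 0$ and $a,d > \max\{0,b,c\}$) forces $bc = ad > 0$ and rules out $b,c>0$ (which would give $ad > \max\{b,c\}^2 \ge bc$), so $b,c < 0$; consequently the columns $S_1=(b,-c),\,S_2=(a,-c),\,S_3=(a,-a),\,S_4=(b,-a)$ have the fixed sign pattern $(-,+),(+,+),(+,-),(-,-)$, encoding a definite clockwise rotation $P_1\to P_2\to P_3\to P_4$. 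For $i=1$ (where $z^t_1<0$, $z^t_2\le 0$, and $\widehat P_{0}=\widehat P_4=\{z_1\ge 0,\,z_2<0\}$), I would case-split on $|z^t_1|$. If $|z^t_1| > \amax$, then for any column $z^{t+1}_1 = z^t_1 + S_{1\ell} < z^t_1 + \amax < 0$, so neither $\widetilde z^{t+1}$ nor $z^{t+1}$ can enter $\{z_1 \ge 0\} \supseteq \widehat P_4$.

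The delicate subcase is $|z^t_1| \le \amax$, which by $\|z^t\|_1 > 6\amax$ forces $z^t_2 < -5\amax$. Then $\widetilde z^{t+1}_2 = z^t_2 + S_{2k} \le z^t_2 + \amax < -4\amax < 0$, so $\widetilde z^{t+1}$ has negative second coordinate and can only lie in $\widehat P_4$ if $\widetilde z^{t+1}_1 \ge 0$. Ruling this out is where I would use a one-step look-back: $\widetilde z^{t+1}_1 \ge 0$ with $z^t_1 \in [-\amax,0)$ would require $S_{1k}>0$, i.e.\ $k\in\{2,3\}$, hence $\widetilde z^t \in P_2\cup P_3$ and $\widetilde z^t_2 > 0$; but $\widetilde z^t = z^{t-1}+S_{k'}$ is itself a single step, and tracing $z^{t-1}_2 = z^t_2 - S_{2k}$ together with $|S_{2,\cdot}|\le\amax$ gives $\widetilde z^t_2 \le z^{t-1}_2 + \amax < -3\amax < 0$, a contradiction. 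Thus $k\in\{1,4\}$, so $S_{1k}=b<0$ and $\widetilde z^{t+1}\in P_1$; this in turn forces $j=1$, whence $z^{t+1}_1 = z^t_1 + b < 0$ as well. The remaining cases $i\in\{2,3,4\}$ proceed identically after cyclically relabeling the axis along which crossing is forbidden and reading off the corresponding sign constraints from the columns of $S$.
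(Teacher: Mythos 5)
Your proof is correct and follows essentially the same route as the paper's: the ``far'' region $\widehat P_{i+2} \cup P_{(i+1)\sim(i+2)}$ is excluded by Proposition~\ref{prop:B-2step}, and the ``backward'' region $\widehat P_{i-1}$ is excluded by observing that crossing into it would require a column of $S$ with the wrong sign, which forces $\widetilde z^{t+1}$ (via a one-step look-back to $\widetilde z^t$) into a region incompatible with $\|z^t\|_1 > 6\amax$. Your version is somewhat more explicit than the paper's (deriving $b,c<0$ and case-splitting on $|z^t_1|$), but the key ideas are identical; the only minor imprecision is that $k\in\{2,3\}$ gives $\widetilde z^t_2 \ge 0$ rather than $>0$ once boundary tiebreaks are accounted for, which does not affect your contradiction.
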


\begin{proof}
  We first show that 
  $\widetilde z^{t+1}, z^{t+1} \notin
  \widehat P_{i+2} \cup P_{(i+1)\sim(i+2)}$.
  For this, recall that both $z^{t+1} = z^t + S_j$
  and $\widetilde z^{t+1} = z^t + S_k$
  for some $j, k \in [4]$. As $\psi(z^t) > B$,
  then applying Proposition~\ref{prop:B-2step}
  implies that both
  $z^{t+1}, \widetilde z^{t+1}
  \notin \widehat P_{i+2} \cup P_{(i+1)\sim(i+2)}$.
  
  Next, we establish that $z^{t+1} \notin \widehat P_{i-1}$.
  For this, as $z^t \in \widehat P_i$,
  then either (a) $\widetilde z^{t+1} \in \widehat P_{i-1}$
  or (b) $\widetilde z^{t+1} \in \widehat P_{i+2} \cup P_{(i+1)\sim(i+2)}$.
  We have already established case (b) cannot hold.  
  Similarly, as $\psi(z^t) > B$ implies $\|z^t\|_1 > 3\amax$,
  then case (a) implies that either $\psi(z^t) \le B$ or
  $z^{t+1} \notin \widehat P_{i-1}$, which is a contradiction.
  So $z^{t+1} \notin \widehat P_{i-1}$.
  
  Similarly, to have $\widetilde z^{t+1} \in \widehat P_{i-1}$,
  then by definition of $S$ we must have either
  (a) $z^t - z^{t-1} = S_{i+2}$ or
  (b) $z^t - z^{t-1} = S_{i+1}$.
  Case (a) implies
  $\widetilde z^t \in \widehat P_{i+2} \cup P_{(i+1)\sim(i+2)}$,
  which cannot hold due to Proposition~\ref{prop:B-2step}.
  Case (b) implies $\widetilde z^t \in
  \widehat P_{i+1} \cup P_{i \sim(i+1)}$, which again
  due to the definition of $B$ contradicts that either
  $\psi(z^t) > B$ or that $\widetilde z^{t+1} \in \widehat P_{i-1}$.
  Thus we conclude
  $\widetilde z^{t+1} \notin \widehat P_{i-1}$. 
\end{proof}

Proposition~\ref{prop:invariant-regions} establishes
that if $\psi(z^t) > B$, then we must have
$\widetilde z^{t+1}, z^{t+1} \in \widehat P_i \cup P_{i+1}$.
Thus to conclude the proof of Lemma~\ref{lem:invariants}, 
it suffices to establish that
$\widetilde z^{t+1} \in \widehat P_i \implies z^{t+1} \in \widehat P_i$
and that
$\widetilde z^{t+1} \in P_{i+1} \implies z^{t+1} \in P_{i+1}$.
We prove these claims in the following two propositions:

\begin{proposition}
  \label{prop:invariant-a}
  Suppose $\psi(z^t) > B$ and $z^t \in \widehat P_i$
  for some $i \in [4]$. Then:
  \begin{equation*}
    \widetilde z^{t+1} \in \widehat P_i
    \implies
    z^{t+1} \in \widehat P_i \;.
  \end{equation*}
\end{proposition}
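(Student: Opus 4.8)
```latex
The plan is to prove the contrapositive-style implication by a direct case analysis on which column $S_j$ of the update matrix $S$ is applied to move from $z^t$ to $z^{t+1}$. Recall that $z^{t+1} = z^t + S\Q(\widetilde z^{t+1})$, so the single update increment $z^{t+1}-z^t$ is exactly one of the four columns $S_1, S_2, S_3, S_4$, determined by $\Q(\widetilde z^{t+1})$. Since we are assuming $\widetilde z^{t+1} \in \widehat P_i$, the choice map $\Q$ returns $e_i$ (or possibly $e_{i+1}$ if $\widetilde z^{t+1}$ lies on the shared boundary $P_{i \sim (i+1)}$, which I must track carefully via the tiebreaking convention of Definition~\ref{def:Q-map}). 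Thus the increment applied is $S_i$ (modulo the boundary tiebreak). The goal is then to verify that adding $S_i$ to a point $z^t \in \widehat P_i$ of sufficiently large energy keeps the result inside $\widehat P_i$.

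First I would fix $i$ without loss of generality (say $i = 1$, exploiting the cyclic symmetry of the four regions and of the columns of $S$) and write out the explicit coordinates of $S_i$ from~\eqref{eq:subspace-ofp-1}. The key structural fact to extract is the \emph{sign pattern} of each column $S_i$: under Assumption~\ref{ass:2x2} we have $a, d > \max\{0, b, c\}$, which pins down the signs of the entries $b, a, -c, -a$ appearing in the columns. I expect that the increment $S_i$ points in a direction that is ``compatible'' with staying in $\widehat P_i$ — that is, it does not push $z^t$ across either of the two axes bounding $\widehat P_i$. To make this rigorous I would combine two ingredients: (1) the large-energy hypothesis $\psi(z^t) > B$, which by Definition~\ref{def:B} forces $\|z^t\|_1 > 6\amax$, so $z^t$ is ``deep'' inside its quadrant, far from the bounding axes; and (2) the bound $\|S_j\|_2 \le 2\amax$ on the step size, so a single increment cannot move $z^t$ more than $2\amax$ and hence cannot flip the sign of a coordinate whose magnitude already exceeds (roughly) $3\amax$.

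The main obstacle, and the place requiring genuine care rather than routine bookkeeping, is controlling the coordinate of $z^t$ that happens to be \emph{small} within $\widehat P_i$. Being deep in the $\ell_1$ sense guarantees that at least one coordinate is large, but the other coordinate could be close to an axis, and a step of size up to $2\amax$ could in principle push it across. To handle this I would argue that the increment $S_i$ applied in region $\widehat P_i$ always moves the potentially-small coordinate in the \emph{correct} direction (away from the axis), or else that Proposition~\ref{prop:invariant-regions} has already excluded the problematic neighboring regions ($\widehat P_{i-1}$, $\widehat P_{i+2}$, and the boundary $P_{(i+1)\sim(i+2)}$). Since Proposition~\ref{prop:invariant-regions} narrows the possibilities for $z^{t+1}$ to just $\widehat P_i \cup P_{i+1}$, I only need to rule out $z^{t+1} \in P_{i+1}$ under the hypothesis $\widetilde z^{t+1} \in \widehat P_i$. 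Crossing from $\widehat P_i$ into $P_{i+1}$ requires flipping exactly one coordinate's sign, and I would show that the relevant entry of $S_i$ has the sign that \emph{reinforces} rather than reverses that coordinate — contradicting the crossing.

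Concretely, I would finish by checking, for the representative case $i=1$ with $z^t \in \widehat P_1 = \{z_1 < 0, z_2 < 0\}$ (up to boundary), that applying $S_1 = (b, -c)^\top$ preserves both sign constraints: the hypothesis that $\widetilde z^{t+1} \in \widehat P_1$ together with the deep-interior bound on $z^t$ ensures neither $z_1$ nor $z_2$ changes sign. The boundary sub-cases (where $z^t$ or $\widetilde z^{t+1}$ lies on a $P_{i\sim(i+1)}$ set and the tiebreaking rule of $\Q$ selects between $e_i$ and $e_{i+1}$) would be dispatched separately, using the same $\|z^t\|_1 > 6\amax$ slack to absorb the single $2\amax$-sized step, and invoking the already-established exclusions from Proposition~\ref{prop:invariant-regions}. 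This completes the implication $\widetilde z^{t+1} \in \widehat P_i \implies z^{t+1} \in \widehat P_i$.
```
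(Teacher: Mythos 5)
Your overall scaffolding---invoke Proposition~\ref{prop:invariant-regions} to reduce the problem to excluding $z^{t+1} \in P_{i+1}$, then use $\psi(z^t) > B \implies \|z^t\|_1 > 6\amax$ together with the $2\amax$ bound on a single increment---matches the paper, but the step you use to actually exclude $P_{i+1}$ is wrong. You claim that the entry of $S_i$ acting on the coordinate separating $P_i$ from $P_{i+1}$ ``reinforces rather than reverses'' that coordinate's sign. It does the opposite: the increment $S_i$ applied inside $P_i$ is precisely what drives the iterate toward $P_{i+1}$; this is the cycling mechanism behind case (ii) of Lemma~\ref{lem:invariants} and Proposition~\ref{prop:invariant-b}. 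Concretely, for Matching Pennies ($a=d=1$, $b=c=-1$) one has $S_1 = (b, -c) = (-1, 1)$, so for $z^t \in P_1 = \{z_1 < 0,\, z_2 < 0\}$ the step increases $z_2$ toward the axis $z_2 = 0$ that separates $P_1$ from $P_2$, and if $z^t_2 \in (-1, 0)$ the sign does flip. So no sign-pattern argument on $S_i$ alone can close the case, and the deep-interior bound $\|z^t\|_1 > 6\amax$ only guarantees that \emph{some} coordinate is large---the separating coordinate may well be the small one.

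What is missing is a case split on the previous increment $z^t - z^{t-1}$, which is how the hypothesis $\widetilde z^{t+1} \in \widehat P_i$ actually gets used. If $z^t - z^{t-1} = S_i$, then $\widetilde z^{t+1} = z^t + (z^t - z^{t-1}) = z^t + S_i = z^{t+1}$, so $z^{t+1} = \widetilde z^{t+1} \in \widehat P_i$ with nothing further to check: the prediction is \emph{exact}. If instead $z^t - z^{t-1} = S_j$ with $j \neq i$, the iterate has only just entered $\widehat P_i$, so the small coordinate of $z^t$ is the one separating $P_{i-1}$ from $P_i$; the bound $\|z^t\|_1 > 6\amax$ then forces the \emph{other} coordinate---the one separating $P_i$ from $P_{i+1}$---to be too large in magnitude to flip under a single step of size at most $2\amax$, whence $z^{t+1} \notin P_{i+1}$ and Proposition~\ref{prop:invariant-regions} finishes. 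This identification $\widetilde z^{t+1} = z^{t+1}$ in the generic sub-case is the crux of the paper's proof and is absent from your plan; without it, your argument as stated would ``prove'' that the iterate never leaves $\widehat P_i$ at all, contradicting the cycling established in Lemma~\ref{lem:invariants}. (Your handling of the boundary sub-case $\widetilde z^{t+1} \in P_{i\sim(i+1)}$ is also too vague; the paper resolves it by noting $\Q(\widetilde z^{t+1}) \in \{e_i, e_{i+1}\}$ and using the structure of the adjacent columns $S_i, S_{i+1}$ to place $z^{t+1}$ on $P_{i\sim(i+1)} \subset \widehat P_i$.)
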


\begin{proof}
  We distinguish the cases when
  $\widetilde z^{t+1} \in P_i$ and $\widetilde z^{t+1} \in P_{i\sim(i+1)}$.
  In the first case, if $\widetilde z^{t+1} \in P_{i}$,
  then by definition $z^{t+1} = z^t + S_i$.
  If $\psi(z^{t+1}) = \psi(z^t)$, then
  $z^{t+1} = \widetilde z^{t+1} \in P_i$.
  If instead $\psi(z^{t+1}) \neq  \psi(z^t)$,
  then $\widetilde z^{t+1} = z^t - z^{t-1} = S_j$
  for some $j \neq i \in [4]$.
  However, by definition of the constant $B$,
  we must have $z^{t+1} \notin P_{i+1}$, as otherwise
  the assumption $\psi(z^t) > B$ would be violated.
  Then Proposition~\ref{prop:invariant-regions} implies 
  $z^{t+1} \in \widehat P_i$.
  Thus if $\widetilde z^{t+1} \in P_i$,
  then $z^{t+1} \in \widehat P_i$.

  For the second case, suppose
  $\widetilde z^{t+1} \in P_{i \sim (i+1)}$.
  Then we must have $\widetilde z^{t+1} - z^t \in \{S_i, S_{i+1}\}$.
  Moreover, recall that $\Q(\widetilde z^{t+1}) \in \{e_i, e_{i+1}\}$.
  In either case, using the structure of adjacent
  columns $i$ and $i+1$ of $S$, it follows that
  $z^{t+1} \in P_{i \sim (i+1)} \subset \widehat P_i$. 
  Thus if $\widetilde z^{t+1} \in P_{i\sim(i+1)}$,
  then also $z^{t+1} \in \widehat P_i$, which concludes the proof.  
\end{proof}

\begin{proposition}
  \label{prop:invariant-b}
  Suppose $\psi(z^t) > B$ and $z^t \in \widehat P_i$
  for some $i \in [4]$. Then:
  \begin{equation*}
    \widetilde z^{t+1} \in P_{i+1}
    \implies
    z^{t+1} \in P_{i+1} \;.
  \end{equation*}
\end{proposition}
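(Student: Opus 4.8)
The plan is to verify membership in $P_{i+1}$ directly from the update rule, reducing by symmetry to $i=1$. First I would record a structural fact about the matrix: since $A$ satisfies Assumption~\ref{ass:2x2} and has an interior Nash, we must have $b, c < 0$. Indeed $\det A = ad - bc = 0$ forces $bc = ad > 0$, so $b$ and $c$ share a sign and are nonzero; were they positive, then $a > b > 0$ and $d > c > 0$ would give $ad > bc$, a contradiction. Consequently $S_2 = (a, -c)$ has both coordinates strictly positive, and more generally each column $S_j$ points into a distinct diagonal direction.

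Taking $i = 1$, we have $z^t \in \widehat P_1$, i.e.\ $z^t_1 < 0$ and $z^t_2 \le 0$, and $\widetilde z^{t+1} \in P_2$. Since $P_2$ is open, $\Q(\widetilde z^{t+1}) = e_2$ with no tiebreaking, so by~\eqref{eq:ss-dual} the next iterate is $z^{t+1} = z^t + S_2 = (z^t_1 + a,\, z^t_2 - c)$. It then remains to check the two defining inequalities $z^{t+1}_1 < 0$ and $z^{t+1}_2 > 0$.

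The inequality $z^{t+1}_2 > 0$ I would obtain for free from the predicted vector. Writing $\widetilde z^{t+1} = z^t + S_k$, where $S_k = z^t - z^{t-1}$ is the previous step, the hypothesis $\widetilde z^{t+1} \in P_2$ forces $z^t_2 + (S_k)_2 > 0$; as $z^t_2 \le 0$ and only $S_1, S_2$ have a positive second coordinate, namely $-c$, we must have $k \in \{1, 2\}$ and hence $\widetilde z^{t+1}_2 = z^t_2 - c$. This equals $z^{t+1}_2$, so $z^{t+1}_2 = \widetilde z^{t+1}_2 > 0$. For the remaining inequality $z^{t+1}_1 = z^t_1 + a < 0$, I would invoke the energy threshold: by Definition~\ref{def:B}, $\psi(z^t) > B$ gives $\|z^t\|_1 > 6\amax$, i.e.\ $-z^t_1 - z^t_2 > 6\amax$. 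The constraint just derived, $z^t_2 - c > 0$, yields $z^t_2 > c \ge -\amax$, so $-z^t_1 > 6\amax + z^t_2 > 5\amax \ge a$, i.e.\ $z^t_1 + a < 0$. Together this places $z^{t+1} \in P_2$.

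The cases $i \in \{2, 3, 4\}$ follow the same template: adjacent columns $S_i$ and $S_{i+1}$ always agree in exactly one coordinate, which is precisely the coordinate that keeps its sign across the $\widehat P_i \to P_{i+1}$ transition; that coordinate is inherited from $\widetilde z^{t+1}$ and gives one inequality automatically, while the energy bound combined with the $P_{i+1}$-membership of $\widetilde z^{t+1}$ controls the other. I expect the second inequality ($z^{t+1}_1 < 0$ in the $i=1$ case) to be the main obstacle, since it does not follow from local step information and genuinely needs the energy lower bound to rule out the iterate drifting back into $\widehat P_i$. The clean resolution is the observation that $\widetilde z^{t+1} \in P_2$ simultaneously pins down the previous step, forcing $k \in \{1,2\}$, and bounds $z^t_2$ away from $-\infty$, which is exactly what makes the bound $\|z^t\|_1 > 6\amax$ bite.
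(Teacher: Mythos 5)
Your proof is correct and takes essentially the same approach as the paper's: both deduce from $\widetilde z^{t+1} \in P_{i+1}$ that the previous step $z^t - z^{t-1}$ must be $S_i$ or $S_{i+1}$, exploit the fact that adjacent columns of $S$ share a coordinate so that one inequality for $z^{t+1}$ is inherited directly from $\widetilde z^{t+1}$, and your version is in fact more explicit than the paper's about why the remaining inequality needs the threshold $\psi(z^t) > B \implies \|z^t\|_1 > 6\amax$. One small slip in your closing paragraph: the coordinate on which $S_i$ and $S_{i+1}$ agree is the one that \emph{flips} sign across the $\widehat P_i \to P_{i+1}$ transition (and is inherited from $\widetilde z^{t+1}$), while the sign-\emph{preserving} coordinate is the one controlled by the energy bound --- your explicit $i=1$ computation gets this right, so nothing breaks.
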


\begin{proof}
  Suppose $z^t \in P_i$.
  If $\widetilde z^{t+1} \in P_{i+1}$,
  then $\Q(z^{t+1}) = e_{i+1}$, and also
  $\widetilde z^{t+1}-z^t \in \{S_{i}, S_{i+1}\}$.
  Using the structure of adjacent columns $i$ and $i+1$
  of $S$, it then follows that
  $z^{t+1} = z^t + S_{i+1} \in P_{i+1}$.

  Similarly, if instead $z^t \in P_{i\sim(i+1)}$,
  then $z^{t+1} = z^t + S_{i+1} \in P_{i+1}$
  by definition of $S$.
  Thus in either case, if $\widetilde z^{t+1} \in P_{i+1}$,
  then also $z^{t+1} \in P_{i+1}$. 
\end{proof}

\subsubsection{Non-Increasing Energy Growth:
Part (2) of Lemma~\ref{lem:invariantsenergy}}
\label{app:ofp-regret:energy-decrease}
 
In this section we state and prove the following lemma,
which gives non-positive bounds on the energy growth
under the two cases in Part (1) of Lemma~\ref{lem:invariantsenergy}:

\begin{restatable}{lemma}{lemenergycases}
  \label{lem:energy-cases}
  Fix $i \in [4]$, and suppose $z^t \in \widehat P_i$.
  Suppose that either
  (i)  $\widetilde z^{t+1} \in \widehat P_i$
  and $z^{t+1} \in \widehat P_i$
  or (ii) $\widetilde z^{t+1} \in P_{i+1}$
  and $z^{t+1} \in P_{i+1}$.
  Then $\Delta \psi(z^t) \le 0$. 
\end{restatable}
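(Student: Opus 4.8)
The plan is to bound the one-step change directly through the decomposition into terms (a) and (b) already recorded in~\eqref{eq:v0-delta-psi}, namely
\[
\Delta\psi(z^t) = \underbrace{\big\langle z^t, M(\Q(z^{t+1}) - \Q(z^t))\big\rangle}_{(a)} + \underbrace{\big\langle \Q(\widetilde z^{t+1}), S^\top M\,\Q(z^{t+1})\big\rangle}_{(b)},
\]
and to show that in each of the two cases the hypothesis forces $(b) = 0$ and $(a) \le 0$. A preliminary point that makes the bookkeeping clean is that $\psi$ is \emph{single-valued}: the four formulas in~\eqref{eq:energy-ss} agree across every boundary $P_{i\sim(i+1)}$, so whenever $z^{t+1}$ (or $z^t$) lies on such a boundary I am free to expand $\psi$ using \emph{either} adjacent vertex as the representative of $\Q$. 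I would exploit this freedom to always pick $\Q(z^{t+1})$ equal to the dynamics vertex $\Q(\widetilde z^{t+1})$, and to pick $\Q(z^t) = e_i$ (legitimate since $z^t \in \widehat P_i$).

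First I would record the two structural computations that drive the argument. (1) A direct calculation, using $\det A = ad - bc = 0$ together with $\rho_1 = (d-c)/(a-b)$ and $\rho_2 = (d-b)/(a-c)$ from Assumption~\ref{ass:2x2}, shows that $S^\top M$ is skew-symmetric; in particular its diagonal vanishes, so $(b) = 0$ as soon as $\Q(\widetilde z^{t+1}) = \Q(z^{t+1})$. (2) The four column differences of $M$ from~\eqref{eq:energy-ss-compact} are $M(e_{i+1}-e_i) \in \{(0, 1+\rho_2),\,(1+\rho_1,0),\,(0,-(1+\rho_2)),\,(-(1+\rho_1),0)\}$ for $i = 1,2,3,4$ (indices mod $4$). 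Each has a single nonzero coordinate whose sign is opposite to the sign of that coordinate for any $z \in \widehat P_i$; hence $\langle z^t, M(e_{i+1}-e_i)\rangle \le 0$ for every $z^t \in \widehat P_i$.

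With these in hand I would dispatch term (b), where the tiebreaking subtlety lives. In case (ii) both $\widetilde z^{t+1}$ and $z^{t+1}$ lie in the open region $P_{i+1}$, so both are forced to the vertex $e_{i+1}$ and $(b)$ is a diagonal entry of $S^\top M$, hence $0$. In case (i) the only delicate situation is when the dynamics tiebreak sends $\widetilde z^{t+1}$ to $e_{i+1}$ from the boundary $P_{i\sim(i+1)}$; here I would invoke the structural fact used in Proposition~\ref{prop:invariant-a} that consecutive columns $S_i$ and $S_{i+1}$ share the coordinate that defines $P_{i\sim(i+1)}$, so that $z^{t+1} = z^t + S_{i+1}$ is pinned to that same boundary. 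Then $e_{i+1}$ is a legitimate representative of $\Q(z^{t+1})$, and choosing $\Q(z^{t+1}) = \Q(\widetilde z^{t+1})$ again makes $(b) = 0$. It then remains to evaluate $(a) = \langle z^t, M(\Q(z^{t+1}) - e_i)\rangle$: it is exactly $0$ when $\Q(z^{t+1}) = e_i$, and equals $\langle z^t, M(e_{i+1}-e_i)\rangle \le 0$ when $\Q(z^{t+1}) = e_{i+1}$, by computation (2). Summing gives $\Delta\psi(z^t) \le 0$ in all cases.

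I expect the main obstacle to be precisely this boundary accounting in term (b): because the off-diagonal (adjacent) entries of $S^\top M$ are in general nonzero, a mismatch between the dynamics tiebreak at $\widetilde z^{t+1}$ and the vertex selected at $z^{t+1}$ could in principle contribute a \emph{positive} term and destroy the bound. The crux is the shared-coordinate structure of consecutive columns of $S$, which guarantees that $\widetilde z^{t+1}$ and $z^{t+1}$ land on a common boundary and therefore admit a common representative vertex; this is what renders the non-positive energy growth robust to \emph{arbitrary} tiebreaking, as claimed in Remark~\ref{remark:tiebreak}.
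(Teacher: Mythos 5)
Your proposal is correct and follows essentially the same route as the paper's proof: the same decomposition of $\Delta\psi(z^t)$ into terms (a) and (b), the skew-symmetry of $S^\top M$ to kill term (b), the sign analysis of $\langle z^t, M(e_{i+1}-e_i)\rangle$ for $z^t \in \widehat P_i$ to bound term (a), and the shared-column structure of $S$ to handle the boundary tiebreak. The only difference is organizational: you exploit the continuity of $\psi$ across the boundaries $P_{i\sim(i+1)}$ to always align the representative of $\Q(z^{t+1})$ with $\Q(\widetilde z^{t+1})$ (so (b) is always a vanishing diagonal entry), whereas the paper fixes the representative $e_i$ on $\widehat P_i$ and in one subcase instead uses the strict negativity of the off-diagonal entry $(S^\top M)_{i+1,i}$ — both splittings compute the same quantity and yield $\Delta\psi(z^t) \le 0$.
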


\begin{proof}
  To start, we rederive the one-step change
  in energy growth under~\eqref{eq:ss-dual}.

  \noindent
  \textbf{One-step change in energy:}
  Using~\eqref{eq:ss-dual} and Definition~\ref{def:psi}, 
  we have for any $t\ge 1$:
  \begin{align}
    \Delta \psi(z^t)
    &\;=\;
      \psi(z^{t+1}) - \psi(z^t) \label{eq:delta-psi-0}\\
    &\;=\;
      \big\langle
      z^{t+1}, M\Q(z^{t+1})
      \big\rangle
      -
      \big\langle
      z^t, M\Q(z^t) \big\rangle \\
    &\;=\;
      \big\langle
      z^t + S\Q(\widetilde z^{t+1}),
      M\Q(z^{t+1})
      \big\rangle
      -
      \big\langle
      z^t, M\Q(z^t) \big\rangle \\
    &\;=\;
      \underbrace{
      \big\langle
      z^t, M\big(\Q(z^{t+1}) - \Q(z^t)\big)
      \big\rangle}_\text{(a)}
      +
      \underbrace{
      \big\langle
      \Q(\widetilde z^{t+1}), S^\top M \Q(z^{t+1})
      \big\rangle}_\text{(b)}
      \;.
      \label{eq:delta-psi}
  \end{align}
  By the definitions of $S$ and $M$
  from expressions~\eqref{eq:ss-dual} and~\eqref{eq:energy-ss-compact}, 
  respectively, recalling that $\rho_1 = (d-c)/(a-b)$ and 
  $\rho_2 = (d-b)/(a-c)$, 
  and using the fact from Assumption~\ref{ass:2x2}
  that $\det A = ab - cd = 0$, we can compute
  \begin{equation}
    S^\top M
    \;=\;
    \begin{pmatrix}
      b & -c \\
      a & -c \\
      a & -a \\
      b & -a
    \end{pmatrix}
    \begin{pmatrix}
      -\rho_1 & -\rho_1 & 1 & 1 \\
      -\rho_2 & 1 & 1 & -\rho_2 \\      
    \end{pmatrix}
    \;=\;
    \begin{pmatrix}
      0 & d-c & b-c & b-d \\
      c-d & 0 & a-c & a-d \\
      b-c & c-a & 0 & a-b \\
      d-b & d-a & b-a & 0
    \end{pmatrix} \;.
    \label{eq:STM}
  \end{equation}
  Thus, expression~\eqref{eq:STM} shows
  $S^\top M$ is skew-symmetric.

  \noindent 
  \textbf{Proof for Case (i)}:
  To prove the claim for case (i) of the lemma,
  we start with the case that $z^t \in P_i$
  and also $\widetilde z^{t+1}, z^{t+1} \in P_i$.
  Then by definition of $\Q$, we have
  $\Q(z^t) = \Q(\widetilde z^{t+1}) = \Q(\widetilde z^t) = e_i$. 
  By skew-symmetry of $S^\top M$, observe in part (b)
  of expression~\eqref{eq:delta-psi} that
  \begin{equation*}
    \big\langle
    \Q(\widetilde z^{t+1}), S^\top M \Q(z^{t+1})
    \big\rangle
    \;=\;
    \big\langle
    \Q(z^{t+1}), S^\top M \Q(z^{t+1})
    \big\rangle
    \;=\;
    0 \;.
  \end{equation*}
  Moreover, in part (a) of expression~\eqref{eq:delta-psi},
  we also have
  \begin{equation*}
    \big\langle
    z^t, M\big(\Q(z^{t+1}) - \Q(z^t) \big)
    \big\rangle
    \;=\;
    \big\langle
    z^t, M\big(\Q(z^{t}) - \Q(z^t) \big)
    \big\rangle
    \;=\;
    0 \;,
  \end{equation*}
  and thus $\Delta \psi(z^t) = 0$.
  In the case that $z^t \in P_i$
  and $\widetilde z^{t+1} \in P_{i\sim(i+1)}$,
  then observe from the structure of $S$ that
  we must also have $z^{t+1} \in P_{i\sim(i+1)}$.
  Then by definition of $\psi$, for any
  $z \in P_{i \sim (i+1)}$, we have 
  $\psi(z) = \langle z, M e_i \rangle$. Thus we
  can rewrite expression~\eqref{eq:delta-psi-0} as
  \begin{align}
    \Delta \psi(z^t)
    &\;=\;
    \langle z^t + S\Q(\widetilde z^{t+1}), M e_i \rangle
    -
    \langle z^t, M\Q(z^t) \rangle \\
    &\;=\;
      \langle z^t, M(e_i - \Q(z^t))\rangle
      +
      \langle \Q(\widetilde z^{t+1}), S^\top M e_i \rangle \;.
      \label{eq:delta-psi-boundary} 
  \end{align}
  As $z^t \in P_i \implies \Q(z^t) = e_i$, the first term
  above vanishes.
  Moreover, as $\widetilde z^{t+1} \in P_{i\sim(i+1)}$,
  we have $\Q(\widetilde z^{t+1}) \in \{e_i, e_{i+1}\}$.
  By skew-symmetry of $S^\top M$, if
  $\Q(\widetilde z^{t+1}) = e_i$, then
  the second term of \eqref{eq:delta-psi-boundary} also vanishes.
  On the other hand, if $\Q(\widetilde z^{t+1}) = e_{i+1}$,
  then the second term is negative, which
  follows from the fact that, under Assumption~\ref{ass:2x2},
  each entry $(S^\top M)_{i+1, i} < 0$.
  In either case, we find $\Delta \psi(z^t) \le 0$.

  Finally, observe that if $z^t \in P_{i \sim(i+1)}$,
  then by definition of~\eqref{eq:ss-dual}, we cannot have
  both $\widetilde z^{t+1}, z^{t+1} \in \widehat P_i$.
  Thus the conditions of case (i) do not apply, 
  which concludes the proof of the lemma under case (i). 

  \ss

  \noindent 
  \textbf{Proof for Case (ii)}:
  To prove the claim for case (ii), suppose first that $z^t \in P_i$
  and thus $\Q(z^t) = e_i$.
  By the assumptions of claim (ii), we also have
  $\Q(\widetilde z^{t+1}) = \Q(z^{t+1}) = e_{i+1}$.
  Thus it again follows by skew-symmetry of $S^\top M$
  that for part (b) of expression~\eqref{eq:delta-psi}
  \begin{equation}
    \big\langle
    \Q(\widetilde z^{t+1}), S^\top M \Q(z^{t+1})
    \big\rangle
    \;=\;
    0 \;.
    \label{eq:energy-ii-1}
  \end{equation}
  For part (a) of~\eqref{eq:delta-psi},
  by case analyis on the columns of $M$,
  it follows that when $\Q(z^t) = e_i$
  and $\Q(z^{t+1}) = e_{i+1}$, then
  \begin{equation}
    \big\langle
    z^t, M\big(\Q(z^{t+1}) - \Q(z^t)\big)
    \big\rangle
    \;=\;
    \begin{cases}
      (1+\rho_2) \cdot z^t_2
      &\text{if $\Q(z^t) = e_1$} \\
      (1+\rho_1) \cdot z^t_1
      &\text{if $\Q(z^t) = e_2$} \\
      -(1+\rho_2) \cdot z^t_2
      &\text{if $\Q(z^t) = e_3$} \\
      -(1+\rho_1) \cdot z^t_1
      &\text{if $\Q(z^t) = e_4$} 
    \end{cases} \;.
    \label{eq:part-a-cases}
  \end{equation}
  Given the definition of $\Q$,
  it follows that $\Q(z^t) = e_1 \implies z^t_2 \le 0$,
  that $\Q(z^t) = e_2 \implies z^t_1 \le 0$,
  that $\Q(z^t) = e_3 \implies z^t_2 \ge 0$,
  and that $\Q(z^t) = e_4 \implies z^t_1 \ge 0$.
  Together with the fact that $\rho_1, \rho_2 > 0$ by definition,
  in each case of expression~\eqref{eq:part-a-cases}, we find
  $\big\langle
  z^t, M\big(\Q(z^{t+1}) - \Q(z^t)\big)
  \big\rangle \le 0$.
  Together with \eqref{eq:energy-ii-1},
  this means $\Delta \psi(z^t) \le 0$. 

  In the case that $z^t \in P_{i \sim(i+1)}$,
  then either $\Q(z^t) = e_i$ or $\Q(z^t) = e_{i+1}$.
  If the latter holds, given that
  also $\Q(z^{t+1}) = e_{i+1}$ by assumption,
  then part (a) of~\eqref{eq:delta-psi} is trivially 0.
  If the former holds, we recover the cases of
  expression~\eqref{eq:part-a-cases},
  and thus part (a) of~\eqref{eq:delta-psi} is
  non-positive.
  In either case, part (b) of~\eqref{eq:delta-psi}
  remains 0 as in expression~\eqref{eq:energy-ii-1},
  and thus we conclude that $\Delta \psi(z^t) \le 0$.
  This proves the lemma under case (ii).
\end{proof}



\section{Proofs for Alternating Fictitious Play Regret Lower Bound}
\label{app:afp-lb}

In this section, we develop the proof of
Theorem~\ref{thm:afp-regret-lb}, which gives
a \textit{lower bound} of $\Omega(\sqrt{T})$
on the regret of \textit{Alternating}
Fictitious Play. Restated here:

\thmafpreglb*

The organization of this section is as follows:
in Section~\ref{app:afp-lb:setup} we recall the
setup of alternating play in zero-sum games,
as well as on the notion of alternating regret.
In Section~\ref{app:afp-lb:afp}, we formally
define the Alternating Fictitious Play algorithm.
Finally, in Section~\ref{app:afp-lb:proof}, we
give the proof of Theorem~\ref{thm:afp-regret-lb}. 

\subsection{Details on Alternating Play and Alternating Regret}
\label{app:afp-lb:setup}

\paragraph{Alternating play.}
We consider the model of alternating online learning in
two-player zero-sum games as in~\cite{bailey2020finite,
  wibisono2022alternating, katona2024symplectic}.
Defined formally:

\begin{definition}[Alternating Play]
  \label{def:alt-play}
  Fix a payoff matrix $A \in \R^{m\times n}$.
  Over $T$ rounds, Players 1 and 2 alternate
  updating their strategies $x^t_1 \in \Delta_m$
  and $x^t_2 \in \Delta_{n}$ as follows:
  \begin{itemize}[
    leftmargin=2em
    ]
  \item
    \textbf{(Initialization)}~      
    Assume without loss of generality  $T$ is even. 
    At time $t=1$, Player 1 chooses an initial $x^1_1 \in \Delta_m$,
    and Player 2 observes $-A^\top x^1_1$. 
  \item
    \textbf{(Even rounds -- Player 2 updates)}~
    When $t=2k$ (for $k \ge 1$):
    \begin{align*}
      &\text{Player 1 sets } x^t_1 = x^{t-1}_1 \in \Delta_m
        \qquad\quad
        \text{Player 2 updates } x^t_2 \in \Delta_n.\\
      &\text{Player 1 observes } Ax^t_2
        \qquad\qquad\qquad\;
        \text{Player 2 observes } -A^\top x^{t-1}_1.
    \end{align*}
  \item
    \textbf{(Odd rounds -- Player 1 updates)}~
    When $t=2k+1$ (for $k \ge 1$):
    \begin{align*}
      &\text{Player 1 updates } x^t_1 \in \Delta_m
        \qquad\qquad\;\;
        \text{Player 2 sets } x^t_2 = x^{t-1}_2 \in \Delta_n.\\
      &\text{Player 1 observes } Ax^{t-1}_2
        \qquad\qquad\quad\;
        \text{Player 2 observes } -A^\top x^{t}_1.
    \end{align*}
  \end{itemize}
\end{definition}

\paragraph{Alternating regret.}
Under alternating play, we now measure
the performance of each player by its
\textit{alternating regret}
\citep{wibisono2022alternating, cevher2023alternation,
  hait2025alternating}.
For this, first observe under alternating play that each
player's \textit{cumulative payoff} can be written as:
\begin{equation}
  \begin{aligned}
    &\text{Player 1 cumulative payoff:}\quad
      \sum_{k=1}^{T/2}
      \big\langle
      x^{2k-1}_1, A(x^{2k}_2 + x^{2k-2}_2) 
      \big\rangle \;.
    \\
    &\text{Player 2 cumulative payoff:}\quad
      \sum_{k=1}^{T/2}
      \big\langle
      x^{2k}_2,
      - A^\top(x^{2k+1}_1 + x^{2k-1}_1)
      \big\rangle \;.
  \end{aligned}
  \label{eq:alternating-payoffs}
\end{equation}
Here and throughout, we assume for notational convenience that
$x^0_2 = 0 \in \R^n$ and $x^{T+1}_1 = 0 \in \R^m$.
Then alternating regret is defined as follows:

\begin{definition}[Alternating Regret]
  \label{def:alt-regret}
  Let $T$ be even. Define
  $\regalt_1(T)$ and $\regalt_2(T)$ as
  \begin{equation*}
    \begin{aligned}
      \regalt_1(T)
      &\;=\;
        \max_{x \in \Delta_m}\;
        \sum_{k=1}^{T/2}
        \big\langle
        x - x^{2k-1}_1, A(x^{2k}_2 + x^{2k-2}_2) 
        \big\rangle \\
      \regalt_2(T)
      &\;=\;
        \min_{x \in \Delta_n}
        \sum_{k=1}^{T/2}
        \big\langle
        x^{2k}_2 - x,
        A^\top(x^{2k+1}_1 + x^{2k-1}_1)
        \big\rangle \;.
    \end{aligned} 
  \end{equation*}
  Then define $\regalt(T) = \regalt_1(T) + \regalt_2(T)$. 
\end{definition}

Similar to standard (simultaneous) play,
sublinear regret bounds for $\regalt(T)$
correspond to convergence of the time-average iterates
under alternating play to a Nash equilibrium of $A$.
For this, define the time-average iterates
$\widetilde x^T_1 \in \Delta_m$
and $\widetilde x^T_2 \in \Delta_n$ by
\begin{equation}
  \widetilde x^T_1
  \;=\;
  \frac{1}{T} \Big(\sum_{k=1}^{T/2}
  x^{2k-1}_1 + x^{2k+1}_1\Big)
  \quad\text{and}\quad
  \widetilde x^T_2
  \;=\;
  \frac{1}{T} \Big(\sum_{k=1}^{T/2}
  x^{2k-2}_2 + x^{2k+2}_2\Big) \;.
  \label{eq:alt-time-avg-iterates}
\end{equation}

Then we have the following proposition
(analogous to Proposition~\ref{prop:regret-nash}
for simultaneous play):

\begin{proposition}
  \label{prop:alt-regret-nash}
  Fix $A \in \R^{m \times n}$.
  Let $\widetilde x^T_1 \in \Delta_m$
  and $\widetilde x^T_2 \in \Delta_n$ denote
  the time-average iterates
  under the alternating play of
  Definition~\ref{def:alt-regret}, as
  in expression~\eqref{eq:alt-time-avg-iterates}.
  Suppose $\regalt(T) \le \alpha = o(T)$.
  Then $(\widetilde x^T_1, \widetilde x^T_2)$
  converges in duality gap to an NE
  of $A$ at a rate of $\alpha/T = o(1)$. 
\end{proposition}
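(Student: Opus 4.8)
The plan is to establish the alternating analogue of the clean identity behind Proposition~\ref{prop:regret-nash}, namely that $\regalt(T)$ equals $T \cdot \DG(\widetilde x^T_1, \widetilde x^T_2)$ up to lower-order boundary terms, and then divide by $T$. First I would expand each individual regret into a benchmark term plus the player's (negated) cumulative payoff from~\eqref{eq:alternating-payoffs}. Writing $\regalt_1(T) = \max_{x \in \Delta_m} \langle x, A \sum_{k=1}^{T/2}(x^{2k}_2 + x^{2k-2}_2)\rangle - \sum_{k=1}^{T/2}\langle x^{2k-1}_1, A(x^{2k}_2 + x^{2k-2}_2)\rangle$, and analogously for $\regalt_2(T)$, I would identify the benchmark terms with $T\max_{x\in\Delta_m}\langle x, A\widetilde x^T_2\rangle$ and $-T\min_{x\in\Delta_n}\langle x, A^\top\widetilde x^T_1\rangle$ by matching the cumulative sums of adjacent odd (resp. even) iterates against the time-average iterates of~\eqref{eq:alt-time-avg-iterates}.

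The crux is showing that the two cumulative-payoff (cross) terms cancel. Unlike the simultaneous case, where $\langle x^t_1, Ax^t_2\rangle = \langle x^t_2, A^\top x^t_1\rangle$ makes the cross terms cancel pointwise, here the interleaving splits each cross term into an in-phase product (pairing rounds $2k-1$ and $2k$) and an off-by-one product (pairing rounds $2k-1$ and $2k-2$, resp. $2k$ and $2k+1$). I would cancel the in-phase products directly using $\langle x^{2k-1}_1, Ax^{2k}_2\rangle = \langle x^{2k}_2, A^\top x^{2k-1}_1\rangle$, and handle the off-by-one products by a reindexing/telescoping argument: after shifting the summation index by one in the player-2 sum, the two off-by-one sums differ only in their endpoints, which vanish under the conventions $x^0_2 = 0$ and $x^{T+1}_1 = 0$.

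Combining the benchmark terms then yields $\regalt(T) = T\big(\max_{x}\langle x, A\widetilde x^T_2\rangle - \min_{x}\langle x, A^\top\widetilde x^T_1\rangle\big) = T \cdot \DG(\widetilde x^T_1, \widetilde x^T_2)$, so that under the hypothesis $\regalt(T) \le \alpha = o(T)$ we obtain $\DG(\widetilde x^T_1, \widetilde x^T_2) = \regalt(T)/T \le \alpha/T = o(1)$. Since a strategy profile is a Nash equilibrium exactly when its duality gap is zero, this establishes convergence in duality gap at the stated rate $\alpha/T$.

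I expect the main obstacle to be the bookkeeping in the cross-term cancellation: one must carefully track which products appear in each of $\regalt_1(T)$ and $\regalt_2(T)$, confirm that the in-phase products carry opposite signs, and verify that the reindexed off-by-one sums telescope cleanly rather than leaving a residual $\Theta(T)$ contribution. A secondary point to verify is that the cumulative sums appearing in the benchmark terms coincide with the precise time-average iterates of~\eqref{eq:alt-time-avg-iterates}; any mismatch occurs only at the endpoints and contributes an $O(\amax/T)$ correction that is absorbed into the final $o(1)$ bound.
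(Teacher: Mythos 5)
Your proposal is correct and follows essentially the same route as the paper: the paper's proof likewise observes that the two cumulative-payoff terms from~\eqref{eq:alternating-payoffs} sum to zero (under the conventions $x^0_2 = 0$ and $x^{T+1}_1 = 0$), identifies the remaining benchmark sums with $T\widetilde x^T_2$ and $T\widetilde x^T_1$, and divides by $T$ to bound the duality gap. The only difference is that you spell out the in-phase/off-by-one telescoping that the paper leaves as an ``observe that,'' which is a harmless (and arguably welcome) elaboration.
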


\begin{proof}
  By definition of the player-wise cumulative
  costs from~\eqref{eq:alternating-payoffs}
  (and recalling that we set $x^0_2 = 0 \in \R^n$
  and $x^{T+1}_1 = 0 \in \R^m$ for notational
  convenience), observe that
  \begin{equation*}
    \sum_{k=1}^{T/2}
    \big\langle
    x^{2k-1}_1, A(x^{2k}_2 + x^{2k-2}_2) 
    \big\rangle
    \;+\;
    \sum_{k=1}^{T/2}
    \big\langle
    x^{2k}_2,
    - A^\top(x^{2k+1}_1 + x^{2k-1}_1)
    \big\rangle
    \;=\;
    0 \;.
  \end{equation*}
  It follows from the Definition~\ref{def:alt-regret} that
  \begin{align*}
    \regalt(T)
    &\;=\;
      \regalt_1(T) + \regalt_2(T) \\
    &\;=\;
      \max_{x \in \Delta_m}\;
      \sum_{k=1}^{T/2}
      \big\langle
      x, A(x^{2k}_2 + x^{2k-2}_2) 
      \big\rangle
      \;-\;
      \min_{x \in \Delta_n}
      \sum_{k=1}^{T/2}
      \big\langle
       x,
      A^\top(x^{2k+1}_1 + x^{2k-1}_1)
      \big\rangle \\
    &\;=\;
      \max_{x \in \Delta_m}\;
      \big\langle
      x, A(T\cdot \widetilde x^T_2)
      \big\rangle
      -
      \min_{x \in \Delta_n}\;
      \big\langle
      x, A^\top(T\cdot \widetilde x^T_1)
      \big\rangle
      \;\le\;
      \alpha \;,
  \end{align*}
  where in the final line we use
  the definition of $\widetilde x^T_1$
  and $\widetilde x^T_2$ from~\eqref{eq:alt-time-avg-iterates}
  and the assumption that $\reg(T) \le \alpha$.
  Then dividing by $T$ gives
  \begin{equation*}
     \DG(\widetilde x^T_1, \widetilde x^T_2) 
     \;=\;
     \max_{x \in \Delta_m}\;
      \big\langle
      x, A \widetilde x^T_2
      \big\rangle
      -
      \min_{x \in \Delta_n}\;
      \big\langle
      x, A^\top  \widetilde x^T_1
      \big\rangle
      \;\le\;
      \frac{\alpha}{T} \;,
    \end{equation*}
    which yields the statement of the proposition. 
\end{proof}

\subsection{Details on Alternating Fictitious Play}
\label{app:afp-lb:afp}

Under the alternating play setup of
Definition~\ref{def:alt-play}, we now
specify the Alternating Fictitious Play algorithm.
For any even $t \ge 2$, the primal iterates of Players 1 and 2 
at times $t+1$ and $t+2$ update according to
\begin{equation*}
    \begin{aligned}  
    x^{t+1}_1
    &\;:=\;
      \argmax_{x \in \{e_i\}_m}
      \big\langle
      x, \sum_{k=1}^{t/2} A (x^{2k}_2 + x^{2k-2}_2)
      \big\rangle
      \quad\text{and}\quad
      x^{t+1}_2 = x^{t}_2   \\
    x^{t+2}_2
    &\;:=\;
      \argmax_{x \in \{e_i\}_n}
      \big\langle
      x, \sum_{k=1}^{t/2} - A^\top (x^{2k+1}_1 + x^{2k-1}_2)
      \big\rangle 
      \quad\text{and}\quad
      x^{t+2}_1 = x^{t+1}_1 \;.
  \end{aligned}
\end{equation*}
In other words, as in standard Fictitious Play
(c.f.,~\eqref{eq:alpha-ofp} for $\alpha=0$),
in Alternating Fictitious Play 
each player (in an alternating fashion),
selects the best-response to the
cumulative observed payoff vectors over
all prior rounds.

\paragraph{Primal-Dual update for Alternating FP.}
Similar to the analysis for Optimistic FP,
define the dual payoff vectors
$y^t_1 = \sum_{k=1}^{t-1} Ax^k_2 \in \R^m$
and $y^t_2 = \sum_{k=1}^{t-1} -A^\top x^k_1 \in \R^n$.
Then the iterates of Alternating FP can be
equivalently expressed as follows:
\begin{restatable}{definition}{defafppd}
  \label{def:afp-pd}
  Assume the alternating play setting of
  Definition~\ref{def:alt-play}. 
  Let $y^2_1 = 0 \in \R^m$,
  and let $y^2_2 = -A^\top x^1_1 \in \R^n$. 
  Then for $t \ge 2$,
  the dual (i.e., $(y^t_1, y^t_2)$)
  and primal (i.e., $(x^t_1, x^t_2)$) iterates
  of Alternating FP are given by
  \begin{equation}
  \begin{aligned}
    \text{($t$ even)}
    \qquad
    &\begin{cases}
      x^t_1 = x^{t-1}_1 \\
      x^t_2 = \argmax_{x \in \{e_i\}_n}\;
      \langle x, y^t_2 \rangle
    \end{cases}
      \quad\text{and}\quad
      \begin{cases}
        y^{t+1}_1 = y^t_1 + Ax^t_2 \\
        y^{t+1}_2 = y^t_2 - A^\top x^{t-1}_1 \;.
      \end{cases} \\
    \text{($t$ odd)}
    \qquad
    &\begin{cases}
      x^t_1 = \argmax_{x \in \{e_i\}_m}\;
      \langle x, y^t_1 \rangle \\
      x^t_2 = x^{t-1}_2
    \end{cases}
      \quad\text{and}\quad
      \begin{cases}
        y^{t+1}_1 = y^t_1 + Ax^{t-1}_2 \\
        y^{t+1}_2 = y^t_2 - A^\top x^{t}_1  \;.
      \end{cases} 
  \end{aligned}
    \label{eq:afp} 
    \tag{AFP}
    \end{equation}
\end{restatable}

Moreover, recall the energy function
$\Psi$ from Definition~\ref{def:Psi}
and $\regalt(T)$ from Definition~\ref{def:alt-regret}.
Then, analogously to Proposition~\ref{prop:energy-regret},
following equivalence between
energy and alternating regret holds:
\begin{proposition}
  \label{prop:energy-regret-alt}
  Let $\{x^t\}$ and $\{y^t\}$
  be iterates of \eqref{eq:afp}.
  Then $\regalt(T) = \Psi(y^{T+1})$.
\end{proposition}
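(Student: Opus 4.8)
The plan is to mirror the proof of Proposition~\ref{prop:energy-regret}, adapting it to the alternating dual recursion~\eqref{eq:afp}. Since $\Psi$ is the support function of $\Delta_m \times \Delta_n$, it separates as $\Psi(y^{T+1}) = \max_{x_1\in\Delta_m}\langle x_1, y^{T+1}_1\rangle + \max_{x_2\in\Delta_n}\langle x_2, y^{T+1}_2\rangle$. So it suffices to show that the two dual vectors $y^{T+1}_1$ and $y^{T+1}_2$ coincide exactly with the comparator vectors appearing in Definition~\ref{def:alt-regret}, namely $\sum_{k=1}^{T/2}A(x^{2k}_2 + x^{2k-2}_2)$ and $-\sum_{k=1}^{T/2}A^\top(x^{2k+1}_1 + x^{2k-1}_1)$.

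First I would establish closed forms for the dual iterates by telescoping the recursion~\eqref{eq:afp}. The subtlety relative to the simultaneous case is that, under alternation, only one player updates per round while the other holds its strategy; consequently each player's held strategy enters the opponent's cumulative payoff vector over two consecutive rounds. Carefully summing the per-round increments, and using the boundary conventions $x^0_2 = 0$ and $x^{T+1}_1 = 0$, I expect to obtain $y^{T+1}_1 = \sum_{k=1}^{T/2}A(x^{2k}_2 + x^{2k-2}_2)$ and $y^{T+1}_2 = -\sum_{k=1}^{T/2}A^\top(x^{2k+1}_1 + x^{2k-1}_1)$, matching the desired comparators.

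Second, I would invoke the cross-term cancellation already established in the proof of Proposition~\ref{prop:alt-regret-nash}: the bilinear terms in $\regalt_1(T) + \regalt_2(T)$ cancel, leaving $\regalt(T) = \max_{x_1}\langle x_1, \sum_k A(x^{2k}_2 + x^{2k-2}_2)\rangle - \min_{x_2}\langle x_2, \sum_k A^\top(x^{2k+1}_1 + x^{2k-1}_1)\rangle$. Rewriting the $\min$ as the $\max$ of the negated vector and substituting the closed forms from the previous step gives $\regalt(T) = \max_{x_1}\langle x_1, y^{T+1}_1\rangle + \max_{x_2}\langle x_2, y^{T+1}_2\rangle = \Psi(y^{T+1})$.

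The main obstacle is the index and boundary bookkeeping in the telescoping step. Because alternation causes each strategy to be held across a pair of rounds, the naive accumulation can double-count the first-mover strategy $x^1_1$, while the alternating-regret comparator weights $x^1_1$ and $x^{T+1}_1$ asymmetrically. Ensuring these endpoint terms align exactly---rather than leaving a residual contribution of the form $A^\top x^1_1$---hinges on the precise initialization of the dual in Definition~\ref{def:afp-pd} together with the conventions $x^0_2 = 0$ and $x^{T+1}_1 = 0$. This reconciliation is where the real care lies; once the closed forms are verified, the remaining identification with $\Psi(y^{T+1})$ is immediate.
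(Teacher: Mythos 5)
Your overall plan is the right one, and in fact the paper never writes out a proof of this proposition (it is stated as ``analogous to Proposition~\ref{prop:energy-regret}''), so telescoping the dual recursion of Definition~\ref{def:afp-pd} and invoking the cross-term cancellation from the proof of Proposition~\ref{prop:alt-regret-nash} is exactly the intended route. The Player-1 side works out cleanly: iterating $y^{t+1}_1 = y^t_1 + Ax^t_2$ (for $t$ even) and $y^{t+1}_1 = y^t_1 + Ax^{t-1}_2$ (for $t$ odd) from $y^2_1 = 0$ gives $y^{T+1}_1 = A(2x^2_2 + 2x^4_2 + \cdots + 2x^{T-2}_2 + x^T_2) = \sum_{k=1}^{T/2}A(x^{2k}_2 + x^{2k-2}_2)$ using $x^0_2 = 0$, which matches the comparator in $\regalt_1(T)$ exactly.

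The gap is on the Player-2 side, and it is precisely the residual you flagged but then asserted away. With the paper's initialization $y^2_2 = -A^\top x^1_1$, the recursion gives
\begin{equation*}
  y^{T+1}_2 \;=\; -2A^\top\big(x^1_1 + x^3_1 + \cdots + x^{T-1}_1\big),
\end{equation*}
whereas the comparator in Definition~\ref{def:alt-regret}, using $x^{T+1}_1 = 0$, is
\begin{equation*}
  -\sum_{k=1}^{T/2}A^\top\big(x^{2k+1}_1 + x^{2k-1}_1\big)
  \;=\; -A^\top\big(x^1_1 + 2x^3_1 + \cdots + 2x^{T-1}_1\big).
\end{equation*}
These differ by exactly $-A^\top x^1_1$: the dual vector counts $x^1_1$ twice (Player 2 observes $-A^\top x^1_1$ at both $t=1$ and $t=2$), while the alternating-regret comparator pairs $x^{2k}_2$ with the strategies \emph{before and after} Player 2's move and hence weights $x^1_1$ only once. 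Consequently the closed form you ``expect to obtain'' for $y^{T+1}_2$ is false as stated, and the identity $\regalt(T) = \Psi(y^{T+1})$ holds only up to an additive error of at most $\amax$ (since $|\max_x\langle x, u\rangle - \max_x\langle x, v\rangle| \le \|u-v\|_\infty$). To complete the argument you must either carry this bounded discrepancy explicitly --- it is harmless for the $\Omega(\sqrt{T})$ lower bound the proposition feeds into --- or adjust the initialization/comparator convention so that the endpoint terms genuinely cancel; as written, they do not.
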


\subsection{Proof of Theorem~\ref{thm:afp-regret-lb}:
Regret Lower Bound on Matching Pennies}
\label{app:afp-lb:proof}

We now prove the lower bound on the regret
of~\eqref{eq:afp} on Matching Pennies. 
For this, recall that the Matching Pennies
payoff matrix is given by
\begin{equation}
  A \;=\;
  \begin{pmatrix}
    1 & -1 \\
    -1 & 1
  \end{pmatrix} \;.
  \label{eq:MP}
  \tag{Matching Pennies}
\end{equation}

\paragraph{Subspace Dynamics of AFP for Matching Pennies.}
It is straightforward to check that
\eqref{eq:MP} satisfies the conditions of
Assumption~\ref{ass:2x2}.
Moreover, this also implies
Proposition~\ref{prop:subspace} holds
for the dual iterates of \eqref{eq:afp},
in particular for $\rho_1 = \rho_2 = 1$.

Thus, to prove the theorem, we reuse the
components of the \textit{subspace dynamics}
introduced in Section~\ref{sec:ofp-regret}.
Specfically, we reuse the notation
of the primal and dual iterates $\{w^t\}$
and $\{z^t\}$, as well as the
choice map $\Q$ from Definition~\ref{def:Q-map},
and the energy $\psi$ from Definition~\ref{def:psi}. 

Under \eqref{eq:MP},
it is then straightforward to check that
the matrix $S$ from~\eqref{eq:subspace-ofp-1}
and the energy $\psi$ from Definition~\ref{def:psi} are given by:
\begin{equation*}
  S = \begin{pmatrix}
    -1 & 1 & 1 & -1 \\
    1 & 1 & -1 & - 1
  \end{pmatrix}
  \quad
  \text{and for all $z \in \R^2$:}\;
  \psi(z) = \|z\|_1 \;.
\end{equation*}
Similarly to Proposition~\ref{prop:energy-equiv},
and using the definition of $\psi$ and
the iterates $\{z^t\}$, 
we also have the following relationship between $\Psi$
and $\psi$:
\begin{proposition}
  \label{prop:energy-equiv-alt}
  Let $\{y^t\}$ be the iterates of
  \eqref{eq:afp} on~\eqref{eq:MP},
  and let $\{z^t\}$ be the corresponding
  subspace iterates.
  Then $\Psi(y^{T+1}) = \psi(z^{t+1})$. 
\end{proposition}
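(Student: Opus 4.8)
The plan is to observe that the claimed identity is a purely geometric fact about the support function $\Psi$ restricted to the two-dimensional subspace singled out by Proposition~\ref{prop:subspace}, and that this fact is completely independent of which dynamics (OFP or AFP) generated the dual iterate. In particular, I expect the argument to mirror the proof of Proposition~\ref{prop:energy-equiv} essentially verbatim, since that proof never invoked the OFP update rule---it relied only on the coordinate relations $y^t_{12} = -\rho_1 y^t_{11}$ and $y^t_{22} = -\rho_2 y^t_{21}$.

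First I would note that \eqref{eq:MP} satisfies Assumption~\ref{ass:2x2} (as already recorded in the text), so that Proposition~\ref{prop:subspace} applies to the \eqref{eq:afp} dual iterates $\{y^t\}$ with $\rho_1 = \rho_2 = 1$; that is, $y^t_{12} = -y^t_{11}$ and $y^t_{22} = -y^t_{21}$ for every $t$. Writing $z^t = (y^t_{11}, y^t_{21})$ as in the subspace notation, the goal is then to establish $\Psi(y^t) = \psi(z^t)$ for all $t$, from which the case $t = T+1$ yields the proposition.

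Next I would decompose the support function. Since $\Psi$ is the support function of the product $\Delta_2 \times \Delta_2$, it splits as
\[
\Psi(y^t) = \max_{x_1 \in \Delta_2}\langle x_1, y^t_1\rangle + \max_{x_2 \in \Delta_2}\langle x_2, y^t_2\rangle = \max\{y^t_{11}, y^t_{12}\} + \max\{y^t_{21}, y^t_{22}\},
\]
where the second equality uses that a linear function over $\Delta_2$ is maximized at one of the two vertices $e_1, e_2$. Substituting the subspace relation collapses each coordinate maximum to an absolute value, $\max\{y^t_{11}, -y^t_{11}\} = |z^t_1|$ and $\max\{y^t_{21}, -y^t_{21}\} = |z^t_2|$, so $\Psi(y^t) = |z^t_1| + |z^t_2| = \|z^t\|_1$. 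Finally I would invoke the explicit form $\psi(z) = \|z\|_1$ recorded for Matching Pennies to conclude $\Psi(y^t) = \psi(z^t)$.

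I do not anticipate a genuine obstacle here: the only point requiring care is recognizing that the energy-equivalence argument is agnostic to the trajectory and depends solely on the subspace constraint, so that it transfers immediately from the OFP setting to \eqref{eq:afp}. The remaining bookkeeping---keeping signs straight when collapsing the two-term maxima into $\ell_1$ norms---is routine.
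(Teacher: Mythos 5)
Your proof is correct and matches the paper's (implicit) argument: the paper treats Proposition~\ref{prop:energy-equiv-alt} as immediate from the subspace relation of Proposition~\ref{prop:subspace} applied to the \eqref{eq:afp} dual iterates with $\rho_1=\rho_2=1$, together with the Matching Pennies form $\psi(z)=\|z\|_1$, which is precisely the decomposition $\Psi(y^t)=\max\{y^t_{11},y^t_{12}\}+\max\{y^t_{21},y^t_{22}\}=\|z^t\|_1$ you spell out. Your observation that the equivalence is trajectory-agnostic and depends only on the subspace constraint is exactly the point the paper relies on when transferring the OFP result to AFP.
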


Moreover, under the primal-dual definition
of~\eqref{eq:subspace-ofp-1}
it follows inductively
(and using the definition of $\{w^t\}$,  $\{z^t\}$, and $\Q$)
that for all $t \ge 3$:
\begin{equation}
  w^t
  \;=\;
  \begin{cases}
    \Q((z^{t-1}_1, z^t_2))
    &\text{for $t$ even} \\
    \Q((z^{t}_1, z^{t-1}_2))
    &\text{for $t$ odd}
  \end{cases}\;\;.
  \label{eq:afp-wt-iterates}
\end{equation}
Then for $t \ge 3$ that the dual iterates $\{z^t\}$
can be further rewritten as
\begin{equation}
  z^{t+1}
  \;=\;
  z^t + S\Q(\widetilde z^{t+1})\quad
  \text{where}\;
  \widetilde z^{t+1} \;=\;
  \begin{cases}
    (z^{t-1}_1, z^t_2) &\text{for $t$ even} \\
    (z^{t}_1, z^{t-1}_2) &\text{for $t$ odd} 
  \end{cases} \;.
  \label{eq:afp-ss-dual}
  \tag{AFP Dual}
\end{equation}

Thus, similar to~\eqref{eq:afp-ss-dual},
the subspace iterates of Alterating Fictitious Play
can be expressed with respect to a
predicted payoff vector $\widetilde z^{t+1}$.
Now, due to the alternating play setting,
the position of this predicted vector depends on
the parity of $t$.

\paragraph{Overall proof strategy.}
Given the equivalence between
$\regalt(T)$ and $\Psi(y^{T+1})$
from Proposition~\ref{prop:energy-regret-alt},
and on the equivalence between
$\Psi(y^{T+1})$ and $\psi(z^{T+1})$
from \ref{prop:energy-equiv-alt},
to prove Theorem~\ref{thm:afp-regret-lb},
it suffices to establish the following lower
bound on the energy $\psi(z^{T+1})$:

\begin{lemma}
  \label{lem:afp-lb}
  Assume the setting of Theorem~\ref{thm:afp-regret-lb},
  and let $\{z^t\}$ be the dual iterates of
  \eqref{eq:afp-ss-dual}. 
  Then $\psi(z^{T+1}) \ge \Omega(\sqrt{T})$. 
\end{lemma}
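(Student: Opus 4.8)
The plan is to lower-bound the energy $\psi(z^{T+1}) = \|z^{T+1}\|_1$ by tracking the iterates $\{z^t\}$ of~\eqref{eq:afp-ss-dual} as they cycle around the origin and accumulate energy at each quadrant transition. First I would establish the \emph{cycling structure}: for irrational $p \in (3/4,1)$, the iterates pass clockwise through the regions $P_3, P_4, P_1, P_2$ in cyclic order and, because $p$ is irrational, never land exactly on a coordinate axis, so the choice map $\Q$ is unambiguous and the tiebreaking rule is irrelevant. Next I would record a \emph{within-quadrant conservation} fact: while both $z^t$ and the predicted vector $\widetilde z^{t+1}$ lie in the same open region $P_i$, the update adds the column $S_i$, and since each column of the Matching-Pennies $S$ is matched to its quadrant, the $\ell_1$ norm is exactly preserved. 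Consequently all energy growth is localized at the corners, and since each quadrant edge has $\ell_1$-length $\Theta(r)$ for $r = \|z^t\|_1$, one full cycle consumes $\Theta(r)$ steps.

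The heart of the argument is the \emph{corner increment} computation. At a transition, say $P_3 \to P_4$, the coordinate $z_2$ crosses zero with some overshoot $u \in (0,1)$, and a direct calculation shows the crossing step raises $\|z\|_1$ by exactly $2 - 2u \ge 0$. Crucially, because the alternating prediction $\widetilde z^{t+1}$ lags one coordinate of $z^t$ by $\pm 1$ (cf.~\eqref{eq:afp-ss-dual}), for one parity of $t$ the step immediately following the crossing selects the stale column (e.g.\ $S_3$ while already inside $P_4$), adding a further $+2$ to the energy, while for the other parity it selects the correct column and preserves the norm. In either case the net corner increment is non-negative. This is precisely where alternation differs from optimism: whereas the optimistic prediction forces the post-crossing step to cancel the increase, yielding the bounded energy of Theorem~\ref{thm:ofp-energy-2x2}, the alternating prediction never does so. I would package this into a lemma showing $\psi$ is non-decreasing along the AFP trajectory, with each corner contributing at least $2 - 2u$.

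It then remains to show these increments \emph{accumulate} to $\Omega(k)$ over $k$ cycles. This is where irrationality of $p$ is essential: the overshoots $u$ are fractional parts of the irrationally-related edge lengths, and I would argue that they are equidistributed, or at least bounded away from $1$ along a positive-density subsequence, so that a constant fraction of corners contribute $\Omega(1)$ and hence $\sum_{\text{corners}} (2-2u) = \Omega(k)$ after $k$ cycles. Combining the amortized per-cycle increase $\Omega(1)$ with the per-cycle length $\Theta(r)$ yields the recurrence $r_{k+1} \ge r_k + \Omega(1)$ together with $t_k \le C\sum_{j\le k} r_j = O(k^2)$, so that $k = \Omega(\sqrt{t_k})$ and therefore $r_k = \Omega(k) = \Omega(\sqrt{t_k})$. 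Evaluating at time $T$ gives $\psi(z^{T+1}) \ge \Omega(\sqrt{T})$.

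The main obstacle I anticipate is this accumulation step: the radius $r$ and the overshoots $u$ evolve through a coupled nonlinear recurrence, since the radius grows by the very increments we are trying to bound, so ruling out the possibility that the overshoots eventually concentrate near the degenerate value $u = 1$ requires genuine care and is exactly why the statement is restricted to irrational $p$ and to ``nearly all'' initializations. A secondary, more mechanical obstacle is the exhaustive case analysis verifying the corner increment at all four transitions and both parities, together with confirming the cyclic order and the absence of exact axis landings throughout the trajectory.
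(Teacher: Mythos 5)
Your high-level skeleton coincides with the paper's: decompose time into phases during which the primal vertex is constant, show each phase has length $\Theta(\psi(z^{t_k}))$ with the per-phase energy increment bounded above by a constant, extract a strictly positive increment from a constant fraction of phases, and conclude $T \le \Theta(K^2)$, hence $\psi(z^{T+1}) \ge \Omega(K) \ge \Omega(\sqrt{T})$. Two of your supporting claims do not hold up, however, and one of them is the crux of the entire argument.

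The smaller problem is factual: irrationality of $p$ does \emph{not} keep the trajectory off both coordinate axes. Since $z^2_1 = 0$ and every increment to $z_1$ is $\pm 1$, the coordinate $z^t_1$ is an integer for all $t$, so the iterates land exactly on the axis $z_1 = 0$ at every $P_2 \to P_3$ and $P_4 \to P_1$ transition; tiebreaking is unavoidable at half of the corners, and an adversarial tiebreak can zero out the increment there. The irrationality of $p$ only protects the $z_2$ coordinate. The paper's proof concedes the $z_1$-crossings entirely and harvests growth only from the $z_2$-crossings, which is precisely why Proposition~\ref{prop:afp-lb-core}(iii) claims a strict increase for only $K/2$ of the phases rather than all of them.

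The genuine gap is the accumulation step. You compute a corner increment of $2-2u \ge 0$ and observe, correctly, that this is worthless if the overshoots $u$ drift toward $1$; you then propose to exclude this via an equidistribution (or positive-density) argument that you do not carry out and that you yourself flag as the main obstacle. The paper needs no such argument: the case analysis in Section~\ref{app:afp-lb:proof-prop} shows that at each clean corner the combination of the crossing step and the lagged alternating prediction yields a deterministic energy increase of at least $1$, independent of the overshoot, so the per-phase lower bound is obtained by direct inspection rather than by any density or equidistribution claim. Without either that deterministic per-corner bound or a completed equidistribution argument, the inequality $\sum_{\text{corners}} (2-2u) = \Omega(K)$ --- and hence the theorem --- does not follow from what you have written.
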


To prove Lemma~\ref{lem:afp-lb}, we introduce
a \textit{phase structure} (in similar spirit to the
analysis of~\cite{lazarsfeld2025fp}),
where each phase tracks a subsequence of consecutive
time steps where the iterates $\{w^t\}$ are at the
same primal vertex. Formally, we define:

\begin{definition}
  \label{def:afp-phases}
  Let $\{w^t\}$ be the primal iterates
  from~\eqref{eq:afp-wt-iterates}, and fix $t_0 = 2$.
  For $k \ge 1$,
  let $t_k := \min\{t > t_{k-1} : w^t \neq w^{t_{k-1}}\}$. 
  Then define Phase $k$  as the subsequence of iterates
  from times $t=t_k, t_k+1 \dots, t_{k+1} - 1$,
  and let $\tau_k = t_{k+1} - t_k$ be the
  length of the phase.
  Let $K \ge 0$ denote the total number of phases
  in $T$ rounds such that $T = \sum_{k=0}^T \tau_k$. 
\end{definition}

Using the phase setup of Definition~\ref{def:afp-phases},
the core technical component of proving Lemma~\ref{lem:afp-lb}
is to establish the following proposition:
\begin{proposition}
  \label{prop:afp-lb-core}
  Assume the setting of Theorem~\ref{thm:afp-regret-lb}.
  Then for each Phase $k = 1, \dots, K$, the following hold:
  \begin{enumerate}[
    label={(\roman*)},
    topsep=0em,
    ]
  \item
    $\psi(z^{t_k}) \le \psi(z^{t_{k-1}}) + 2$
  \item
    $\tau_k = \Theta(\psi(z^{t_k}))$.
  \end{enumerate}
  Moreover, for at least $K/2$ phases $k$, it holds
  that (iii) $\psi(z^{t_k}) \ge \psi(z^{t_{k-1}}) + 1$. 
\end{proposition}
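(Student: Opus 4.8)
The plan is to carry out the whole argument inside the subspace dynamics \eqref{eq:afp-ss-dual}, where Proposition~\ref{prop:subspace} (with $\rho_1 = \rho_2 = 1$) makes the energy simply $\psi(z) = \|z\|_1$, and to reduce everything to how $\|z^t\|_1$ changes as the planar iterate $z^t$ sweeps through the four quadrants $P_1, P_2, P_3, P_4$. Two structural facts organize the proof. First, every column of $S$ has integer entries and only the initial play $x^1_1 = (p, 1-p)$ is non-integral, so after the first rounds the coordinates satisfy $z^t_1 \in \mathbb{Z}$ and $z^t_2 \in f + \mathbb{Z}$ for a fixed fractional offset $f$ determined by $p$; irrationality of $p$ makes $f$ irrational, so $z^t_2$ never equals $0$. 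Second, within a single phase the vertex $w^t$ is constant, so $z$ advances by the fixed step $S_i$; since $\|\cdot\|_1$ is linear on each open quadrant, $\psi$ is \emph{conserved} throughout a phase except at the single axis-crossing that ends it. Hence the whole phase-to-phase change $\psi(z^{t_k}) - \psi(z^{t_{k-1}})$ equals the energy overshoot at one corner, and (i) and (iii) become matters of bounding that overshoot above and below.

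The third ingredient is a parity coupling. Reading the vertex labels in \eqref{eq:wt}, the four transitions of the cycle $e_1 \to e_2 \to e_3 \to e_4 \to e_1$ alternately require Player~2 and Player~1 to switch pure strategies, and under alternating play (Definition~\ref{def:alt-play}) Player~2 re-optimizes only on even rounds and Player~1 only on odd rounds. Thus the four phase types begin at times of fixed parity, and each corner is either a \emph{$z_1$-crossing} (Player~1 switches; the integer coordinate passes exactly through $0$) or a \emph{$z_2$-crossing} (Player~2 switches; $z_2$ changes sign but, being irrational, skips $0$). I would prove (ii) first: entering any phase one coordinate is $O(1)$ (it has just crossed an axis, and for $z_1$ it is literally a small integer) while the other has magnitude $\Theta(\psi(z^{t_k}))$, and the phase runs until this large coordinate is driven through zero; hence $\tau_k$ equals that magnitude up to an additive $O(1)$, giving $\tau_k = \Theta(\psi(z^{t_k}))$, provided the ``small coordinate is $O(1)$'' property is maintained inductively. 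For (i) I would bound the corner overshoot by an absolute constant: the motion is linear and crosses a single axis, so $\psi$ can grow only over the few unit steps by which the lagged prediction trails the true iterate before the switching player re-optimizes, and each such step changes $\|z\|_1$ by at most $\|S_i\|_1 = 2$; the delicate point is controlling exactly how many lagging steps occur, which is governed by the round-parity of the crossing.

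For (iii) I would use that half of all corners are $z_1$-crossings and show each contributes a gain of exactly $2$: because $z_1$ is integer-valued it passes exactly through $0$, and the parity coupling places this passage on a round where Player~1 is idle, so the phase continues one full step beyond $0$ and $\|z\|_1$ increases by $2$; crucially this is independent of the tiebreak, since Player~1 does not act at the zero itself. As $z_1$-crossings constitute every other transition, at least $K/2$ phases satisfy $\psi(z^{t_k}) \ge \psi(z^{t_{k-1}}) + 2 \ge \psi(z^{t_{k-1}}) + 1$. The hypothesis $p \in (3/4,1)$ together with irrationality is what pins down the offset $f$ and the initial parities so that this clean corner structure holds from the outset. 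Finally these three facts feed into Lemma~\ref{lem:afp-lb} in the usual way: (i) and (iii) force $\psi(z^{t_k}) = \Theta(k)$, and then (ii) gives $T = \sum_k \tau_k = \Theta(K^2)$, whence $\psi(z^{T+1}) = \Theta(K) = \Omega(\sqrt{T})$.

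The main obstacle is the parity bookkeeping underlying (i), combined with the exact $z_1$-crossings. I would maintain a finite-state invariant recording, at each phase start, the phase type, the parity of $t_k$, and the parities of $z^{t_k}_1$ and $z^{t_k}_2 - f$ (both integers), and verify that each of the four transition types preserves it; the invariant is what certifies how many lagging steps a $z_2$-crossing incurs and hence controls the overshoot constant. The awkward cases are the two $z_1$-crossings, where the iterate lands exactly on the boundary $\{z_1 = 0\}$ and the next vertex is chosen by an arbitrary tiebreaking rule, so I must check that both tiebreak outcomes are consistent with the invariant and keep every corner overshoot within the stated bound. Threading this parity invariant consistently through a full cycle, uniformly over all tiebreaking rules, is the technical heart of the argument.
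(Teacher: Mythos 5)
Your plan follows the same architecture as the paper's proof: reduce to the planar dynamics \eqref{eq:afp-ss-dual} where $\psi(z)=\|z\|_1$, observe that $\psi$ is conserved on each open quadrant so all energy change is concentrated at the quadrant crossings, get (ii) from the fact that the coordinate which just crossed is $O(1)$ while the other has magnitude $\Theta(\psi(z^{t_k}))$ and must be driven to zero, and exploit the dichotomy that $z^t_1$ is integral while $z^t_2$ has a fixed irrational offset. Where you genuinely diverge is in claim (iii). The paper extracts the guaranteed per-transition gain from the \emph{$z_2$-crossings} ($P_1\to P_2$ and $P_3\to P_4$): since $z^t_2$ is irrational it never lands on the boundary, no tiebreak is consulted there, and the case analysis forces a strict increase. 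You instead extract it from the \emph{$z_1$-crossings} ($P_2\to P_3$ and $P_4\to P_1$) via a parity argument: $z^2_1=0$ and $|\Delta z^t_1|=1$ per step give the invariant $z^t_1\equiv t\pmod 2$, so $z^t_1=0$ only at even $t$, where the prediction $\widetilde z^{t+1}=(z^{t-1}_1,z^t_2)$ carries the stale first coordinate $z^{t-1}_1=\mp1\neq0$; hence $\Q$ is never asked to break a tie at the zero, the old vertex persists one extra step, and $\|z\|_1$ gains exactly $2$. This checks out and is arguably cleaner than the paper's route for (iii), since it makes the tiebreak-independence completely mechanical; the paper in turn uses the $z_1$-crossings only negatively (as the places where an adversarial tiebreak \emph{can} zero out the gain). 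Both arguments use the same integrality/irrationality dichotomy, just drawing the guaranteed gain from opposite halves of it.

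The one genuine soft spot is claim (i). Your bound of ``at most $\|S_i\|_1=2$ per lagging step'' times an uncontrolled number of lagging steps does not by itself yield the stated constant $2$, and you defer exactly this point to the unexecuted parity bookkeeping. Concretely, at a $z_2$-crossing the crossing step itself contributes a gain of $2(1-|z^{t-1}_2|)\in(0,2)$ (with $|z^{t-1}_2|$ the irrational fractional distance to the axis), and if the parity places a stale prediction at the following step there is an additional gain of $2$, so without the parity analysis you can only certify a per-transition overshoot of $O(1)$ rather than $\le 2$. This is harmless for the downstream use in Lemma~\ref{lem:afp-lb} --- any absolute constant in (i) still gives $\psi(z^{t_k})=O(k)$ and hence $K=\Omega(\sqrt{T})$ --- but it means your proposal as written proves a version of (i) with a weaker constant unless the full finite-state invariant is carried through.
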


The proof of Proposition~\ref{prop:afp-lb-core}
is developed in Section~\ref{app:afp-lb:proof-prop}.
Granting the claims of the proposition as true for now,
we give the proof of Lemma~\ref{lem:afp-lb}
(and thus also of Theorem~\ref{thm:afp-regret-lb}):

\begin{proof}[Proof (of Lemma~\ref{lem:afp-lb})]
  By claim (iii) of Proposition~\ref{prop:afp-lb-core},
  the energy $\psi$ is strictly increasing
  in at least $K/2$ phases, and thus
  \begin{equation}
    \psi(z^{T+1})
    \;\ge\;
    \frac{K}{2}  \;.
    \label{eq:prop-1}
  \end{equation}
  To prove the statement of the lemma,
  it then suffices to derive a lower bound on $K$. 
  For this, recall by Definition~\ref{def:afp-phases}
  that $T = \sum_{k=1}^K \tau_k$.
  Moreover, combining claims (i) and (ii) of 
  Proposition~\ref{prop:afp-lb-core}, we find
  for all $k$ that 
  $\tau_k = \Theta(\psi(z^{t_k}))
  \le \Theta(\psi(z^{t_{k-1}}) + 2)
  \le \Theta(k)$.
  Combining these pieces, we have
  \begin{align}
    T \;=\;
    \sum_{k=1}^K \tau_k
    \;\le\;
    \sum_{k=1}^K \Theta(k)
    \;\le\;
    \Theta(K^2)  \;.
  \end{align}
  Thus $K^2 \ge \Omega(T) \implies K \ge \Omega(\sqrt{T})$.
  Substituting into~\eqref{eq:prop-1},
  we conclude $\psi(z^{T+1}) \ge \Omega(\sqrt{T})$. 
\end{proof}

\subsubsection{Proof of Proposition~\ref{prop:afp-lb-core}}
\label{app:afp-lb:proof-prop}

We now prove the claims of Proposition~\ref{prop:afp-lb-core}.
For this, we start by establishing the following
invariant between the dual iterates
$z^{t-1}, z^t, z^{t+1}$ and the predicted vector
$\widetilde z^{t+1}$.

\paragraph{Analysis of initial phases.}
We begin by computing the dual iterates 
during the first two phases,
which helps to both give intuition for the energy
growth behavior of Alternating FP,
as well as to streamline the remainder of the proof.
For this, recall that initially
$x^1_2 = (p, 1-p) \in \Delta_2$ for irrational
$p \in (3/4, 1)$, and that $y^2_1 = 0 \in \R^2$.

It follows by definition of~\eqref{eq:afp}
at time $t=2$ that
$y^2_1 = y^2_1 = 0 \in \R^2$ and 
$y^2_2 = -A^\top x^1_1 = (-(2p-1), (2p-1))$,
and that $x^2_1 = x^1_1 \in \Delta_2$ and
$x^2_2 = (0, 1) \in \Delta_2$.
Then, at $t=3$, we further have
$y^3_1 = y^2_1 + Ax^2_2 = (-1,1)$
and $y^3_2 = y^2_2 - A^\top x^1_1 = 2 \cdot y^2_2$. 

Then for $t \ge 3$, switching to the equivalent,
lower-dimensional iterates $\{w^t\}$ and $\{z^t\}$,
we can further directly compute (by definition
of~\eqref{eq:afp-ss-dual}):
\begin{align*}
  &(t=3)\qquad
  \begin{cases}
    z^3
    =
    (-1, -2(2p-1)) \in P_1 \\
    \widetilde z^4
    =
    (z^3_1, z^2_2)
    =
    (-1, -(2p-1)) \in P_1
  \end{cases}
    &w^3 = \Q(\widetilde z^4) = e_1  \\
  &(t=4)\qquad
  \begin{cases}
    z^4 = z^3 + (-1, 1) = (-2, -4p+3)\in P_1 \\
    \widetilde z^5 = (z^3_1, z^4_2) = (-1, -4p+3) \in P_1
  \end{cases}
  &w^4 = \Q(\widetilde z^5) = e_1 \\
  &(t=5)\qquad
    \begin{cases}
      z^5 = z^4 + (-1, 1) = (-3, -4p + 4) \in P_2 \\
      \widetilde z^6 = (z^5_1, z^4_2) = (-3, -4p+3) \in P_1
    \end{cases}
  &w^5 = \Q(\widetilde z^6) = e_1 \\
  &(t=6)\qquad
    \begin{cases}
      z^6 = z^5 + (-1, 1) = (-4, -4p + 5) \in P_2 \\
      \widetilde z^7 = (z^5_1, z^6_2) = (-3, -4p+5) \in P_2
    \end{cases}
    &w^6 = \Q(\widetilde z^7) = e_2.
\end{align*}
Observe by Definition~\ref{def:afp-phases}
and the calculations above that 
Phase 1 begins at step $t_1 = 3$,
and Phase 2 begins at phase $t_2 = 6$.
Moreover, $\Delta \psi(z^5) = \Delta \psi(z^6) = 1$,
meaning $\psi(z^{t_2}) - \psi(z^{t_1}) = 2 > 0$. 

This strictly increasing energy growth between
phases stems from the geometry of the predicted
payoff vectors: in this instance,
under Alternating Fictitious Play,
when $z^t, z^{t-1}\in P_1$ and $z^t$ is near
the boundary $P_{1\sim 2}$, the predicted vector
$\widetilde z^{t+1}$ always remains in $P_1$
and fails to ``predict'' the next region $P_2$.
This results in strictly increasing energy
growth when $z^{t+1} \in P_2$.
This positive energy growth behavior near the
boundary regions is the key difference between 
Alternating and Optimistic Fictitious Play
(c.f., the invariants and energy growth claims 
of Lemma~\ref{lem:invariantsenergy}). 

\paragraph{Cycling invariants.}
By continuing to compute the dual iterates $\{z^t\}$,
we arrive at the following invariants, which
establish a certain cycling behavior through
the regions of $\widehat \calP$.
Specifically, it follows inductively that
$z^{t-1}$ and $z^t$ must fall under one of the
following cases (which subsequently determines
$\widetilde z^{t+1}$, $z^{t+1}$,
and the energy growth $\Delta \psi(z^t)$):
\begin{itemize}[
  leftmargin=2em,
  rightmargin=1em,
  ]
\item
  \textbf{Case 1:}
  $z^{t-1}, z^t \in P_i$,
  and $z^{t} - z^{t-1} = S_i$.

  Then $\widetilde z^{t+1} \in P_i$,
  and either $z^{t+1} \in P_{i+1}$
  with $\Delta \psi(z^t) = 1$,
  or $z^t \in P_i$ with $\Delta \psi(z^t) = 0$. 

\item
  \textbf{Case 2:}
  $z^{t-1} \in P_i$, $z^t \in P_{i+1}$,
  and $z^t = z^{t-1} + S_i$. 

  Then $z^{t+1} \in P_{i+1}$,
  and either $\widetilde z^{t+1} \in P_i$
  and $\Delta \psi(z^t) = 1$,
  or $\widetilde z^{t+1} \in P_{i+1}$
  and $\Delta \psi(z^t) = 0$. 

\item
  \textbf{Case 3:}
  $z^{t-1} \in P_i$, $z^t \in P_{i\sim(i+1)}$
  and $z^t = z^{t-1} + S_i$.

  If $\widetilde z^{t+1} \in P_i$,
  then $z^{t+1} \in P_{i+1}$ 
  and $\Delta \psi(z^t) = 1$.
  If $\widetilde z^{t+1} \in P_{i \sim(i+1)}$,
  then also $z^{t+1} \in P_{i+1}$,
  with $\Delta \psi(z^t) \in \{0, 1\}$
  depending on the tiebreaking of $\Q$. 
  
\item
  \textbf{Case 4:}
  $z^{t-1} \in P_{i\sim(i+1)}$ and $z^t \in P_{i+1}$.
  If $\widetilde z^{t+1} \in P_{i\sim(i+1)}$,
  then $z^{t+1} \in P_{i+1}$, and
  $\Delta \psi(z^t) \in \{0, 1\}$
  depending on the tiebreaking of $\Q$.
  If $\widetilde z^{t+1} \in P_{i+1}$,
  then $z^{t+1} \in P_{i+1}$ and $\Delta \psi(z^t) = 0$. 
\end{itemize}
Note that the cases above account
for (a) the variability of $\widetilde z^{t+1}$
depending on the parity of $t$, and
(b) any variability in $z^{t+1}$
depending on the tiebreaking decision encoded
in $\Q$. In summary, we deduce from the four
cases above the following consequences:
\begin{enumerate}[
  leftmargin=2em,
  ]
\item
  Between phases, energy strictly increases
  in at most 2 iterations. By definition of
  the energy function $\psi$ under Matching Pennies,
  each one-step increase has magnitude 1,
  and thus $\psi(z^{t_k}) - \psi(z^{t_{k-1}}) \le 2$,
  which proves claim (i) of the proposition.
\item
  Again using the definition of $\psi$ under
  Matching Pennies, we have $\psi(z^t) = \|z^t\|_1$.
  The cases above then imply 
  that each $\tau_k = \|z^t\|_1 + c_k$
  (for some aboslute constant $c_k$), and it follows
  that $\tau_k = \Theta(\psi(z^t))$,
  which proves claim (ii) of the proposition.
\item
  Finally, using the definition of $S$ under
  Matching Pennies, along with the fact
  that initially $z^2_1 = 0$, it holds that
  each $z^t_1$ is integral. Thus
  between regions $P_2$ and $P_3$, and
  between $P_4$ and $P_1$, one dual iterate
  will always lie on the boundary $P_{2\sim 3}$
  or $P_{3\sim 4}$, respectively. In these cases,
  depending on the tiebreaking rule of $\Q$,
  the change in energy may be zero when crossing
  between regions of $\calP$. 
  On the other hand, due to the initialization
  $x^1_1 = (p, 1-p)$ for irrational $p \in (3/4, 1)$,
  it follows for $t \ge 2$ that all $z^t_2$ are irrational.
  Thus no tiebreaking occurs when the dual
  iterates transition between regions $P_1$ and $P_2$
  and between $P_3$ and $P_4$.
  Thus under transitions between these phases
  (which by symmetry amount for at least $\Omega(K/2)$
  total phases), 
  we have by the cases above that energy is
  strictly increasing by at least 1.
  This proves claim (iii) of the proposition.
  \hfill
  $\qed$
\end{enumerate}



\section{Additional Experimental Results}
\label{app:experiments}

In this section, we provide more details
on the experimental evaluations from 
Figure~\ref{fig:regret-comp-intro} 
and Section~\ref{sec:conclusion}, and 
we also present additional experimental results. 
The goal of these experiments is to give further
empirical evidence that the 
constant regret guarantee of Theorem~\ref{thm:ofp-energy-2x2} 
for two-strategy games also holds 
in higher dimensions. 

\subsection{Details on Experimental Setup}
\label{app:experiments:setup}

First, we note that all code used to run
experiments can be found in the supplementary material.
In this paper, all experiments were run locally on 
a single personal computer.

\paragraph{Families of payoff matrices.}
Aside from the \eqref{eq:MP} game,
our experimental evaluations of Fictitious Play variants
are performed on three high-dimensional families of
payoff matrices:
\begin{itemize}[
    leftmargin=2em,
    rightmargin=1em,
]
    \item   
    \textbf{Identity matrices:}
    Here, the payoff matrix is the $n\times n$
    identity matrix $I_n$ (i.e., the diagonal matrix 
    with diagonal entries all 1). 
    Recall that for standard FP,
    \cite{abernethy2021fast} established an $O(\sqrt{T})$ regret bound
    using fixed lexicographical tiebreaking.
    \item
    \textbf{Generalized Rock-Paper-Scissors (RPS) matrices:}
    Here, the payoff matrix is the $n\times n$
    generalization of the classic three-strategy Rock-Paper-Scissors game. 
    Specifically, $A$ is the matrix with entries $A_{i, j}$ 
    given by 
    \begin{equation}
        A_{i,j}
        \;:=\;
        \begin{cases}
          -1 &\text{if $j = i+1\; (\mod n)$}\\
         \;1 &\text{if $j = i-1\; (\mod n)$} \\
           0 &\text{otherwise}
        \end{cases}
        \qquad
        \text{for all $i, j \in [n]$} \;\;.
        \label{eq:RPS}
    \end{equation}
    For standard FP,~\cite{lazarsfeld2025fp} established
    an $O(\sqrt{T})$ regret bound for all such RPS matrices
    (using any tiebreaking rule), including when $A$ 
    is scaled by a constant, and when the non-zero entries
    have non-uniform weights. 
    \item 
    \textbf{Random {[}0,1{]} matrices:}
    We also consider $n\times n$ payoff matrices with 
    uniformly random entries in $[0, 1]$. 
    For these payoff matrices, there are no existing $O(\sqrt{T})$
    regret bounds for standard FP.
\end{itemize}

\paragraph{Tiebreaking rules.}
To evaluate the robustness of regret guarantees to 
the tiebreaking method, we run the FP variants
using  both (a) fixed \textit{lexicographical tiebreaking} 
(e.g., as in~\cite{abernethy2021fast})
and (b) uniformly \textit{random tiebreaking} 
(e.g., over the entries of the $\argmax$ set). 

\paragraph{Random initializations.}
To evaluate the robustness of regret guarantees to 
the players' initial strategies, we evaluated the Fictitious Play variants
over multiple random initializations of $x^0_1, x^0_2 \in \Delta_n$
(for the Alternating FP initialization from Figure~\ref{fig:regret-comp-intro},
note that the stated initialization is for $x^1_1 \in \Delta_n$,
as in the notation of Definition~\ref{def:alt-play}).
To generate a random initialization $x \in \Delta_n$, 
we sample $v \in [0, 1]^n$ with independent, uniformly random entries,
and normalize $x := v/\|v\|_1$.

\subsection{Empirical Regret Comparisons of 
Fictitious Play and Optimistic Fictitious Play}

\paragraph{Regret comparisons under randomized tiebreaking.}
In Table~\ref{table:comparison-01-lex} of Section~\ref{sec:conclusion}, 
we presented regret comparisons of Optimistic FP and standard FP
on the three families of payoff matrices described above
in Section~\ref{app:experiments:setup}, 
using \textit{fixed lexicographical tiebreaking}. 
In Table~\ref{table:comparison-02-rand}, we show the 
results of an identical experimental setup, now using
randomized tiebreaking. As in Table~\ref{table:comparison-01-lex}, 
the entries of Table~\ref{table:comparison-02-rand}
report average empirical regrets (and standard deviations)
over 100 random initializations, where for each initialization,
each algorithm was run for $T=10000$ iterations.

\begin{table}[h!]
 \centering
  \small
  \renewcommand{\arraystretch}{1.2}
  \setlength{\tabcolsep}{0.5em}
\begin{tabular}{c|| c c | c c | c c }
\textit{Dimension}:       & \multicolumn{2}{c | }{\textbf{15$\times$15}} & \multicolumn{2}{c| }{\textbf{25$\times$25}}   & \multicolumn{2}{c}{\textbf{50$\times$50}}   \\ 
\hline
Payoff Matrix     & \textbf{FP}           & \textbf{OFP}        & \textbf{FP}           & \textbf{OFP}         & \textbf{FP}           & \textbf{OFP} \\
\hline 
\textbf{Identity}           & 154.4 $\pm$ 4.2 & 8.4 $\pm$ 1.7 & 162.3 $\pm$ 3.4 & 12.9 $\pm$ 1.6 & 166.9 $\pm$ 2.2 & 25.0 $\pm$ 2.3 \\
\textbf{RPS}               & 235.2 $\pm$ 6.6 & 2.8 $\pm$ 0.5 & 241.5 $\pm$ 6.1 & 3.2 $\pm$ 0.9  & 242.9 $\pm$ 5.6 & 2.6 $\pm$ 0.8  \\
\textbf{Random {[}0, 1{]}} & 93.4 $\pm$ 5.0  & 2.7 $\pm$ 0.6 & 137.1 $\pm$ 6.1 & 7.0 $\pm$ 1.1  & 176.2 $\pm$ 6.3 & 12.2 $\pm$ 1.4
\end{tabular}
\vspace*{0.5em}
\caption{\small
    Empirical regret of FP and OFP using randomized tiebreaking.
    Each entry reports an average and standard deviation 
    (over 100 random initializations) of total regret after 
    $T = 10000$ steps. 
}
\vspace*{0em}
\label{table:comparison-02-rand}
\end{table}

As in Table~\ref{table:comparison-01-lex}, the results
of Table~\ref{table:comparison-02-rand} similarly show 
that Optimistic FP empirically obtains bounded
regret compared to the roughly $O(\sqrt{T})$ regret
of standard FP for each payoff matrix and dimension. 

\paragraph{Additional plots from fixed initializations.}
To further compare the empirical regrets of
standard FP and Optimistic FP, we present plots 
of the two algorithms run from fixed initializations,
similar to Figure~\ref{fig:regret-comp-intro} 
from Section~\ref{sec:intro} (which also included
a comparison with AFP). 
In each plot, we consider the three families of
identity, RPS, and random matrices
described earlier in Section~\ref{app:experiments:setup}. 
Note in particular that for the RPS game 
(including in Figure~\ref{fig:regret-comp-intro} of Section~\ref{sec:intro}),
for better visual comparison with the other games,
we use the payoff matrix specified in~\eqref{eq:RPS},
but scaled by the constant $2/3$. 

Figures~\ref{fig:regret-comp-02},~\ref{fig:regret-comp-03},
and~\ref{fig:regret-comp-04} show these comparisons
for $15\times15$ and $25\times25$ matrices,
using both randomized and lexicographical tiebreaking. 
In each instance, we again observe that Optimistic FP 
has bounded empirical regret compared to the $\sqrt{T}$
regret of standard FP. 

\begin{figure}[h!]
\centering
\includegraphics[width=0.95\textwidth]{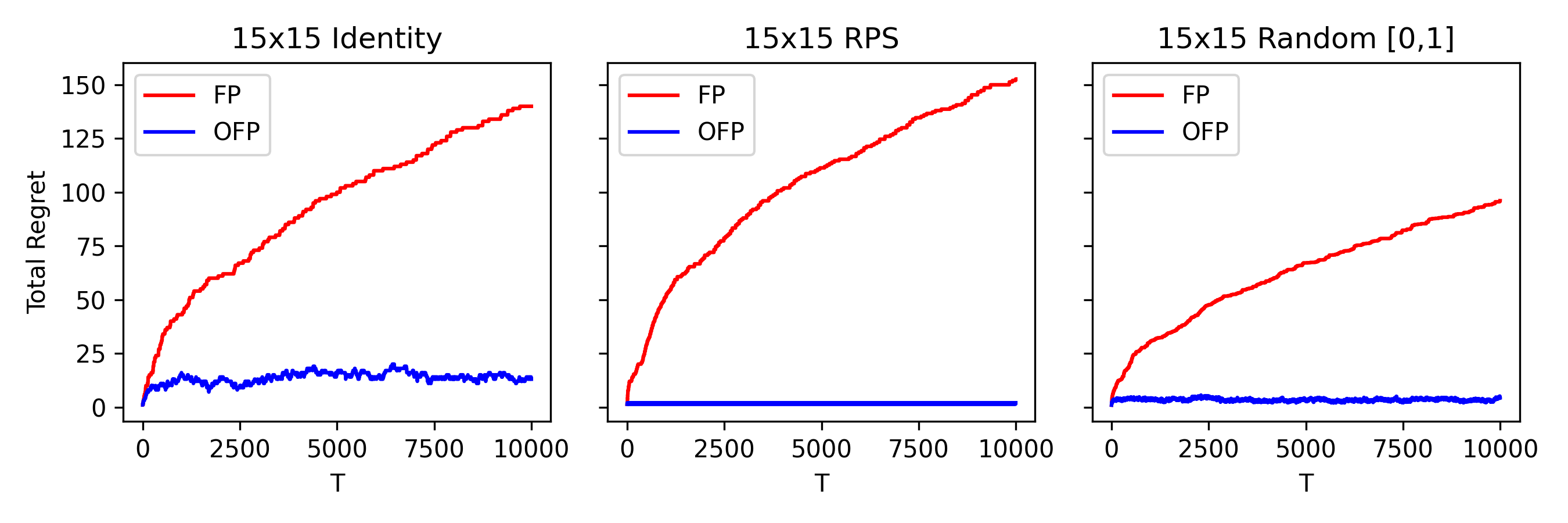}
\caption{
    \small
    Empirical regret of standard FP and Optimistic FP (OFP) 
    using \textbf{randomized tiebreaking} 
    on three $15\times15$ payoff matrices.
    For each payoff matrix, each algorithm was initialized from
    $x^0_1 = e_1, x^0_2 = e_n$ 
    and run for $T=10000$ iterations.
}   
\label{fig:regret-comp-02}
\end{figure}

\begin{figure}[h!]
\centering
\includegraphics[width=0.95\textwidth]{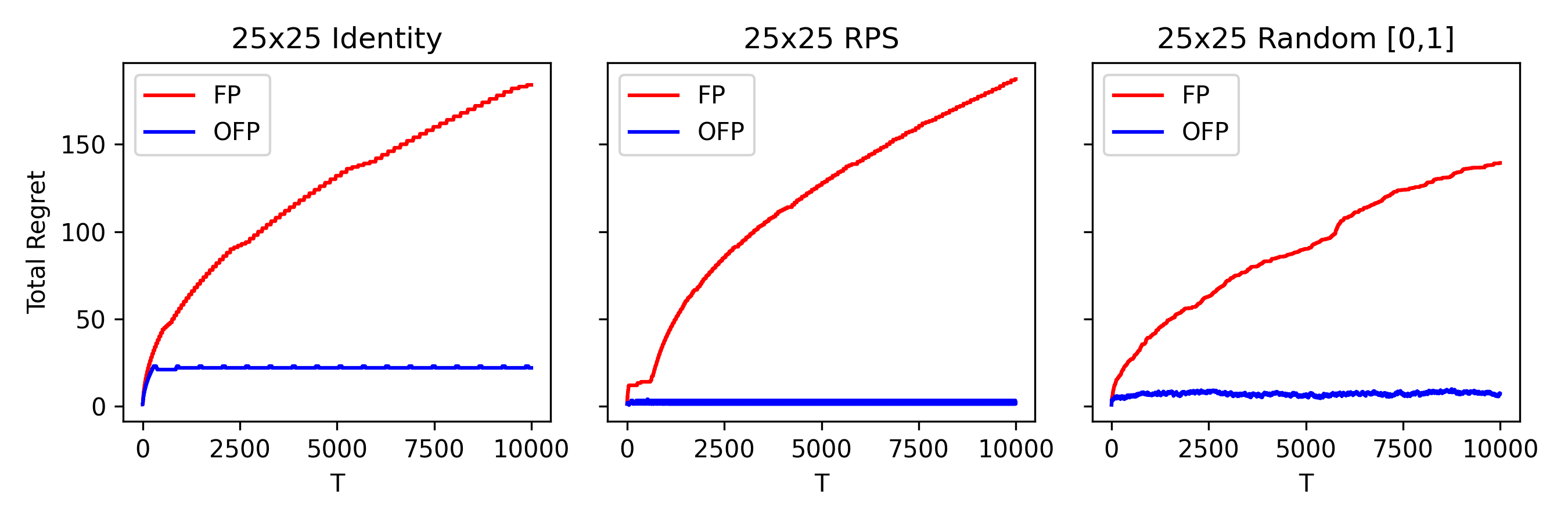}
\caption{
    \small
    Empirical regret of standard FP and Optimistic FP (OFP) 
    using \textbf{lexicographical tiebreaking} 
    on three $25\times25$ payoff matrices.
    For each payoff matrix, each algorithm was initialized from
    $x^0_1 = e_1, x^0_2 = e_n$ 
    and run for $T=10000$ iterations.
}
\label{fig:regret-comp-03}
\end{figure}

\begin{figure}[h!]
\centering
\includegraphics[width=1\textwidth]{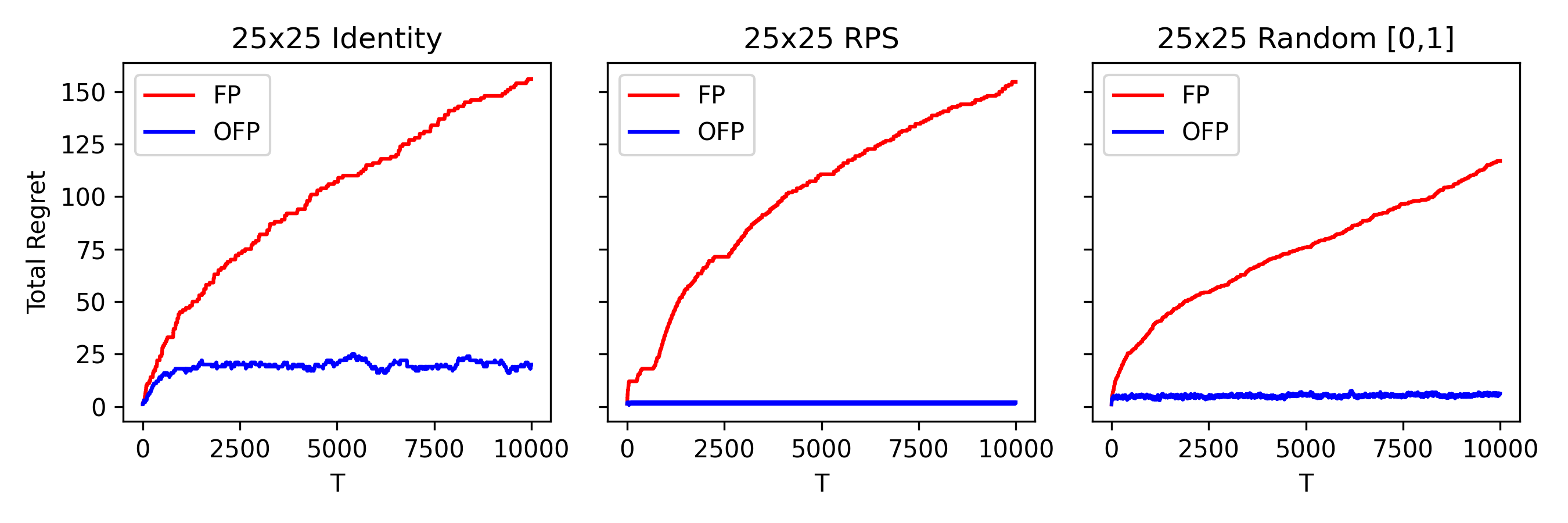}
\caption{
    \small
    Empirical regret of standard FP and Optimistic FP (OFP) 
    using \textbf{randomized tiebreaking} 
    on three $25\times25$ payoff matrices.
    For each payoff matrix, each algorithm was initialized from
    $x^0_1 = e_1, x^0_2 = e_n$ 
    and run for $T=10000$ iterations.
}
\label{fig:regret-comp-04}
\end{figure}

\end{document}